\documentclass{article}

\usepackage{microtype}
\usepackage{graphicx}
\usepackage{subcaption}
\usepackage{booktabs} 
\usepackage{algorithm}
\usepackage{algorithmic}
\let\algorithmicrequire\relax
\usepackage{mathtools}
\mathtoolsset{showonlyrefs}
\newcommand{\R}{\mathbb{R}}
\newcommand{\E}{\mathbb{E}}
\newcommand{\cV}{\mathcal{V}}
\usepackage{multirow}
\usepackage{tabularx}

\usepackage{float}



\usepackage{hyperref}
\providecommand{\RETURN}{\textbf{return}~}

\usepackage[preprint]{icml2026}

\usepackage{amsmath}
\usepackage{amssymb}
\usepackage{mathtools}
\usepackage{amsthm}
\usepackage{enumitem}

\usepackage[capitalize,noabbrev]{cleveref}

\theoremstyle{plain}
\newtheorem{theorem}{Theorem}[section]
\newtheorem{proposition}[theorem]{Proposition}
\newtheorem{lemma}[theorem]{Lemma}
\newtheorem{corollary}[theorem]{Corollary}
\theoremstyle{definition}

\newtheorem{assumption}[theorem]{Assumption}
\theoremstyle{remark}
\newtheorem{remark}[theorem]{Remark}
\newcommand{\cS}{\mathcal{S}}

\usepackage[textsize=tiny]{todonotes}

\usepackage[table]{xcolor}
\makeatletter
\g@addto@macro\normalsize{%
  \setlength\abovedisplayskip{4pt plus 2pt minus 2pt}
  \setlength\belowdisplayskip{4pt plus 2pt minus 2pt}
  \setlength\abovedisplayshortskip{2pt plus 2pt minus 2pt}
  \setlength\belowdisplayshortskip{2pt plus 2pt minus 2pt}
}
\makeatother

\icmltitlerunning{DARC: Disagreement-Aware Alignment via Risk-Constrained Decoding}
\makeatletter
\renewcommand\paragraph{\@startsection{paragraph}{4}{\z@}%
  {0.3ex \@plus .2ex \@minus .1ex}
  {-1em}%
  {\normalfont\normalsize\bfseries}}
\makeatother

\newcommand{\cY}{\mathcal{Y}}                 
\newcommand{\Vhat}{\widehat{V}_\beta}         
\newcommand{\RPhat}{\widehat{\mathrm{RP}}_{\beta}} 
\newcommand{\epstie}{\epsilon}                

\newcommand{\best}[1]{\cellcolor{blue!20}\textbf{#1}}
\newcommand{\second}[1]{\cellcolor{green!15}\textbf{#1}}

\begin{document}

\twocolumn[
  \icmltitle{DARC: Disagreement-Aware Alignment via Risk-Constrained Decoding}

  \begin{icmlauthorlist}
    \icmlauthor{Mingxi Zou}{fudan}
    \icmlauthor{Jiaxiang Chen}{fudan}
    \icmlauthor{Junfan Li}{independent}
    \icmlauthor{Langzhang Liang}{fudan}
    \icmlauthor{Qifan Wang}{meta}
    \icmlauthor{Yinghui Xu}{fudan}
    \icmlauthor{Zenglin Xu}{fudan,aiiii}
  \end{icmlauthorlist}

  \icmlaffiliation{fudan}{Fudan University, Shanghai, China}
  \icmlaffiliation{independent}{Independent Researcher}
  \icmlaffiliation{meta}{Meta AI}
  \icmlaffiliation{aiiii}{Artificial Intelligence Innovation and Incubation Institute, Fudan University, Shanghai, China}

  \icmlcorrespondingauthor{Zenglin Xu}{zenglinxu@fudan.edu.cn}

  \icmlkeywords{Machine Learning, ICML, Alignment, Robust Decoding, Preference Learning}

  \vskip 0.3in
]


\printAffiliationsAndNotice{}
\begin{abstract}
Preference-based alignment methods (e.g., RLHF, DPO) typically optimize a single scalar objective, implicitly averaging over heterogeneous human preferences. In practice, systematic annotator and user-group disagreement makes mean-reward maximization brittle and susceptible to proxy over-optimization. We propose \textbf{Disagreement-Aware Alignment via Risk-Constrained Decoding (DARC)}, a retraining-free inference-time method that frames response selection as distributionally robust, risk-sensitive decision making. Given multiple preference samples or scalable disagreement proxies, DARC reranks candidates by maximizing a \emph{KL-robust (entropic)} satisfaction objective, and provides simple deployment controls that cap or penalize the corresponding entropic risk premium relative to the mean, enabling explicit risk budgets without retraining. We provide theoretical characterization linking this decoding rule to principled pessimism and KL-based distributionally robust optimization. Experiments on alignment benchmarks show that DARC reduces disagreement and tail risk while maintaining competitive average quality under noisy, heterogeneous feedback.

\end{abstract}
{\setlength{\intextsep}{4pt}
\begin{figure}[t]
\centering
\includegraphics[width=\columnwidth]{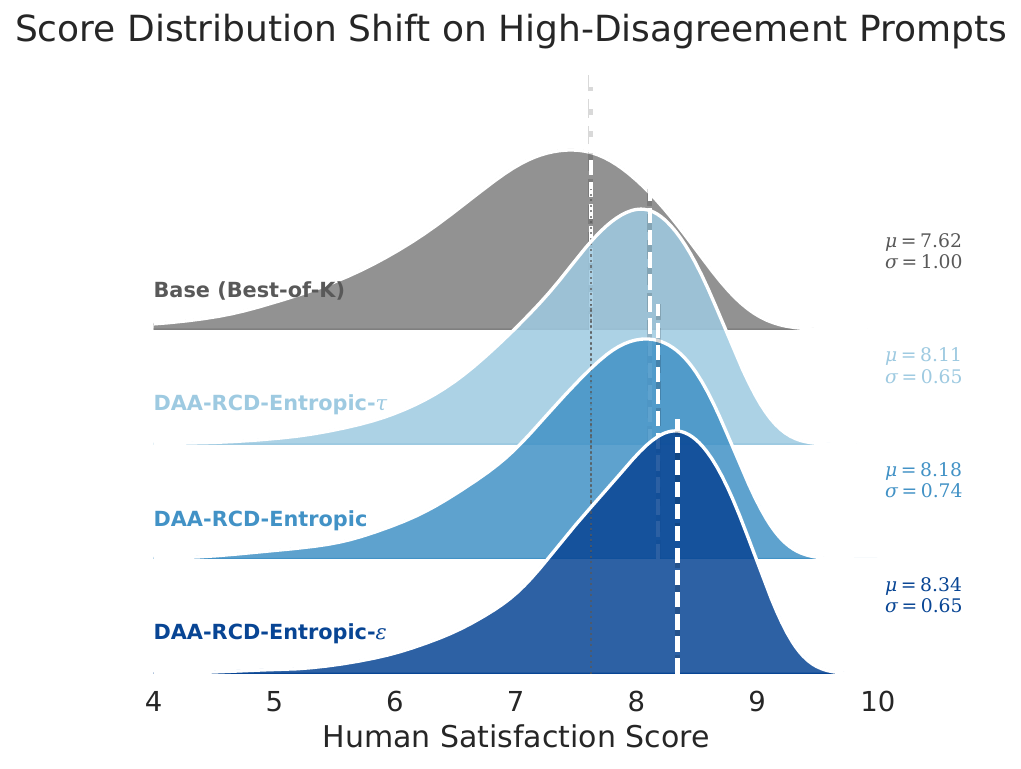}
\caption{\textbf{Score Distribution shift.} Ridge plot showing human score densities on the high-disagreement subset. DARC variants (blue) shift the distribution to the right (higher mean $\mu$) compared to the baseline (grey), with reduced spread (lower $\sigma$), indicating both increased satisfaction and reduced disagreement.}
\label{fig:ridge_dist}
\end{figure}
}

\section{Introduction}

Preference data has become the dominant supervision signal for aligning large language models~\citep{ouyang2022training,stiennon2020learning}.
Most pipelines---RLHF with reward modeling and RL optimization~\citep{christiano2017deep,schulman2017proximal},
offline preference objectives such as DPO and its refinements~\citep{rafailov2023direct,meng2024simpo},
and reference-free single-stage variants such as ORPO~\citep{hong2024orpo}---share a common abstraction:
preferences are treated as noisy observations of a \emph{single latent scalar utility} (e.g., Bradley--Terry)~\citep{bradley1952rank}.
This abstraction largely persists in newer reformulations such as KTO and IPO~\citep{ethayarajh2024kto,garg2025ipo},
and even when reward models are made multi-dimensional via multi-head/objective designs~\citep{wang2024interpretable,li2025multi,yang2024metaaligner}.
Yet, treating feedback as perturbations around a single scalar provides limited guidance for \emph{inference-time} response selection under heterogeneous preferences~\citep{hung2025reward},
and it also lacks a unifying robust-optimization account for common risk-penalized decoding heuristics.

However, real-world preferences are often \emph{heterogeneous} rather than i.i.d.\ noise:
annotators disagree for systematic reasons~\citep{zhang2024diverging,chen2024pal}.
Empirically, human ratings show substantial variance even on the raw Top-$K$ candidate pool (Appendix~\ref{app:human-disagreement-topk}),
suggesting that uncertainty is intrinsic rather than an artifact of our selection rule.
Under such plurality, maximizing the \emph{average} reward $\hat{\mu}$ can be brittle~\citep{casper2023open},
and the issue is exacerbated by proxy over-optimization, which can improve an imperfect preference proxy while degrading the underlying target~\citep{gao2023scaling,rafailov2024scaling}.

Recent work further shows that proxy misspecification can induce \emph{inference-time} reward hacking:
as best-of-$N$ (a widely used decoding primitive~\citep{sun2024fast}) or soft best-of-$N$ becomes greedier,
true utility can increase and then inevitably degrade~\citep{huang2025best}.
Best-of-Poisson and HedgeTune mitigate this effect by tuning inference-time parameters~\citep{khalaf2025inference};
however, they primarily model risk through proxy--distortion tradeoffs under a \emph{single} reward signal, rather than preference heterogeneity.
Closely related, pessimistic best-of-$N$ rules penalize atypical candidates via an auxiliary error model to mitigate reward hacking~\citep{mitigatingiclr26},
but they target distributional uncertainty of the reward model (e.g., atypicality/OOD) rather than disagreement-grounded risk across users.

In parallel, robustness has been pursued through \emph{training-time} objectives such as robust DPO under noisy preferences~\citep{wu2024towards},
which improve robustness via retraining and noise assumptions;
group-robust objectives that protect minority preference groups~\citep{ramesh2024group}, which rely on access to group structure;
and uncertainty-aware reward modeling~\citep{banerjee2024towards}, which quantifies estimation uncertainty but does not by itself specify how to select responses under inference-time proxy shift~\citep{ichihara2025evaluation}.
Taken together, these lines suggest a common lesson:
when preferences are plural, the relevant object is not a deterministic score, but a \emph{random variable} over users and annotation noise.
Yet, a principled \emph{inference-time} selection rule that is explicitly risk-constrained under heterogeneous preferences remains underdeveloped.

We therefore study inference-time alignment under heterogeneous preferences through the lens of \emph{risk-constrained decision making}~\citep{chow2018risk,tamar2015policy}.
Given a fixed candidate set and noisy preference or reward scores, we derive a finite-sample \emph{pessimistic} rule based on a lower confidence bound,
yielding high-probability guarantees for selecting a competitive response while controlling tail risk across prompts.
This leads to \textbf{Disagreement-Aware Alignment via Risk-Constrained Decoding (DARC)}, an inference-time-only, retraining-free procedure that plugs into any LM and preference estimator.
DARC grounds risk in \emph{multi-annotator disagreement}~\citep{zhang2024diverging} instantiated via validated proxy signals,
improving robustness on high-disagreement prompts (Fig.~\ref{fig:ridge_dist}).
We include representative cases where Best-of-$K$ is polarizing or unstable, whereas DARC yields consistently preferred responses (Appendix~\ref{app:positive_case}).

Beyond the statistical view, we give a distributionally robust optimization (DRO) characterization~\citep{wiesemann2014distributionally,rahimian2019distributionally},
viewing decoding as maximizing the worst-case expected satisfaction over local divergence neighborhoods~\citep{namkoong2016stochastic,duchi2021statistics}.
This yields a practical KL-robust instantiation and situates widely used mean--dispersion scoring rules within the same DRO perspective, 
clarifying the conditions under which they arise as principled risk-sensitive criteria~\citep{duchi2019variance}.

\paragraph{Contributions.}
\begin{itemize}[leftmargin=*, itemsep=1pt, topsep=2pt, parsep=0pt, partopsep=0pt]
    \item \textbf{Method.} We formulate inference-time alignment as \emph{risk-constrained} decision making under heterogeneous preferences, with risk induced by preference uncertainty and annotator disagreement.
    \item \textbf{Theory.} We connect LCB-based uniform pessimism to a KL-DRO view, yielding a closed-form entropic decoding objective and its constrained/penalized variants via an entropic risk premium.
    \item \textbf{Empirics.} Across benchmarks, DARC improves disagreement and prompt-level tail risk with competitive mean quality; a \emph{dual-robust} multi-scorer extension hedges scorer shift and proxy over-optimization, with a KL-regularized DRO interpretation over scorers.
\end{itemize}

\section{Problem setup}
\label{sec:prelim}

Let $s$ denote a prompt (context) and let $\cY(s)$ be a realized candidate set produced by a fixed generator (e.g., sampling, beam variants, or a proposal model), with $K:=|\cY(s)|$.

\paragraph{Conditioning on the candidate set.}
The generator may be stochastic; throughout the analysis we condition on the realized $\cY(s)$.
All probabilities below are taken over the evaluation randomness (human or otherwise), holding $\cY(s)$ fixed.

\paragraph{Latent satisfaction under heterogeneous preferences.}
For each $(s,y)$, let $R(s,y)\in\mathbb{R}$ denote a (latent) user-satisfaction random variable capturing preference heterogeneity and evaluation noise, with mean $\mu(s,y):=\mathbb{E}[R(s,y)]$.
Intuitively, $\mu(s,y)$ measures average quality.

\paragraph{KL-robust (entropic) value and risk premium.}
For $\beta>0$, define the entropic value
\begin{equation}
V_\beta(s,y)
:= -\frac{1}{\beta}\log \mathbb{E}\!\left[\exp\!\big(-\beta\,R(s,y)\big)\right],
\label{eq:true-entropic}
\end{equation}
which is equivalent to a KL-based distributionally robust objective (Section~\ref{sec:dro}).
We define the entropic risk premium. 
\begingroup
\setlength{\belowdisplayskip}{1pt}
\setlength{\belowdisplayshortskip}{1pt}
\begin{equation}
\mathrm{RP}_\beta(s,y):=\mu(s,y)-V_\beta(s,y)\ge 0    
\end{equation}
\endgroup
\paragraph{Decision problem (risk-aware decoding).}
Conditioning on $(s,\cY(s))$, each candidate $y\in\cY(s)$ induces an (unknown) satisfaction distribution over users/raters.
Our population objective is to select an output by solving
\begin{equation}
y^\star \in \arg\max_{y\in\cY(s)} V_\beta(s,y),
\label{eq:problem-entropic}
\end{equation}
where entropic risk measure $V_\beta$ is defined in \eqref{eq:true-entropic}.
We further consider explicit risk control via a budget or a penalty:
\begin{equation}
y^\star_\tau \in \arg\max_{y\in\cY(s)} V_\beta(s,y)
\quad \text{s.t.}\quad \mathrm{RP}_\beta(s,y)\le \tau,
\label{eq:problem-budget}
\end{equation}
or in penalized (Lagrangian) form
\[
  \arg\max_{y\in\cY(s)} V_\beta(s,y) - \lambda\,\mathrm{RP}_\beta(s,y).
\]

\section{Guarantees via Lower Confidence Bounds}
\label{sec:lcb}
This section provides a statistical justification for disagreement-aware decoding by deriving high-probability lower confidence bounds (LCBs) on expected satisfaction under heterogeneous preferences.
To contextualize the resulting pessimistic rules, we also give complementary distributionally robust optimization (DRO) characterizations.

\paragraph{Scalar satisfaction samples (guarantee setting).}
For each candidate $y\in\cY(s)$, we observe $n$ i.i.d.\ scalar satisfaction samples
$\{R_i(s,y)\}_{i=1}^n$ drawn from the (unknown) distribution of $R(s,y)$:
\begin{equation}
R_i(s,y) \overset{i.i.d.}{\sim} R(s,y),\qquad i=1,\ldots,n.
\label{eq:iid-samples}
\end{equation}
In this regime, the empirical mean and standard deviation
$\hat\mu_n(s,y)$ and $\hat\sigma_n(s,y)$ estimate $\mu(s,y)$ and $\sigma(s,y)$.
Moreover, the plug-in estimator of \eqref{eq:true-entropic} is
\begin{equation}
\widehat{V}_\beta(s,y)
:= -\frac{1}{\beta}\log\!\left(\frac{1}{n}\sum_{i=1}^n \exp\!\big(-\beta\,R_i(s,y)\big)\right),
\label{eq:emp-entropic}
\end{equation}
with the empirical risk premium
\begin{equation}
\RPhat(s,y) := \hat\mu_n(s,y) - \widehat{V}_\beta(s,y)\ \ge 0.
\label{eq:emp-entropic-risk-premium}
\end{equation}
In practice, one may operate with approximate (proxy) estimates of these empirical quantities; the analysis below does not rely on proxy scores being independent, and any such approximation induces only an additive slack in the resulting LCB objective (Appendix~\ref{app:proxy-robust}).

\paragraph{Theory--proxy interface.}
The guarantees in Section~\ref{sec:lcb} are stated for the clean
human-sample regime, where each candidate is evaluated by i.i.d. scalar
satisfaction samples. This regime provides the decision-theoretic basis for the
LCB/KL-DRO connection. In scalable deployment, direct multi-rater samples are
usually unavailable, so we instantiate the same decoding rules with
perturbation-based reward-model statistics as a surrogate for disagreement. We
do not claim that this proxy literally inherits the i.i.d. human-sample
guarantees. Rather, Appendix~\ref{app:proxy-robust} shows that if the proxy
uniformly tracks the empirical mean and risk statistics, the induced pessimistic
objective differs from the sample-based objective by only an additive slack.
Empirically, we validate this approximation by comparing proxy disagreement to
held-out human disagreement in Section~\ref{sec:experiments}.

\subsection{Estimation risk: uniform LCB and a mean--dispersion surrogate}

Fix a prompt $s$ and condition on $\mathcal{Y}=\cY(s)$ with $K:=|\mathcal{Y}|$.
For readability, write $R_i(y)$ for $R_i(s,y)$, and $\mu(y),\sigma(y)$ for $\mu(s,y),\sigma(s,y)$.
For each $y\in\mathcal{Y}$, we observe $n$ independent satisfaction samples with mean $\mu(y)$ and variance $\sigma^2(y)$, and denote the empirical mean and standard deviation by
$\hat\mu_n(y):=\frac{1}{n}\sum_{i=1}^n R_i(y)$ and
$\hat\sigma_n^2(y):=\frac{1}{n-1}\sum_{i=1}^n (R_i(y)-\hat\mu_n(y))^2$.

\paragraph{Bridge to pairwise preferences.}
Our scalar-sample analysis can be viewed as operating on standard scalarizations of pairwise preferences (e.g., win-rate or fitted BT/Thurstone scores); see Appendix~\ref{app:pairwise_bridge}. Empirically, our main conclusions are stable when replacing absolute scalar ratings with a pairwise scalarization against the base response (Table~\ref{tab:scalarization_robust}).
\begin{remark}[Shared annotators across candidates]
Our guarantee requires independence across $i$ \emph{for each fixed candidate $y$}.
Annotator overlap \emph{across candidates} may induce dependence between $\{(\hat\mu_n(y),\hat\sigma_n(y))\}_{y\in\mathcal{Y}}$,
but the uniform guarantee below follows from per-candidate concentration with a union bound and does not require independence across candidates.
\end{remark}
\vspace{-0.5em}
\begin{assumption}[Bounded rewards]
For all $y\in\mathcal{Y}$, the satisfaction samples are almost surely bounded:
\begin{equation}
R_i(y)\in[a,b]\quad\text{a.s.}
\label{eq:bounded}
\end{equation}
\end{assumption}

Boundedness matches typical rating scales and can be enforced for proxy scores via truncation (Appendix~H.1); an analogous LCB holds under sub-Gaussian noise (Appendix~\ref{app:tail}).

\begin{proposition}[Uniform LCB]
\label{prop:uniform-lcb}
There exists an absolute constant $c>0$ such that for any $\delta\in(0,1)$, with probability at least $1-\delta$, simultaneously for all $y\in\mathcal{Y}(s)$,
\begin{equation}
\mu(y) \ge \hat\mu_n(y)
- c\,\hat\sigma_n(y)\sqrt{\frac{\log(K/\delta)}{n}}
- c\,(b-a)\frac{\log(K/\delta)}{n}.
\label{eq:lcb-bounded}
\end{equation}
We denote the right-hand side by $\mathrm{LCB}_\delta(y)$.
\end{proposition}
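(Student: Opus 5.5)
The plan is to prove a per-candidate empirical Bernstein inequality at confidence level $\delta/K$ and then take a union bound over the $K$ candidates. By the remark on shared annotators, this needs independence only across $i$ for each fixed $y$, not across candidates. Fix $y\in\mathcal{Y}(s)$. Assumption~\eqref{eq:bounded} gives that $R_1(y),\dots,R_n(y)$ are i.i.d.\ and take values in $[a,b]$. I would use the one-sided empirical Bernstein bound of Maurer and Pontil (2009). Rescale to $Z_i=(R_i(y)-a)/(b-a)\in[0,1]$ and apply it to $Z_i$. Undoing the rescaling, with probability at least $1-\delta'$,
\begin{equation}
\mu(y)\ \ge\ \hat\mu_n(y)-\hat\sigma_n(y)\sqrt{\frac{2\log(2/\delta')}{n}}-\frac{7(b-a)\log(2/\delta')}{3(n-1)}.
\end{equation}
Here $\hat\sigma_n$ is exactly the unbiased-normalized sample standard deviation defined in the paper.

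If one prefers not to cite this result, I would derive it in two pieces, each at level $\delta'/2$. The first piece is Bernstein's inequality with the true variance:
\begin{equation}
\mu-\hat\mu_n\le \sigma\sqrt{2\log(2/\delta')/n}+(b-a)\log(2/\delta')/(3n).
\end{equation}
The second piece is a concentration bound for the sample standard deviation, $\sigma\le\hat\sigma_n+(b-a)\sqrt{2\log(2/\delta')/(n-1)}$. This follows from self-bounding / bounded-difference arguments for $\hat\sigma_n^2$, the route Maurer--Pontil take. Substituting the second bound into the first makes the cross term $\sqrt{\log/n}\cdot(b-a)\sqrt{\log/(n-1)}$ of order $(b-a)\log(2/\delta')/n$.

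I expect this variance-replacement step to be the main obstacle. Naive Hoeffding on $\hat\sigma_n^2$ only controls $\sigma^2$, and taking square roots loses the linear rate. The fix is to use the self-bounding property of $n\hat\sigma_n^2/(b-a)^2$, which gives concentration of $\sqrt{\hat\sigma_n^2}$ directly.

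Next, set $\delta'=\delta/K$ and take a union bound over the $K$ candidates. This yields the inequality simultaneously for all $y$ with probability at least $1-\delta$. Finally, absorb constants. For $\delta\in(0,1)$ and $K\ge1$ we have $\log(2K/\delta)\le 2\log(K/\delta)$ whenever $K/\delta\ge2$, which holds unless $K=1$ and $\delta>1/2$. That edge case can be handled by enlarging $c$, using $\log(1/\delta)>0$, or by noting that $\log(2K/\delta)\le(1+\log 2/\log(K/\delta))\log(K/\delta)$ when $K/\delta\ge e$; the residual range of small $K/\delta$ needs separate care. Also $1/(n-1)\le 2/n$ for $n\ge2$. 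These give a single absolute $c$ (for instance $c=6$ suffices) in \eqref{eq:lcb-bounded}.

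Two degenerate cases remain. When $n=1$, $\hat\sigma_n$ is undefined; here I would note that the claim is vacuous or trivially true once $c\log(K/\delta)\ge1$, using $\mu\ge a\ge\hat\mu_1-(b-a)$. When $K/\delta$ is close to $1$, I would use the same trivial bound $\mu(y)\ge a\ge \hat\mu_n(y)-(b-a)$: whenever $c\log(K/\delta)/n\ge1$ the statement holds automatically, so one only needs the concentration argument in the nontrivial regime.
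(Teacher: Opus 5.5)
Your core argument is the paper's proof. The paper applies the same Maurer--Pontil-form empirical Bernstein inequality (Lemma~\ref{lem:emp-bern}, with your constants $\sqrt{2\log(2/\delta')/n}$ and $7(b-a)/(3(n-1))$) to each fixed $y$ at $\delta'=\delta/K$. It then takes a union bound that needs no independence across candidates, and absorbs $\log(2K/\delta)\le C_0\log(K/\delta)$ and $1/(n-1)\le 2/n$ into $c$. Your two-piece derivation (Bernstein with the true $\sigma$, plus the self-bounding bound $\sigma\le\hat\sigma_n+(b-a)\sqrt{2\log(2/\delta')/(n-1)}$) is exactly how that lemma is proved, which is more than the paper's outline supplies. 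One orientation slip: Maurer--Pontil's theorem and your piece-one Bernstein bound, as written, control $\mu-\hat\mu_n$. They therefore give an \emph{upper} confidence bound on $\mu$. Apply them to $(b-R_i(y))/(b-a)$, which has the same sample variance, to get the lower bound you display.

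The step that does not work is your treatment of $K/\delta<2$, i.e.\ $K=1$ and $\delta\in(1/2,1)$.
\begin{itemize}
\item Enlarging $c$ cannot absorb $\log(2/\delta)$ into $c\log(1/\delta)$ with an absolute $c$, because $\log(2/\delta)/\log(1/\delta)\to\infty$ as $\delta\to1$.
\item The trivial bound $\mu(y)\ge a\ge\hat\mu_n(y)-(b-a)$ only covers the regime $n\le c\log(K/\delta)$, whereas $c\log(K/\delta)/n\to0$ as $K/\delta\to1$.
\item In this corner the target slack (take $\hat\sigma_n(y)=0$) is $c(b-a)\log(1/\delta)/n\to0$. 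Lemma~\ref{lem:emp-bern} at any level $\delta'\in(0,1)$ leaves slack at least $7(b-a)\log 2/(3(n-1))$, so no bound carrying a $\log(2/\delta')$ factor can deliver it.
\end{itemize}
You would need a separate anti-concentration argument showing $\Pr\bigl(\hat\mu_n(y)-\mu(y)\le\text{slack}\bigr)\ge 1-\delta$. The paper dismisses the same case with ``otherwise the bound is trivial'', which is not justified either. The clean repair is to state the result for $K/\delta\ge2$ (e.g.\ $\delta\le1/2$), or with $\log(2K/\delta)$ in place of $\log(K/\delta)$. You should also assume $n\ge2$ so that $\hat\sigma_n$ is defined. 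Under those conventions your proof is complete.

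A last point on constants. Your $c=6$ works only after you first discard the trivial regime $n\le c\log(K/\delta)$, which forces $n\ge5$. Using just $1/(n-1)\le2/n$ and $\log(2K/\delta)\le2\log(K/\delta)$ yields only $c=28/3$.
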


\noindent\textit{Proof.} See Appendix~\ref{app:proof-lcb}.

\begin{remark}[Variance governs estimation hardness]
The LCB form \eqref{eq:lcb-bounded} highlights a statistical driver of selection risk: the dominant estimation error scales with the standard deviation $\sigma(y)$.
The lower-order term decays as $O(1/n)$, whereas the leading term scales as $O(\sigma(y)/\sqrt{n})$.
Consequently, low-disagreement candidates (small $\sigma(y)$) admit substantially tighter confidence bounds at the same sample size, while controversial candidates require many more samples to certify their mean.
Penalizing $\hat\sigma_n(y)$ therefore has an identification rationale: it discourages selecting candidates whose true quality is intrinsically harder to verify from limited feedback.
\end{remark}

\paragraph{Lower-tail interpretation.}
Maximizing a lower confidence bound is a principled way to avoid candidates with poor lower-tail satisfaction even when $\hat\mu_n(y)$ is high~\citep{Boucheron2013Concentration,vershynin2018high}.
This is a statistical conservatism argument rather than an equivalence to coherent tail-risk measures such as CVaR~\citep{artzner1999coherent,rockafellar2000optimization}.
See Appendix~\ref{app:tail} for details.
\paragraph{On constants and practical calibration.}
The uniform LCB in~\eqref{eq:lcb-bounded} is obtained via standard concentration plus a union bound over $K$ candidates and is therefore conservative.
We do not claim tight constants; instead, the bound motivates the \emph{functional form}
\begin{equation}
\lambda_\delta \;\propto\; \sqrt{\frac{\log(K/\delta)}{n}},
\end{equation}
and the lower-order term $c(b-a)\tfrac{\log(K/\delta)}{n}$ is uniform across candidates and does not affect $\arg\max_y$ decisions.
In practice, we treat the coefficient as a risk-budget knob (optionally scaled by a factor $\alpha$) and fix it via a small held-out calibration, while reporting sensitivity of the resulting trade-off. Combined with the $\hat\sigma_n(y)$ factor in~\eqref{eq:lcb-bounded}, the uncertainty penalty scales as $\widetilde O(\hat\sigma_n(y)\sqrt{\log K/n})$, matching the intuition that more controversial candidates require larger risk budgets to be selected.
\paragraph{LCB decoding and a mean--dispersion surrogate.}
We define the LCB decoder as 
\begin{equation}
    y_{\mathrm{LCB}} \in \arg\max_{y\in\mathcal{Y}} \mathrm{LCB}_\delta(y).
\end{equation}
On the high-probability event of Proposition~\ref{prop:uniform-lcb}, this rule maximizes a valid lower bound on the true mean satisfaction $\mu(y)$.
Crucially, under bounded ratings, maximizing $\mathrm{LCB}_\delta(y)$ is equivalent (up to a constant) to a \emph{mean--dispersion} surrogate:
\[
\arg\max_{y\in\mathcal{Y}} \big(\hat\mu_n(y)-\lambda\hat\sigma_n(y)\big).
\]
We provide the derivation in Corollary~\ref{cor:lcb-meanvar} (Appendix~\ref{app:proof-corollary}) and discuss the theoretical connection between this $\sigma$-penalty (estimation uncertainty) and the entropic $\sigma^2$-penalty (risk aversion) in Remark~\ref{rem:std-vs-var}.

\subsection{Distributional risk: DRO characterizations of pessimistic value}

\label{sec:dro}
\paragraph{KL-robust (entropic) decoding.}
We first consider a relative-entropy (KL) robustification of expected satisfaction.
Let $R\in\mathbb{R}$ be the (latent) satisfaction with reference distribution $\mathbb{P}$.
We consider worst-case distributions $\mathbb{Q}$ over $R$ that are absolutely continuous w.r.t.\ $\mathbb{P}$.
For $\beta>0$, define the KL-regularized robust value
\begin{equation}
\mathrm{Rob}^{\mathrm{KL}}_\beta(\mathbb{P};R)
:= \inf_{\mathbb{Q}\ll \mathbb{P}}
\Big\{ \mathbb{E}_{\mathbb{Q}}[R] + \beta^{-1} D_{\mathrm{KL}}(\mathbb{Q}\|\mathbb{P}) \Big\},
\label{eq:kl-rob-def}
\end{equation}
where $\mathbb{Q}\ll\mathbb{P}$ ensures the KL term is well-defined.
This is standard in risk-sensitive control and large deviations~\citep{dupuis2011weak,hansen2011robustness}.

\begin{theorem}[KL-robust value equals an entropic objective]
\label{thm:kl-entropic}
For any $\beta>0$ and any $\mathbb{P}$ such that $\mathbb{E}_{\mathbb{P}}[\exp(-\beta R)]<\infty$,
\begin{equation}
\mathrm{Rob}^{\mathrm{KL}}_\beta(\mathbb{P};R)
\;=\;
-\frac{1}{\beta}\log \mathbb{E}_{R\sim\mathbb{P}}\!\left[\exp(-\beta R)\right].
\label{eq:kl-entropic-closed}
\end{equation}
In particular, for the empirical rater distribution $\widehat{\mathbb{P}}_n^y$,
$\mathrm{Rob}^{\mathrm{KL}}_\beta(\widehat{\mathbb{P}}_n^y;R)
= -\frac{1}{\beta}\log\!\big(\frac{1}{n}\sum_{i=1}^n \exp(-\beta R_i(y))\big)
= \widehat{V}_\beta(s,y)$.See Appendix~\ref{app:proof-kl-entropic} for the detailed proof.
\end{theorem}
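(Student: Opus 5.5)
We prove the identity through the Donsker--Varadhan (Gibbs) variational principle, showing that the infimum in \eqref{eq:kl-rob-def} is attained by an exponentially tilted measure. Set $Z:=\mathbb{E}_{\mathbb{P}}[e^{-\beta R}]$. By hypothesis $Z<\infty$, and $Z>0$ because the integrand is strictly positive. Define the Gibbs measure $\mathbb{Q}^\star$ by $d\mathbb{Q}^\star/d\mathbb{P}=e^{-\beta R}/Z$. Then $\mathbb{Q}^\star\ll\mathbb{P}$, and $\mathbb{Q}^\star$ is a probability measure.

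The key identity is the following. Fix any $\mathbb{Q}\ll\mathbb{P}$ with $D_{\mathrm{KL}}(\mathbb{Q}\|\mathbb{P})<\infty$ and $\mathbb{E}_{\mathbb{Q}}[R]$ well defined. Then $\mathbb{Q}\ll\mathbb{Q}^\star$, since $d\mathbb{Q}^\star/d\mathbb{P}>0$. By the chain rule for densities,
\[
\log\frac{d\mathbb{Q}}{d\mathbb{P}}=\log\frac{d\mathbb{Q}}{d\mathbb{Q}^\star}-\beta R-\log Z .
\]
Take expectations under $\mathbb{Q}$ and multiply by $\beta^{-1}$. This gives
\[
\mathbb{E}_{\mathbb{Q}}[R]+\beta^{-1}D_{\mathrm{KL}}(\mathbb{Q}\|\mathbb{P})=-\beta^{-1}\log Z+\beta^{-1}D_{\mathrm{KL}}(\mathbb{Q}\|\mathbb{Q}^\star).
\]
Gibbs' inequality gives $D_{\mathrm{KL}}(\mathbb{Q}\|\mathbb{Q}^\star)\ge 0$, with equality iff $\mathbb{Q}=\mathbb{Q}^\star$. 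Hence the objective is bounded below by $-\beta^{-1}\log Z$, and the bound is attained at $\mathbb{Q}^\star$. The remaining $\mathbb{Q}$, those with infinite KL, contribute $+\infty$ provided $\mathbb{E}_{\mathbb{Q}}[R]>-\infty$, and so do not lower the infimum.

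The main obstacle is integrability. Splitting $\mathbb{E}_{\mathbb{Q}}[\log d\mathbb{Q}/d\mathbb{P}]$ into three expectations requires $\mathbb{E}_{\mathbb{Q}}[R]$ and the other pieces to be finite, and we must rule out an $\infty-\infty$ form. Below, $R$ is controlled by $e^{-\beta R}\in L^1(\mathbb{P})$. I would handle this with the Fenchel--Young inequality $xy\le e^{x}+y\log y-y$, applied with $x=-\beta R$ and $y=d\mathbb{Q}/d\mathbb{P}$. It shows that finite KL forces $\mathbb{E}_{\mathbb{Q}}[(-\beta R)^+]<\infty$. So $\mathbb{E}_{\mathbb{Q}}[R]\in(-\infty,+\infty]$, and the objective lies in $(-\infty,+\infty]$.

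If $\mathbb{E}_{\mathbb{Q}}[R]=+\infty$, the inequality is trivial. Otherwise every term is finite and the decomposition is valid. The optimizer also needs checking: $\mathbb{E}_{\mathbb{Q}^\star}[R]$ must be finite. It cannot be $-\infty$ by the argument above, because $D_{\mathrm{KL}}(\mathbb{Q}^\star\|\mathbb{P})=-\beta\mathbb{E}_{\mathbb{Q}^\star}[R]-\log Z$. If it equals $+\infty$, I would use a truncation argument: set $\mathbb{Q}_M\propto e^{-\beta R}\mathbf{1}\{R\le M\}\,d\mathbb{P}$ and let $M\to\infty$. This shows the infimum still equals $-\beta^{-1}\log Z$ by monotone convergence, although it may not be attained.

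Finally, the empirical statement is the special case $\mathbb{P}=\widehat{\mathbb{P}}_n^y=\frac1n\sum_i\delta_{R_i(y)}$. Here $Z=\frac1n\sum_i e^{-\beta R_i(y)}<\infty$ trivially, and all integrability issues disappear because the support is finite. The closed form then coincides with $\widehat{V}_\beta(s,y)$ in \eqref{eq:emp-entropic}.
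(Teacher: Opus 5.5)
Your core argument is correct, and it takes a different route from the paper. The appendix proof is a formal Lagrangian computation: it sets the functional derivative of $\int f R\,d\mathbb{P}+\beta^{-1}\int f\log f\,d\mathbb{P}$ to zero, reads off $f^\star\propto e^{-\beta R}$, and substitutes back. That only exhibits a stationary point, and it never checks minimality or integrability. Your exact identity
\[
\mathbb{E}_{\mathbb{Q}}[R]+\beta^{-1}D_{\mathrm{KL}}(\mathbb{Q}\|\mathbb{P})=-\beta^{-1}\log Z+\beta^{-1}D_{\mathrm{KL}}(\mathbb{Q}\|\mathbb{Q}^\star)
\]
gives the lower bound for every admissible $\mathbb{Q}$ and identifies the minimizer in one step. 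Your Fenchel--Young bound also correctly shows that finite KL forces $\mathbb{E}_{\mathbb{Q}}[R]>-\infty$. For the empirical specialization, which is what the paper actually uses, your proof is complete.

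The one genuine error is in your treatment of the optimizer's integrability, where you have the two cases reversed.

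\emph{The case $+\infty$ cannot occur.} Since $r e^{-\beta r}\le 1/(e\beta)$ for $r>0$, you always have $\mathbb{E}_{\mathbb{Q}^\star}[R^+]\le 1/(e\beta Z)$, so $\mathbb{E}_{\mathbb{Q}^\star}[R]=+\infty$ never happens.

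\emph{The case $-\infty$ can occur, and your exclusion of it is circular.} The Fenchel--Young bound presupposes $D_{\mathrm{KL}}(\mathbb{Q}^\star\|\mathbb{P})<\infty$. But the identity $D_{\mathrm{KL}}(\mathbb{Q}^\star\|\mathbb{P})=-\beta\mathbb{E}_{\mathbb{Q}^\star}[R]-\log Z$ shows the KL is $+\infty$ exactly when $\mathbb{E}_{\mathbb{Q}^\star}[R]=-\infty$, so no contradiction arises. Concretely, $e^{-\beta R}\in L^1(\mathbb{P})$ does not imply $R e^{-\beta R}\in L^1(\mathbb{P})$. Take $R=-\beta^{-1}\log W$ with $W$ supported on $[e,\infty)$ with density proportional to $w^{-2}(\log w)^{-2}$. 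Then $\mathbb{E}[W]<\infty$ but $\mathbb{E}[W\log W]=\infty$, and the objective at $\mathbb{Q}^\star$ has the form $-\infty+\infty$.

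\emph{Your truncation is on the wrong side.} Truncating with $\mathbf{1}\{R\le M\}$ keeps the lower tail where the divergence lives, so $\mathbb{E}_{\mathbb{Q}_M}[R]=-\infty$ still. Truncate from below instead:
\[
\frac{d\mathbb{Q}_M}{d\mathbb{P}}=\frac{e^{-\beta R}\mathbf{1}\{R\ge -M\}}{Z_M},\qquad Z_M:=\mathbb{E}_{\mathbb{P}}[e^{-\beta R};R\ge -M].
\]
On $\{R\ge -M\}$ the function $R e^{-\beta R}$ is bounded, so $\mathbb{E}_{\mathbb{Q}_M}[R]$ and $D_{\mathrm{KL}}(\mathbb{Q}_M\|\mathbb{P})$ are finite. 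The objective at $\mathbb{Q}_M$ equals $-\beta^{-1}\log Z_M$, and $Z_M\uparrow Z$ by monotone convergence. This gives the matching upper bound on the infimum, which is generally not attained in this case. Combined with your lower bound, it proves the theorem for general $\mathbb{P}$.
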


\paragraph{Relation to Expected Utility Theory (CARA).}
Maximizing the entropic value in \eqref{eq:kl-entropic-closed} is equivalent to maximizing expected utility under a \emph{constant absolute risk aversion} (CARA) utility, $u(x)=-\exp(-\beta x)$.
A useful invariance is \emph{translation equivariance}: for any constant $c$, $V_\beta(R+c)=V_\beta(R)+c$, hence the risk premium $\mu(R)-V_\beta(R)$ is invariant to additive reward shifts.
This is desirable in RLHF-style pipelines where learned reward models are often only identifiable up to an affine transformation.

\paragraph{$\chi^2$-DRO yields a mean--dispersion special case.}
We now give a complementary robust-optimization view~\citep{wiesemann2014distributionally,rahimian2019distributionally}:
under a $\chi^2$-divergence ambiguity set around $\mathbb{P}$~\citep{ben2002robust},
the worst-case expected satisfaction admits a mean--dispersion form.
(Definitions and tightness conditions are deferred to Appendix~\ref{app:chi2-dro}.)

\begin{proposition}[$\chi^2$-DRO robust mean admits a mean--dispersion form]
\label{prop:chi2-dro}
Let $R$ be square-integrable under $\mathbb{P}$ with mean $\mu_{\mathbb{P}}:=\mathbb{E}_{\mathbb{P}}[R]$
and variance $\sigma_{\mathbb{P}}^2:=\mathrm{Var}_{\mathbb{P}}(R)$.
For any $\rho\ge 0$,
\begin{equation}
\inf_{\mathbb{Q}\in \mathcal{U}_\rho(\mathbb{P})}\ \mathbb{E}_{\mathbb{Q}}[R]
\;\ge\;
\mu_{\mathbb{P}} \;-\; \sqrt{\rho}\,\sigma_{\mathbb{P}}.
\label{eq:chi2-dro-lower}
\end{equation}
Specializing to $\mathbb{P}=\widehat{\mathbb{P}}_n^y$ yields the empirical closed form
\begin{equation}
\inf_{\mathbb{Q}\in \mathcal{U}_\rho(\widehat{\mathbb{P}}_n^y)}\ \mathbb{E}_{\mathbb{Q}}[R]
=\widehat{\mu}_n(y)-\sqrt{\rho}\,\sqrt{\widehat{v}_n(y)},
\label{eq:chi2-dro-empirical}
\end{equation}
whenever the extremal density is nonnegative on the empirical support. Otherwise, the mean--dispersion form remains a valid lower bound (Remark~\ref{rem:dro-gap}), ensuring the rule remains pessimistic.
\end{proposition}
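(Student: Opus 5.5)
We take the ambiguity set to be the standard $\chi^2$-ball $\mathcal{U}_\rho(\mathbb{P}):=\{\mathbb{Q}\ll\mathbb{P}:\ D_{\chi^2}(\mathbb{Q}\|\mathbb{P})\le\rho\}$, with $D_{\chi^2}(\mathbb{Q}\|\mathbb{P})=\mathbb{E}_{\mathbb{P}}[(L-1)^2]$ and $L:=d\mathbb{Q}/d\mathbb{P}$; the version in Appendix~\ref{app:chi2-dro} may use a factor $\tfrac12$, which only rescales $\rho$. We reparametrize by the likelihood ratio. Every feasible $\mathbb{Q}$ corresponds to some $L\ge 0$ with $\mathbb{E}_{\mathbb{P}}[L]=1$ and $\mathbb{E}_{\mathbb{P}}[(L-1)^2]\le\rho$, and $\mathbb{E}_{\mathbb{Q}}[R]=\mathbb{E}_{\mathbb{P}}[LR]$. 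Since $\mathbb{E}_{\mathbb{P}}[L-1]=0$, we can center $R$ for free:
\[
\mathbb{E}_{\mathbb{Q}}[R]-\mu_{\mathbb{P}}=\mathbb{E}_{\mathbb{P}}\big[(L-1)(R-\mu_{\mathbb{P}})\big].
\]

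\textbf{Lower bound \eqref{eq:chi2-dro-lower}.} Apply Cauchy--Schwarz to the centered identity:
\[
\big|\mathbb{E}_{\mathbb{P}}[(L-1)(R-\mu_{\mathbb{P}})]\big|\le \sqrt{\mathbb{E}_{\mathbb{P}}[(L-1)^2]}\,\sigma_{\mathbb{P}}\le\sqrt{\rho}\,\sigma_{\mathbb{P}}.
\]
Hence $\mathbb{E}_{\mathbb{Q}}[R]\ge\mu_{\mathbb{P}}-\sqrt{\rho}\,\sigma_{\mathbb{P}}$ for every feasible $\mathbb{Q}$, and taking the infimum gives the bound. Square-integrability of $R$ makes every term finite. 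This step drops the constraint $L\ge0$, so it is only a relaxation, and that is why the general statement is an inequality.

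\textbf{Empirical equality \eqref{eq:chi2-dro-empirical}.} Take $\mathbb{P}=\widehat{\mathbb{P}}_n^y$ and let $\widehat v_n(y)$ be the population variance of the empirical measure (the $1/n$ normalization). Cauchy--Schwarz is tight exactly when $L-1$ is a negative multiple of $R-\mu$. So we propose the extremal density
\[
L^\star_i=1-\sqrt{\rho}\,\frac{R_i(y)-\widehat\mu_n(y)}{\sqrt{\widehat v_n(y)}}.
\]
We then check three things. First, $\tfrac1n\sum_i L^\star_i=1$, because the centered samples average to zero. Second, $\tfrac1n\sum_i(L^\star_i-1)^2=\rho$, so $L^\star$ sits on the boundary of the ball. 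Third, $\tfrac1n\sum_i L^\star_iR_i(y)=\widehat\mu_n(y)-\sqrt{\rho\,\widehat v_n(y)}$. When every $L^\star_i\ge0$, $L^\star$ defines a genuine probability vector in $\mathcal{U}_\rho$ that attains the lower bound, so the infimum equals $\widehat\mu_n(y)-\sqrt{\rho}\sqrt{\widehat v_n(y)}$.

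The degenerate case $\widehat v_n(y)=0$ is handled directly. Then $R$ is a.s.\ constant under $\widehat{\mathbb{P}}_n^y$, so $\mathbb{E}_{\mathbb{Q}}[R]=\widehat\mu_n(y)$ for every $\mathbb{Q}\ll\widehat{\mathbb{P}}_n^y$, which agrees with the formula.

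\textbf{Main obstacle: nonnegativity.} Nonnegativity is exactly the side condition in the statement, namely $\sqrt{\rho}\,\max_i (R_i-\widehat\mu_n)\le\sqrt{\widehat v_n}$. If it fails, the true infimum is strictly larger than the closed form, because the optimum clips negative mass and the problem becomes a KKT problem with threshold structure. Only the inequality \eqref{eq:chi2-dro-lower} then survives, which matches Remark~\ref{rem:dro-gap}. The write-up should state this condition precisely and make clear that the equality is claimed only under it.
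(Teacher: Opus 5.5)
Your proposal is correct and follows the paper's proof essentially step for step. It uses the same likelihood-ratio reparametrization $g=d\mathbb{Q}/d\mathbb{P}-1$, the same centering followed by Cauchy--Schwarz, and the same extremal density $1-\sqrt{\rho}\,(R-\mu_{\mathbb{P}})/\sigma_{\mathbb{P}}$, whose nonnegativity is the tightness condition. Two small points you add are absent from the paper: the degenerate case $\widehat v_n(y)=0$, and the observation that when nonnegativity fails the closed form is strictly conservative.
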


\noindent\textit{Proof and tightness conditions.}
See Appendix~\ref{app:chi2-dro} for the extremal density characterization and a sufficient nonnegativity regime (e.g., bounded ratings imply tightness for small $\rho$).
\paragraph{A unified pessimistic value (LCB as calibrated DRO).}
Define the candidate-wise pessimistic value
\begin{equation}
V_{\delta}(y)
:=
\widehat{\mu}_n(y)
- c\,\widehat{\sigma}_n(y)\sqrt{\frac{\log(K/\delta)}{n}}
- c\,(b-a)\frac{\log(K/\delta)}{n},
\label{eq:unified-pess-value}
\end{equation}
which is a uniform lower confidence bound (LCB) under Proposition~\ref{prop:uniform-lcb} (Appendix~\ref{app:proof-lcb}).
On the other hand, Proposition~\ref{prop:chi2-dro} implies that, up to the lower-order term in~\eqref{eq:unified-pess-value},
$V_\delta(y)$ matches the $\chi^2$-DRO robust mean of the empirical rater distribution with radius
\begin{equation}
\rho_\delta \asymp \frac{\log(K/\delta)}{n},
\label{eq:rho-calibration}
\end{equation}
since $\sqrt{\rho_\delta}\,\widehat{\sigma}_n(y)\asymp \widehat{\sigma}_n(y)\sqrt{\log(K/\delta)/n}$ (see Appendix~\ref{app:lcb-dro-calibration} for a discussion distinguishing statistical estimation risk from intrinsic disagreement).
This yields a unified interpretation of disagreement-aware decoding as \emph{calibrated pessimism}:a statistical LCB rule that is equivalent (up to lower-order terms) to optimizing a local DRO objective. Thus, disagreement-aware decoding can be viewed equivalently as
(i) maximizing a high-probability LCB on expected satisfaction (statistical pessimism),
or (ii) maximizing a distributionally robust worst-case expected satisfaction over a local $\chi^2$ neighborhood (adversarial robustness).


\section{Disagreement-Aware Risk-Constrained Decoding}
\label{sec:DARC}
Motivated by the LCB and DRO characterization in \S\ref{sec:lcb}--\S\ref{sec:dro}, we now present practical decoding rules that implement the \emph{KL-robust entropic objective} (and its constrained/penalized variants) over a finite candidate set.
For completeness, we also include a \emph{second-moment} pessimistic surrogate implied by the LCB analysis in \eqref{eq:meanvar} as a computationally convenient ablation.

\vspace{-0.5em}
\subsection{From preference maximization to risk controls}
\vspace{-0.5em}
\label{sec:formulation}
Standard reward-based reranking selects the highest estimated mean,
\begin{equation}
y_{\text{mean}}(s)\in\arg\max_{y\in\cY(s)} \hat\mu(s,y),
\label{eq:mean-decode}
\end{equation}
which optimizes \emph{average} preference but may favor brittle outputs when disagreement is large.
\vspace{-0.5em}
\paragraph{Risk-sensitive decoding (primary rule).}
Our primary decoder selects the candidate with the largest \emph{scorer-robust} entropic value:
\begin{equation}
y_{\text{Entropic}}(s)
\in
\arg\max_{y\in\cY(s)} \widehat V_{\beta}(s,y),
\label{eq:entropic-decode}
\end{equation}
where $\widehat V_{\beta}$ is the entropic value defined in~\eqref{eq:emp-entropic}.

\vspace{-0.5em}
\paragraph{Risk-constrained decoding via an entropic risk premium.}
We define explicit deployment knobs using the entropic risk premium
$\widehat{\mathrm{RP}}_{\beta}(s,y)$ from~\eqref{eq:emp-entropic-risk-premium}.
We consider two standard forms:
\begingroup
\setlength{\abovedisplayskip}{4pt}
\setlength{\belowdisplayskip}{4pt}
\setlength{\abovedisplayshortskip}{2pt}
\setlength{\belowdisplayshortskip}{2pt}
\begin{align}
y_{\tau}(s)
&\in \arg\max_{y\in\cY(s)} \widehat V_{\beta}(s,y)
\quad \text{s.t.}\quad \widehat{\mathrm{RP}}_{\beta}(s,y) \le \tau,
\label{eq:entropic-constraint}\\
y_{\lambda}(s)
&\in \arg\max_{y\in\cY(s)} \widehat V_{\beta}(s,y) - \lambda\, \widehat{\mathrm{RP}}_{\beta}(s,y),
\label{eq:entropic-penalty}
\end{align}
\endgroup
where the penalty form is a Lagrangian relaxation.
If the feasible set $\{y\in\cY(s):\widehat{\mathrm{RP}}_{\beta}(s,y)\le\tau\}$ is empty, we fall back to
$\cY(s)$, consistent with Algorithm~\ref{alg:DARC-full}.
The constrained and penalized forms provide two equivalent ways to tune the reward--risk trade-off; their finite-candidate-set relationship is given in Appendix~\ref{app:lagrangian}.


\vspace{-0.5em}
\paragraph{Disagreement as a risk proxy: interpretation and limitations.}
We use disagreement as a human-centric signal of preference heterogeneity, and implement risk sensitivity through the KL-robust entropic value~\eqref{eq:entropic-decode} (a standard convex risk measure).
The mean--dispersion form arises only as a finite-sample surrogate (via LCB pessimism) or as a $\chi^2$-DRO special case, rather than being the defining principle of our method.

Importantly, we do \emph{not} claim that variance is a coherent risk measure (e.g., CVaR)~\citep{artzner1999coherent,rockafellar2000optimization} or that $\sigma$ is \emph{equivalent} to a tail-risk functional in full generality.
Instead, \S\ref{sec:lcb} shows that penalizing $\hat\sigma$ arises naturally from maximizing a statistically justified \emph{lower confidence bound} (LCB) on $\mu(s,y)$, yielding a principled \emph{pessimistic} selection rule under finite data.
Moreover, \S\ref{sec:dro} provides a complementary \emph{distributionally robust optimization} characterization~\citep{wiesemann2014distributionally,rahimian2019distributionally}.

\begin{table*}[t]
\centering
\small
\setlength{\tabcolsep}{2.5pt} 
\renewcommand{\arraystretch}{1.1}
\resizebox{\textwidth}{!}{
\begin{tabular}{l l cccc cccc @{\hspace{12pt}} cccc cccc}
\toprule
\multirow{3}{*}{\textbf{Type}} & \multirow{3}{*}{\textbf{Method}}
& \multicolumn{8}{c}{\textbf{Llama-3.1-8B-Instruct}}
& \multicolumn{8}{c}{\textbf{Qwen2.5-7B-Instruct}} \\
\cmidrule(lr){3-10} \cmidrule(lr){11-18}

& & \multicolumn{4}{c}{\textbf{MT-Bench}} & \multicolumn{4}{c}{\textbf{AlpacaEval 2.0}} 
& \multicolumn{4}{c}{\textbf{MT-Bench}} & \multicolumn{4}{c}{\textbf{AlpacaEval 2.0}} \\
\cmidrule(lr){3-6} \cmidrule(lr){7-10} \cmidrule(lr){11-14} \cmidrule(lr){15-18}

& & \textbf{Reward} & \textbf{Risk}$\downarrow$ & \textbf{Tradeoff}$\uparrow$ & \textbf{Len(tok)} 
  & \textbf{Reward} & \textbf{Risk}$\downarrow$ & \textbf{Tradeoff}$\uparrow$ & \textbf{Len(tok)}
  & \textbf{Reward} & \textbf{Risk}$\downarrow$ & \textbf{Tradeoff}$\uparrow$ & \textbf{Len(tok)} 
  & \textbf{Reward} & \textbf{Risk}$\downarrow$ & \textbf{Tradeoff}$\uparrow$ & \textbf{Len(tok)} \\
\midrule

\multicolumn{18}{l}{\textit{\textbf{Dataset: Overall}}} \\ \addlinespace[2pt]

\multirow{5}{*}{\textit{Inference}} 
& Base (Best-of-$K$)        & 4.53 & 6.50 & -8.47 & 315 & 4.52 & 7.46 & -10.40 & 331 & 4.09 & 3.54 & -2.99 & 244 & 3.29 & 3.11 & -2.93 & 256 \\
& BoP(HedgeTune)        & 4.18 & 5.86 & -7.54 & 316 & 4.12 & 6.24 & -8.36 & 330 & 3.84 & 3.10 & -2.36 & 231 & 3.14 & 2.68 & -2.22 & 261 \\
& Caution          & 4.09 & 5.43 & -6.77 & 317 & 4.27 & 5.91 &  -7.55 & 320 & 3.89 & 3.21 & -2.53 & 243 & 3.09 & 2.62 & -2.15 & 259 \\
& DeAL                      & 5.59 & 7.98 & -10.37 & 309 & 5.66 & 8.74 &  -11.81 & 311 & 4.64 & 4.22 & -3.80 & 256 & 4.02 & 3.74 & -3.46 & 267 \\
& MC-Dropout                & 4.41 & 5.78 & -7.15 & 315 & 4.37 & 6.31 &  -8.25 & 326 & 3.87 & 3.38 & -2.89 & 247 & 3.13 & 2.79 & -2.45 & 253 \\
& RBoN                      & 4.43 & 5.62 & -6.81 & 309 & 4.31 & 5.82 &  \second{-7.33} & 321 & 3.93 & 3.22 & -2.51 & 241 & 3.17 & 2.71 & -2.25 & 257 \\
& DARC(ours)          & 4.46 & \second{5.41} & \second{-6.36} & 315 & 4.25 &\second{ 5.67} &  \second{-7.33} & 319 & 4.01 & 3.14 & -2.27 & 243 & 3.14 & 2.66 & -2.18 & 256 \\
& DARC-$\tau$(ours)    & 4.42 & 5.45 & -6.48 & 313 & 4.49 & 6.21 &  -7.93 & 321 & 3.94 & \second{2.99} & \second{-2.04} & 232 & 3.12 & \second{2.48} & \second{-1.85} & 257 \\
& DARC-$\epsilon$(ours)  & 4.46 & \best{5.29} & \best{-6.12} & 314 & 4.38 & \best{5.60} &  \best{-6.82} & 319 & 3.96 & \best{2.96} & \best{-1.96} & 243 & 3.18 & \best{2.36} & \best{-1.54} & 261 \\

\cmidrule(lr){1-18} \addlinespace[2pt]

\multirow{4}{*}{\textit{Training}} 
& cDPO (Best-of-$K$)        & 5.88 & 5.96 & -6.04 & 300 & 5.86 & 6.97 & -8.08 & 309 & 8.61 & 5.76 & -2.91 & 258  & 5.30 & 6.63 & -7.96 & 278 \\
& cDPO + DARC-$\epsilon$     & 5.81 & 5.65 & -5.49 & 311 & 5.83 & 6.57 & -7.31 & 290  & 8.53 & 5.49 & -2.45 & 278  & 5.22 & 6.40 & -7.58 & 274 \\
& rDPO (Best-of-$K$)        & 5.53 & 5.57 & -5.61 & 260 & 11.62 & 7.10 & -2.58 & 316 & 8.73 & 5.94 & -3.15 & 294  & 5.77 & 6.90 & -8.03 & 282 \\
& rDPO + DARC-$\epsilon$    & 5.42 & 5.16 & -4.90 & 258 & 11.58 & 6.78 & -1.98 & 325 & 8.62 & 5.69 & -2.76 & 288  & 5.68 & 6.68 & -7.68 & 232 \\

\specialrule{0.1pt}{2pt}{0pt}
\specialrule{0.1pt}{1.5pt}{0pt} 
\multicolumn{18}{l}{\textit{\textbf{Dataset: High-Variance (Top 20\%)}}} \\ \addlinespace[2pt]

\multirow{5}{*}{\textit{Inference}} 
& Base (Best-of-$K$)       & 5.79 & 9.91 & -14.03 & 382 & 7.48 & 10.22 & -12.96 & 381 & 5.12 & 6.00 &  -7.12 & 308 &  2.79 & 5.46 & -8.13 & 263 \\
& BoP(HedgeTune)       & 5.45 & 7.90 & -10.35 & 382 & 7.28 & \second{8.12} & \second{-8.96} & 381 & 4.67 & 5.12 &  -5.57 & 307 &  2.53 & 4.09 & -5.65 & 265 \\
& Caution         & 5.41 & 7.42 & -9.43  & 382 & 7.17 &  8.34 & -9.51  & 381 & 4.80 & 5.39 &  -5.98 & 307 &  2.30 & 4.07 & -5.84 & 264 \\
& DeAL                     & 5.96 & 9.71 & -13.46  & 374 & 7.89 &  10.15 & -12.41 & 371 & 5.75 & 8.43 &  -11.11 & 316 &  3.45 & 6.89 & -10.33 & 256 \\
& MC-Dropout               & 5.20 & 7.48 & -9.76  & 384 & 7.19 &  8.15 & -9.11  & 380 & 4.75 & 5.25 &  -5.25 & 306 &  2.35 & 4.00 & -5.65 & 262 \\
& RBoN                     & 5.28 & \second{7.30} & \second{-9.32}  & 372 & 7.22 &  8.40 & -9.58  & 376 & 4.90 & 5.45 &  -6.00 & 300 &  2.25 & 4.15 & -6.05 & 258 \\
& DARC(ours)           & 5.32 & 7.33 & -9.34  & 382 & 7.24 &  \best{8.10} & \second{-8.96}  & 381 & 4.99 & 5.05 &  -5.13 & 307 &  2.34 & 4.42 & -6.50 & 263 \\
& DARC-$\tau$(ours)    & 5.43 & 7.90 & -10.37 & 382 & 7.21 &  8.55 & -9.89  & 381 & 4.78 & \second{4.85} &  \second{-4.92} & 294 &  2.31 & \second{3.84} & \second{-5.37} & 249 \\
& DARC-$\epsilon$(ours)  & 5.49 & \best{7.00} & \best{-8.51} & 382 & 7.26 &  \best{8.10} & \best{-8.94}  & 381 & 5.03 & \best{4.77} &  \best{-4.51} & 306 &  2.59 & \best{3.76} & \best{-4.93} & 265 \\

\cmidrule(lr){1-18} \addlinespace[2pt]

\multirow{4}{*}{\textit{Training}} 
& cDPO (Best-of-$K$)        & 7.54 & 8.39 & -9.24 & 312 & 8.88 & 9.40 & -9.92 & 328 & 5.52 & 5.67 &  -5.82 & 317 & 7.17 & 7.30 & -7.43 & 196 \\
& cDPO + DARC-$\epsilon$     & 7.51 & 8.11 & -8.71 & 323 & 8.81 & 9.21 & -9.61 & 328 & 5.42 & 5.17 &  -4.92 & 316 & 7.12 & 7.29 & -7.46 & 196 \\
& rDPO (Best-of-$K$)        & 6.58 & 7.98 & -9.38 & 337  & 11.60 & 9.05 & -6.50 & 347 & 5.79 & 5.34 &  -4.89 & 304  & 7.80 & 7.72 & -7.64 & 172 \\
& rDPO + DARC-$\epsilon$     & 6.41 & 7.16 & -7.91 & 316 & 11.56 & 8.65 & -5.74 & 329 & 5.56 & 5.03 &  -4.50 & 301  & 7.70 & 7.28 & -6.86 & 164 \\

\bottomrule
\end{tabular}
}
\caption{\textbf{Evaluation on MT-Bench and AlpacaEval 2.0.} Results across two generators families . We report mean reward , proxy risk (Risk), risk--reward tradeoff score (Tradeoff), and average length (Len(tok)).Blue/green highlight best/runner-up.}
\label{tab:main_results_llama_qwen}
\vspace{-20pt}
\end{table*}

\subsection{Practical decoding: $\epsilon$-tie breaking}
\label{sec:eps}
\begingroup
\setlength{\parskip}{0pt}
\setlength{\abovedisplayskip}{3pt}
\setlength{\belowdisplayskip}{3pt}
\setlength{\abovedisplayshortskip}{2pt}
\setlength{\belowdisplayshortskip}{2pt}
\noindent\textbf{Proxy robustness.}
When $(\hat\mu,\hat\sigma)$ are obtained from scalable proxy scorers rather than i.i.d.\ human samples, a uniform proxy-error assumption yields a robust LCB guarantee; we defer the formal statement and proof to Appendix~\ref{app:proxy-robust}.

\paragraph{$\epsilon$-rule as a constrained optimization.}
Our $\epsilon$-tie-breaking is equivalent to
\begin{equation}
y_{\epsilon}(s)\in \arg\min_{y\in \mathcal{Y}(s)} \widehat\sigma(s,y)
\quad \text{s.t.}\quad
\widehat V_{\beta}(s,y)\ge \widehat V_{\max}(s)-\epsilon,
\label{eq:eps-constrained-entropic}
\end{equation}
where $\widehat V_{\max}(s):=\max_{y\in\cY(s)}\widehat V_{\beta}(s,y)$.

\paragraph{Pareto view.}
Eq.~\eqref{eq:eps-constrained-entropic} selects a Pareto-optimal point of the robust-value--disagreement trade-off: among candidates whose robust value is within $\epsilon$ of the best, it returns the least controversial one.
Moreover, when the chosen point is \emph{supported}, the same solution can be written as a scalarization
$\arg\max_y \big(\widehat V_{\beta,\gamma}(s,y)-\lambda_\epsilon \widehat\sigma(s,y)\big)$ for some $\lambda_\epsilon\ge 0$;
formal statements and proofs are deferred to Appendix~\ref{app:eps-pareto}.
\endgroup
\paragraph{Two-stage rule.}
To avoid sacrificing entropic value excessively for marginal robustness, we use:
\begin{equation}
\begin{aligned}
\mathcal{F}_\epsilon(s) &:= \{y\in\cY(s) \mid \widehat V_{\beta}(s,y)\ge \widehat V_{\max}(s)-\epsilon\}, \\
y_{\epsilon}(s) &\in \arg\min_{y\in\mathcal{F}_\epsilon(s)} \widehat\sigma(s,y).
\end{aligned}
\end{equation}
This approximates the scalarized decoder $\max_y \big(\widehat V_{\beta}(s,y)-\lambda\,\widehat\sigma(s,y)\big)$ by enforcing near-optimality in entropic value and then selecting the least controversial candidate.
In practice, we tune all risk parameters ($\beta, \tau, \epsilon$) on a held-out development set; detailed tuning protocols are provided in Appendix~\ref{app:knob_selection}.

\subsection{Multi-scorer robustness}
\paragraph{Scorer-robust decoding (multi-reward models).}
In scalable proxy settings, satisfaction scores may come from learned reward models and be sensitive to model shift or proxy over-optimization.
We optionally hedge this \emph{scorer ambiguity} by using a family of $M$ scorers indexed by $m\in[M]$.
For each candidate $y\in\cY(s)$, scorer $m$ provides $n$ scalar samples $\{R_{m,i}(s,y)\}_{i=1}^n$ (e.g., via perturbations), from which we compute the scorer-specific entropic value $\widehat V_{\beta,m}(s,y)$ and risk premium $\widehat{\mathrm{RP}}_{\beta,m}(s,y)$.
Because reward scales can differ across scorers, we apply a per-prompt affine normalization (Appendix~\ref{app:multi_scorer}).
We then aggregate scorer-specific entropic values using a soft worst-case operator:
\begin{equation}
\widetilde V_{\beta,\gamma}(s,y)
:=
-\frac{1}{\gamma}\log\!\left(\frac{1}{M}\sum_{m=1}^M \exp\!\big(-\gamma\,\widehat V_{\beta,m}(s,y)\big)\right),
\label{eq:scorer-robust-V}
\end{equation}

and aggregate risk premia pessimistically by $\widetilde{\mathrm{RP}}_{\beta}(s,y):=\max_{m\in[M]}\widehat{\mathrm{RP}}_{\beta,m}(s,y)$.
We decode by maximizing $\widetilde V_{\beta,\gamma}(s,y)$ subject to $\widetilde{\mathrm{RP}}_\beta(s,y)\le\tau$ (or its penalized form).

\begin{proposition}[Scorer aggregation as KL-regularized DRO over scorers]
\label{prop:scorer-dro}
Let $v_m := \widehat V_{\beta,m}(s,y)$ and $\mathbf{v}\in\mathbb{R}^M$.
Define $\mathbf{u}=(1/M,\ldots,1/M)$ and the simplex $\Delta_M:=\{\mathbf{q}\in\mathbb{R}^M_+:\sum_{m=1}^M q_m=1\}$.
Then the soft worst-case aggregation \eqref{eq:scorer-robust-V} admits the variational form
\begingroup
\setlength{\abovedisplayskip}{3pt}
\setlength{\belowdisplayskip}{3pt}
\setlength{\abovedisplayshortskip}{1pt}
\setlength{\belowdisplayshortskip}{1pt}
\begin{equation}
\widetilde V_{\beta,\gamma}(s,y)
=
\inf_{\mathbf{q}\in\Delta_M}
\left\{
\sum_{m=1}^M q_m v_m
+
\frac{1}{\gamma} D_{\mathrm{KL}}(\mathbf{q}\|\mathbf{u})
\right\}.
\label{eq:scorer-dro}
\end{equation}
\endgroup
\end{proposition}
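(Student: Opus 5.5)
The identity is the finite-dimensional case of the Donsker--Varadhan / Gibbs variational principle. One route is to view it as an instance of Theorem~\ref{thm:kl-entropic}. Take the reference distribution $\mathbb{P}=\mathbf{u}$ on the index set $[M]$, the random variable $R(m)=v_m$, and the parameter $\gamma$ in place of $\beta$. Any $\mathbf{q}\in\Delta_M$ is automatically absolutely continuous with respect to $\mathbf{u}$, since $\mathbf{u}$ has full support, and $\mathbb{E}_{\mathbf{u}}[\exp(-\gamma R)]<\infty$ trivially. Theorem~\ref{thm:kl-entropic} then gives the infimum as $-\gamma^{-1}\log\big(\frac1M\sum_m e^{-\gamma v_m}\big)$, which is exactly \eqref{eq:scorer-robust-V}. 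The proof reduces to this one-line invocation. I would still write out a self-contained finite-dimensional argument, both for completeness and to exhibit the minimizer.

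\textbf{Step 1 (Gibbs weights).} Set $Z:=\frac1M\sum_m e^{-\gamma v_m}>0$ and define the Gibbs weights $q^\star_m:=\frac{1}{M}e^{-\gamma v_m}/Z$, which lie in $\Delta_M$. Then $\log(q_m^\star/u_m)=-\gamma v_m-\log Z$.

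\textbf{Step 2 (rewrite the objective).} For any $\mathbf{q}\in\Delta_M$, adopt the convention $0\log 0=0$ and write
\[
\sum_m q_m v_m+\tfrac1\gamma D_{\mathrm{KL}}(\mathbf{q}\|\mathbf{u})
=\tfrac1\gamma\sum_m q_m\log\frac{q_m}{u_m e^{-\gamma v_m}}
=\tfrac1\gamma D_{\mathrm{KL}}(\mathbf{q}\|\mathbf{q}^\star)-\tfrac1\gamma\log Z .
\]
The first equality absorbs $q_m v_m$ into the logarithm as $-\frac{1}{\gamma}q_m\log e^{-\gamma v_m}$. The second uses $u_m e^{-\gamma v_m}=Z q_m^\star$ together with $\sum_m q_m=1$.

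\textbf{Step 3 (conclude).} By Gibbs' inequality, $D_{\mathrm{KL}}(\mathbf{q}\|\mathbf{q}^\star)\ge 0$, with equality if and only if $\mathbf{q}=\mathbf{q}^\star$. Hence the infimum is attained at $\mathbf{q}^\star$ and equals $-\gamma^{-1}\log Z=\widetilde V_{\beta,\gamma}(s,y)$.

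There is no real obstacle here. The only points needing care are the boundary of the simplex, where $q_m=0$ terms vanish by convention and do not break the identity in Step 2, and noting that $\mathbf{q}^\star$ has full support, so the minimizer is interior. As a byproduct, I would remark that the worst-case scorer weights are softmin weights, $q^\star_m\propto e^{-\gamma v_m}$. I would also note the limits: $\gamma\to\infty$ recovers $\min_m v_m$, and $\gamma\to 0$ recovers the average $\frac1M\sum_m v_m$.
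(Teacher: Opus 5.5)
Your proposal is correct. The paper gives no separate proof of this proposition. It is implicitly the finite-dimensional case of Theorem~\ref{thm:kl-entropic}, with reference $\mathbf{u}$ on $[M]$, $R(m)=v_m$, and $\gamma$ in place of $\beta$, which is exactly your one-line reduction.

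Where you differ is in the self-contained verification. The paper's proof of Theorem~\ref{thm:kl-entropic} uses a Lagrangian first-order condition to identify the Gibbs density $f^\star\propto e^{-\beta R}$ as a stationary point, then substitutes it back. It leaves implicit why that stationary point is the global minimizer: this needs convexity of $\mathbf{q}\mapsto\sum_m q_m v_m+\gamma^{-1}D_{\mathrm{KL}}(\mathbf{q}\|\mathbf{u})$, and the nonnegativity constraint must be shown to be inactive. Your identity $\sum_m q_m v_m+\gamma^{-1}D_{\mathrm{KL}}(\mathbf{q}\|\mathbf{u})=\gamma^{-1}D_{\mathrm{KL}}(\mathbf{q}\|\mathbf{q}^\star)-\gamma^{-1}\log Z$ settles attainment, global optimality and uniqueness in one step, without calculus. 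Boundary points of $\Delta_M$ are handled by the $0\log 0=0$ convention. So your argument is the more rigorous of the two.

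Your variational form also gives the paper's follow-up claims directly:
\begin{itemize}
\item $\sum_m q_m v_m\ge\min_m v_m$ together with $D_{\mathrm{KL}}\ge 0$ gives the lower bound $\min_m v_m$.
\item Taking $\mathbf{q}=\mathbf{u}$ gives the upper bound $\frac1M\sum_m v_m$.
\item For each fixed $\mathbf{q}$ the objective is non-increasing in $\gamma$, so the infimum is non-increasing in $\gamma$ as well.
\end{itemize}
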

\paragraph{Interpretation.}
Eq.~\eqref{eq:scorer-dro} shows that $\gamma$ interpolates between averaging across scorers ($\gamma\!\to\!0$) and a worst-case scorer ($\gamma\!\to\!\infty$), yielding a principled hedge against scorer shift. In particular, $\min_m v_m \le \widetilde V_{\beta,\gamma}(s,y) \le \frac{1}{M}\sum_m v_m$, and $\widetilde V_{\beta,\gamma}(s,y)$ is non-increasing in $\gamma$.

\section{Experiment}
\label{sec:experiments}
\subsection{Experimental Setup and Implementation Details}
\label{sec:exp_setup}
\paragraph{Candidate sets and evaluation protocol.}
For each prompt $s$, we generate a fixed candidate pool $\cY(s)$ of size $K$ (shared across all methods),
so differences arise solely from the inference-time \emph{selection rule}.
In the human-grounded setting, for each $y\in\cY(s)$ we collect $n$ scalar satisfaction ratings
$\{r_i(s,y)\}_{i=1}^n$~\citep{christiano2017deep,stiennon2020learning,ouyang2022training},
and split annotators into disjoint selection and evaluation sets
$\mathcal{I}_{\text{sel}}$ and $\mathcal{I}_{\text{eval}}$ to avoid selection--evaluation leakage.
All decoders operate on statistics computed from $\mathcal{I}_{\text{sel}}$.
Our entropic decoder uses the empirical entropic value in the human-grounded setting (where $M=1$ and $\widetilde V_{\beta,\gamma}=\widehat V_\beta$),
and uses the scorer-robust aggregated value $\widetilde V_{\beta,\gamma}(s,y)$ (and $\widetilde{\mathrm{RP}}_{\beta}(s,y)$) in proxy settings with multiple reward models.
Second-moment baselines use only $(\hat\mu_{\text{sel}}(s,y),\hat\sigma_{\text{sel}}(s,y))$ (or their scorer-indexed analogues under proxies).
All reported metrics are computed on held-out ratings in $\mathcal{I}_{\text{eval}}$ (definitions in Appendix~\ref{app:metrics}).

\paragraph{Estimating mean and disagreement.}
We define $(\hat\mu,\hat\sigma)$ as the mean and disagreement estimates, derived from either empirical samples or scorer proxies .
DARC is \emph{inference-time only}, modifying the selection over $\cY(s)$ without retraining.
While our theory assumes multi-annotator feedback, we validate proxy uncertainty as a scalable approximation for human disagreement.
\paragraph{Proxies, baselines, and hyperparameters.}
When multi-rater human scores are unavailable, we use scalable proxy scorers (e.g., ensembles / bootstrap reward models)
to obtain a proxy mean together with either (i) a score distribution for computing $\widehat V_\beta$, or
(ii) a proxy disagreement diagnostic $\hat\sigma_{\text{proxy}}$~\citep{kendall2017uncertainties,lakshminarayanan2017simple};
we validate proxy--human alignment via rank correlation and stratified analyses~\citep{uma2021learning}.
Inference-time baselines include mean Best-of-$K$,
inference-time hedging methods for reward hacking (Best-of-Poisson and HedgeTune)~\citep{khalaf2025inference},
pessimistic best-of-$N$ reranking via an auxiliary error/atypicality model (\emph{Caution})~\citep{mitigatingiclr26},
uncertainty-based reranking (e.g., MC-Dropout)~\citep{gal2016dropout,kendall2017uncertainties},
reward-regularized decoding (RBoN)~\citep{jinnai2024regularized} and \emph{DeAL} (top-$k$ lookahead reward-guided decoding)~\citep{huang2025deal};
training-time baselines include cDPO/rDPO~\citep{mitchell2023note,chowdhury2024provably}.
We report default settings and full tuning ranges (including our LCB- and DRO-instantiated decoding rules),
along with complete baseline configurations and implementation details,
in Appendix~\ref{app:hparams} (see also Appendix~\ref{app:decoding_rules}).

\paragraph{Perturbation families.}
Our default proxy uses style-preserving perturbations, but such rewrites may
under-detect surface-form reward-model artifacts when the rewarded cue itself is
preserved. We therefore additionally evaluate targeted perturbations that
normalize verbosity, formatting, and sycophantic/apologetic phrasing, as well as
a hybrid proxy combining style-preserving and targeted rewrites. Appendix~\ref{app:perturbation_families}
reports both proxy--human alignment and downstream DARC-$\epsilon$ robustness
under these perturbation families.

\paragraph{Inference overhead.}
Disagreement estimation adds only a modest inference-time cost: in our implementation, using $N_{\text{aug}}=4/8/16$ increases end-to-end latency by only $\approx 1.5\%/2.0\%/3.2\%$, since candidate generation dominates runtime (Appendix~\ref{app:latency}, Table~\ref{tab:latency}).

\paragraph{Metrics (mean, risk, and tail).}
For each prompt $s$ and candidate response $y$, we evaluate rewards on a held-out rater set $\mathcal{I}_{\text{eval}}$ and report the empirical mean $\hat\mu_{\text{eval}}(s,y)$.
As a scalable proxy for robustness, we compute a perturbation-sensitivity
statistic $\hat{\sigma}_{\mathrm{sel}}(s,y)$ as the standard deviation of
rewards across $N_{\mathrm{aug}}$ perturbations of $y$ from the specified
perturbation family. The main experiments use the style-preserving default,
while Appendix~\ref{app:proxy-robust}evaluates targeted and hybrid perturbation families.
Human evaluation details are in Appendix~\ref{app:human-eval}; the human mean is the average rating and the human variance is the sample variance across raters.
We summarize performance via $\mathrm{Tradeoff}_{\text{eval}}(s,y):=\hat\mu_{\text{eval}}(s,y)-\lambda\,\hat\sigma_{\text{sel}}(s,y)$, using the same $\lambda$ for all methods (Appendix~\ref{app:tradeoff-weight}).
We also report $\mathrm{CVaR}_{10\%}$ over prompts~\citep{chow2015risk}; see Appendix~\ref{app:metrics}. The Tradeoff score is partially aligned with the selection objective, so we use
it as a supporting diagnostic rather than as the primary evidence. Our main
external validation relies on held-out human satisfaction and prompt-level
CVaR.

\begin{table}
\centering
\small
\resizebox{\columnwidth}{!}{
\begin{tabular}{l cccc}
\toprule
\textbf{Method} & \textbf{Human Score} $\uparrow$ & \textbf{Risk} (Avg$\sigma$)$\downarrow$ & \textbf{Tradeoff}$\uparrow$ & \textbf{CVaR$_{10\%}$}$\uparrow$ \\
\midrule
\multicolumn{5}{l}{\textit{Dataset: Overall}} \\
Base (Best-of-$K$)   & 7.56 & 0.67 & 6.22 & 6.73 \\
BoP(HedgeTune)       & 7.79 & 0.63 & 6.53 & 7.12 \\
Caution     & 7.82 & 0.61 & 6.60 & 7.15 \\
DeAL                 & 7.83 & 0.72 & 6.39 & 7.08 \\
MC-Dropout           & 7.73 & 0.64 & 6.45 & 7.19 \\
RBoN                 & 7.71 & 0.62 & 6.47 & 7.26 \\
DARC(ours)        & \second{7.84} & \second{0.60} & 6.64 & 7.36 \\
DARC-$\tau$(ours)     & 7.83 & 0.58 & \second{6.67} & \second{7.46} \\
DARC-$\epsilon$(ours) &  \best{8.08} &  \best{0.55} &  \best{6.98} &  \best{7.62} \\
\addlinespace[0.25em]
\cmidrule(lr){1-5}
\addlinespace[0.25em]

cDPO (Best-of-$K$)   & 7.80 & 0.65 & 6.50 & 7.11 \\
cDPO + DARC-$\epsilon$ & 8.03 & 0.54 & 6.95 & 7.41 \\
rDPO (Best-of-$K$)   & 8.17 & 0.69 & 6.79 & 7.20 \\
rDPO + DARC-$\epsilon$ & 8.15 & 0.58 & 6.99 & 7.60 \\

\midrule
\multicolumn{5}{l}{\textit{Dataset: High-Disagreement (Top 20\%)}} \\
Base (Best-of-$K$)   & 7.62 & 1.00 & 5.62 & 6.10 \\
BoP(HedgeTune)       & 7.83 & 0.87 & 6.09 & 6.81 \\
Caution              & 7.81 & 0.85 & 6.11 & 6.94 \\
DeAL                 & 7.99 & 1.13 & 5.73 & 7.33 \\
MC-Dropout           & 7.88 & 0.84 & 6.20 & 7.25 \\
RBoN                  & 7.95 & 0.81 & 6.33 & 7.29 \\
DARC(ours)        & \second{8.18} & 0.74 & 7.00 & \second{7.43} \\
DARC-$\tau$(ours)     & 8.11 &\second{ 0.65} & \second{6.81} & 7.35 \\
DARC-$\epsilon$(ours) & \best{8.34} & \best{0.65} & \best{7.34} & \best{7.60} \\
\addlinespace[0.25em]
\cmidrule(lr){1-5}
\addlinespace[0.25em]

cDPO (Best-of-$K$)   & 7.96 & 0.89 & 6.18 & 7.15 \\
cDPO + DARC-$\epsilon$ & 8.31 & 0.63 & 7.05 & 7.54 \\
rDPO (Best-of-$K$)   & 8.25 & 0.91 & 6.43 & 7.02 \\
rDPO + DARC-$\epsilon$ & 8.72 & 0.67 & 7.38 & 7.48 \\
\bottomrule
\end{tabular}
}
\caption{\textbf{Human Evaluation Results.}
We report Human Mean Score, Disagreement Risk, Tradeoff, and CVaR$_{10\%}$ (worst $10\%$ prompt outcomes).
}
\label{tab:humanloop-concise}
\vspace{-1.5em}
\end{table}

\subsection{Results}

\paragraph{Automated proxy evaluation.}Table~\ref{tab:main_results_llama_qwen} reports MT-Bench~\citep{zheng2023judging} and AlpacaEval2.0~\citep{li2023alpacaeval,dubois2024length} results under two instruction-tuned generators (Llama-3.1-8B-Instruct~\citep{dubey2024llama}, Qwen2.5-7B-Instruct~\citep{qwen2025qwen25technicalreport}).
We evaluate each method by mean reward $\hat\mu$, proxy disagreement risk $\hat\sigma$ (reward sensitivity to style-preserving perturbations), and a risk--reward tradeoff score $\mathrm{Tradeoff}=\hat\mu-\lambda\hat\sigma$ with the same $\lambda$ across methods; we also report \textbf{Len(tok)} to control for length bias.
Across both generators, inference-time risk-aware selection improves robustness: DARC variants reduce $\hat\sigma$ while keeping $\hat\mu$ competitive, yielding higher $\mathrm{Tradeoff}$ than mean-only Best-of-$K$, with larger gains on the high-variance subset (top 20\% prompts by baseline $\hat\sigma$) and broadly stable \textbf{Len(tok)}.
Training-time robust policies (cDPO/rDPO) exhibit different trade-offs, and DARC remains complementary: applying DARC-$\epsilon$ as an inference-time plug-in can further reduce $\hat\sigma$ and improve $\mathrm{Tradeoff}$ in several cases, supporting a modular view where training shapes the policy while DARC calibrates risk on a fixed candidate set.

\begin{figure}
    \includegraphics[width=1.0\linewidth]{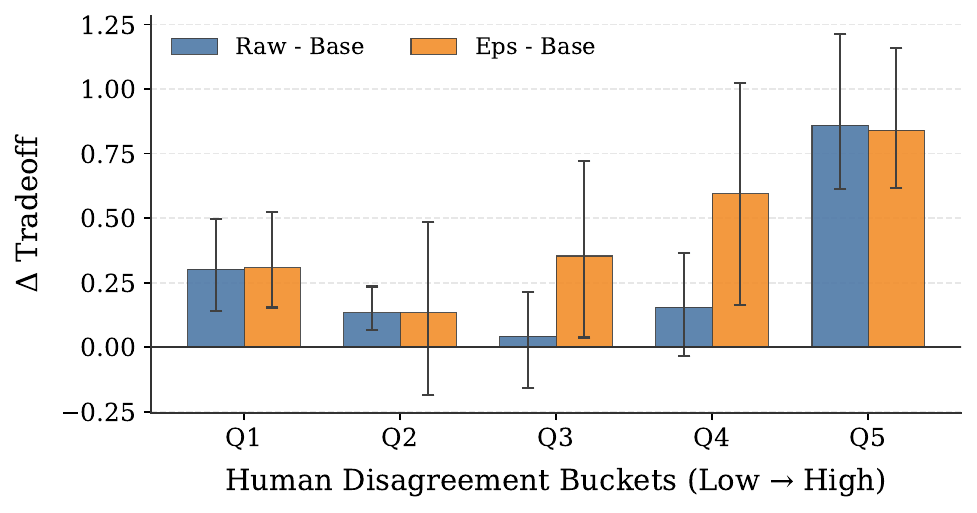}
    \vspace{-18pt}
    \caption{\textbf{Gains concentrate on high-disagreement prompts.}
    Mean improvement in lower-tail satisfaction ($\Delta$Tradeoff vs.\ base) across five prompt buckets ranked by baseline human disagreement $\hat\sigma$ (low$\rightarrow$high). Error bars denote 95\% CIs.}
    \label{fig:bucket_dtradeoff}
\end{figure}

\paragraph{Human-loop evaluation and tail robustness.}
We then close the loop with multi-annotator ratings on MT-Bench. 
Table~\ref{tab:humanloop-concise} shows that DARC improves risk-sensitive criteria (Tradeoff and CVaR$_{10\%}$) while maintaining competitive mean satisfaction.
On the high-disagreement subset, disagreement-aware decoding yields substantial gains in tail robustness (CVaR$_{10\%}$) and conservative quality (Tradeoff), indicating that risk-controlled selection is particularly beneficial on genuinely controversial prompts.
Figure~\ref{fig:bucket_dtradeoff} demonstrates that Risk-adjusted satisfaction($\Delta$Tradeoff) scale positively with disagreement, rising steadily from Q1 to Q5, peaking in the most controversial buckets (Q5). This confirms disagreement as an effective signal for allocating risk control, a trend corroborated by the CVaR results in Appendix Fig.~\ref{fig:bucket_dcvar}.(See representative cases in Appendix~\ref{app:positive_case}).)

\begin{table}[t]
\centering
\small
\caption{Bucketed human evaluation versus Base by baseline human disagreement.
DARC-$\epsilon$ remains close to mean-based selection in low-disagreement
prompts and becomes increasingly beneficial as disagreement grows.}
\label{tab:bucket_human}
\begin{tabular}{lcc}
\toprule
Bucket & Human Score $\Delta$ (DARC-$\epsilon$ -- Base) & W / T / L \\
\midrule
Q1 & 0.125 & 11 / 80 / 9 \\
Q2 & 0.236 & 17 / 71 / 12 \\
Q3 & 0.451 & 34 / 55 / 11 \\
Q4 & 0.755 & 42 / 50 / 8 \\
Q5 & 1.037 & 51 / 45 / 4 \\
\bottomrule
\end{tabular}
\end{table}
The low-disagreement bucket clarifies the apparent near-zero
$\Delta$Tradeoff in Figure~\ref{fig:bucket_dtradeoff}. As shown in
Table~\ref{tab:bucket_human}, Q1 does not exhibit broad degradation: the mean
human-score delta remains slightly positive and outcomes are dominated by ties.
The gains increase monotonically from Q2 to Q5, suggesting that DARC provides
selective robustness where disagreement is substantive while staying close to
mean-based selection when disagreement is weak.
\vspace{-0.5em}
\paragraph{Validity of the disagreement proxy.}
We validate our perturbation-sensitivity proxy $\hat{\sigma}$ against human disagreement measured by multiple independent rater scores on the same (prompt, response) pair.
To avoid selection effects, we compute both proxy and human disagreement on the \emph{baseline} candidate.
As summarized in Figure~\ref{fig:proxy_validity_combined}, proxy disagreement shows a statistically significant rank-level consistency with human disagreement and remains positively associated even after controlling for mean reward and response length (strong and common confounders for RM-based signals).
Moreover, the proxy serves as an effective \emph{risk filter} for identifying prompts likely to exhibit \emph{preference heterogeneity}, where risk-controlled decoding is designed to intervene: prompts identified as high-disagreement by the proxy substantially overlap with those identified by humans, far beyond random chance (Fig.~\ref{fig:proxy_validity_combined}).
While mismatch cases exist (App.~\ref{app:proxy_mismatch})---often reflecting \emph{surface-form sensitivity} (FP) or \emph{orthogonal validity/completeness issues} (FN)---this is consistent with using $\hat{\sigma}$ as a scalable \emph{screening} signal to prioritize human verification rather than a fully calibrated estimator of disagreement magnitude.
Consistently, this signal effectively allocates risk control: bucketed analysis shows a monotonic increase in $\Delta\mathrm{Tradeoff}$ from low- to high-disagreement groups when bucketing by either human or proxy disagreement (App.~\ref{app:proxy_validity}).
We also stratify prompts by proxy--human disagreement alignment and find that, even in the worst-alignment bucket, DARC still achieves the best Tradeoff among all baselines (Appendix~\ref{app:proxy_reliability_bucket}, Table~\ref{tab:error_bucket_tradeoff_multi}).

{\setlength{\intextsep}{10pt}
\begin{figure}[H]
  \centering
  \small
  \resizebox{\columnwidth}{!}{
  \begin{tabular}{lcc}
    \toprule
    \textbf{Metric} & \textbf{Value} & \textbf{Notes / $p$-value} \\
    \midrule
    Spearman $\rho$  & 0.6509 & 95\% boot.\ CI [0.42, 0.70]; $p < 10^{-10}$ \\
    Kendall $\tau$   & 0.3130 & $p < 10^{-6}$ \\
    Partial Spearman $\rho$ \scriptsize{(control $\hat{\mu}$, Len)} & 0.4084 & 95\% boot.\ CI [0.25, 0.45] \\
    \midrule
    Precision@20\% \scriptsize{(Top-$k$ overlap)} & 0.64 & (64/100;\ random=0.20) \\
    Recall@20\%                           & 0.64 & (symmetric at equal $k$) \\
    Jaccard@20\%                          & 0.47 & ($|A\cap B|/|A\cup B|$) \\
    Overlap significance              & 64 & hypergeom $p = 2.6\times 10^{-29}$ \\
    \bottomrule
  \end{tabular}
  }

  \includegraphics[width=0.49\columnwidth]{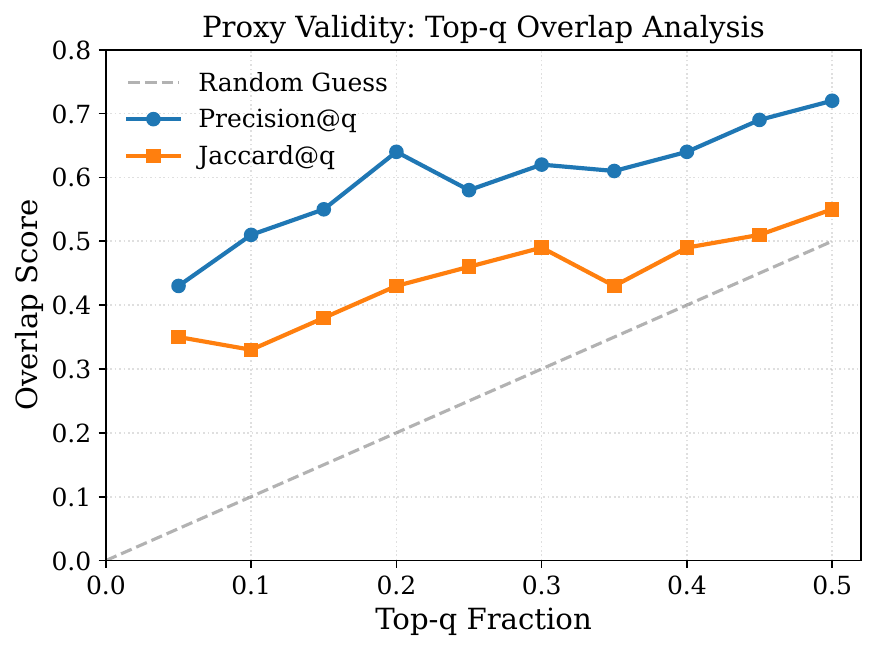}\hfill
  \includegraphics[width=0.49\columnwidth]{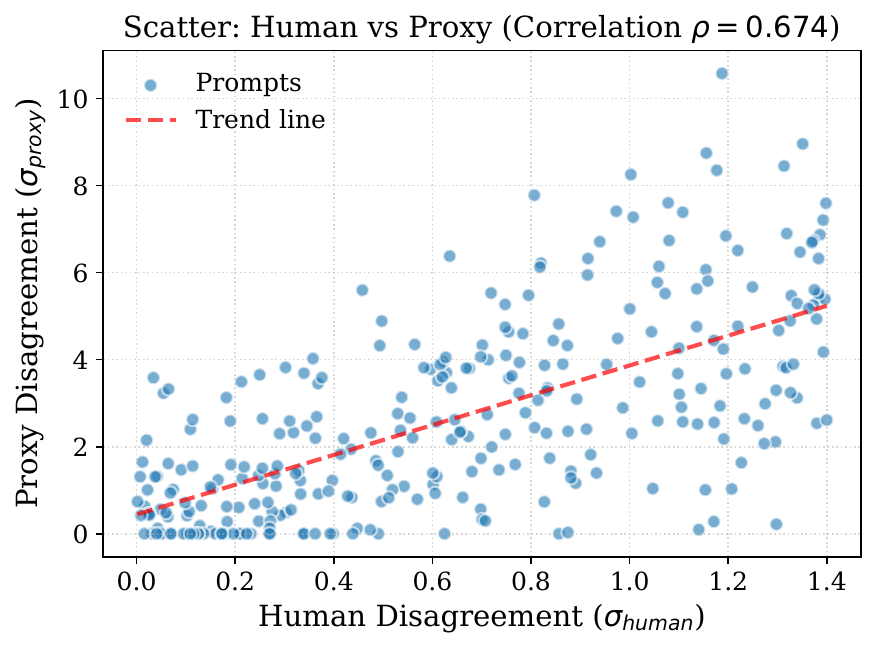}

    \caption{\textbf{Proxy validity diagnostics.}
    \textbf{(Top)} Rank correlation between proxy and human disagreement, with top-20\% overlap.
    \textbf{(Bottom)} Top-$q$ overlap (Left) and proxy vs.\ human disagreement scatter (Right).}
  \label{fig:proxy_validity_combined}
\end{figure}
}

\begin{figure*}
    \centering
    \includegraphics[width=\textwidth]{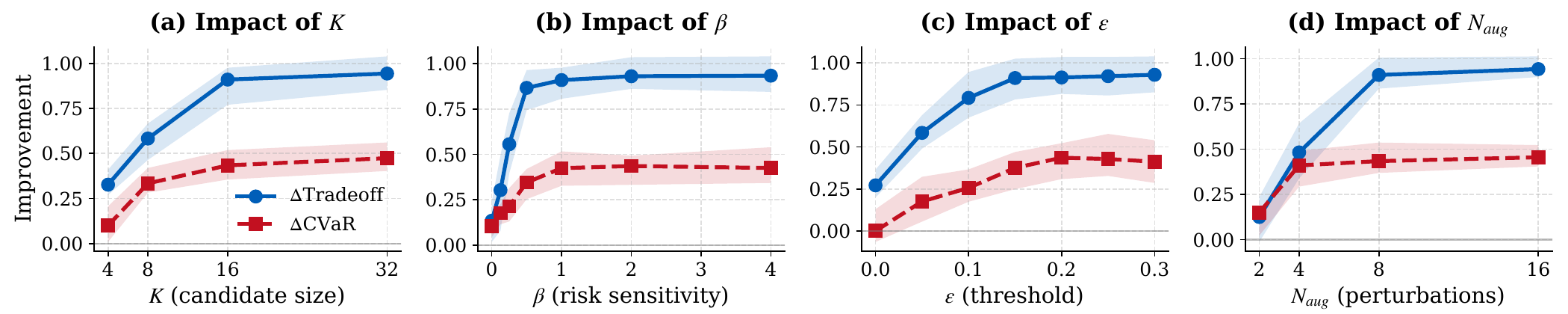}
    \caption{\textbf{Ablation Studies.} Impact of key hyperparameters on risk mitigation performance. 
    \textbf{(a)} Candidate pool size $K$.
    \textbf{(b)} Risk sensitivity coefficient $\beta$.
    \textbf{(c)} Constraint threshold $\epsilon$.
    \textbf{(d)} Perturbation budget $N_{\text{aug}}$.}
    \label{fig:ablations}
    \vspace{-18pt} 
\end{figure*}
\paragraph{Multi-scorer robust decoding}
To mitigate proxy over-optimization to a single reward model, we instantiate our decoder with a scorer family of size $M=3$:
RM1 (Skywork-reward-llama-3.1-8b~\citep{liu2024skywork}), RM2 (nicholasKluge RewardModel~\citep{nicholas22aira}) and RM3(OpenAssistant (DeBERTa-v3-Large-v2)~\citep{kopf2023openassistant}).
At selection time, we compute $\widehat V_{\beta,m}(s,y)$ for each $m\in[M]$ (with within-prompt normalization),
aggregate them into $\widetilde V_{\beta,\gamma}(s,y)$ via~\eqref{eq:scorer-robust-V},
and apply the same constrained/penalized rule using $\widetilde{\mathrm{RP}}_\beta$ in~\eqref{eq:scorer-robust-RP-app}.
For reporting, because absolute reward scales differ across RMs, we evaluate the selected outputs within each RM
and report only within-RM differences to the baseline (mean $\Delta$) and Win/Tie/Loss rates (Table~\ref{tab:multi_scorer_main}).We additionally provide a per-scorer breakdown of win/tie/loss and mean score differences under each evaluator reward model in Appendix~\ref{app:per_scorer_breakdown}, confirming consistent gains across scorers.

\begin{table}
\centering
\small
\resizebox{\columnwidth}{!}{
\begin{tabular}{l cc cc}
\toprule
& \multicolumn{2}{c}{\textbf{Overall}} & \multicolumn{2}{c}{\textbf{High-$\hat\sigma$ Subset}} \\
\cmidrule(lr){2-3} \cmidrule(lr){4-5}
\textbf{Method} & \textbf{W/T/L} & \textbf{Mean $\Delta$} & \textbf{W/T/L} & \textbf{Mean $\Delta$} \\
\midrule
\multicolumn{5}{l}{\textbf{Across scorers (aggregated)}} \\
\midrule 
DARC & 278 / 173 / 49 & 0.124 & 32 / 63 / 5 & 0.303 \\ 
DARC-$\tau$ & 257 / 192 / 51 & 0.128 & 36 / 60 / 4 & 0.259 \\
DARC-$\epsilon$ & 296 / 168 / 36 & 0.161 & 44 / 54 / 2 & 0.386 \\
\bottomrule
\end{tabular}
}
\caption{\textbf{Multi-scorer robust decoding.}
DARC variants vs.\ Base (mean Best-of-$K$).
$\Delta$: per-prompt improvement over Base, aggregated across scorers (mean / min / mean$-\gamma$std); W/T/L is on aggregated $\Delta$.
High-$\hat\sigma$ uses the baseline disagreement proxy.}
\vspace{-15pt}
\label{tab:multi_scorer_main}
\end{table}

\paragraph{Scaling robustness.}
To assess scale robustness, we additionally evaluate our inference-time decoding rules on a stronger generator, \textbf{Qwen2.5-14B-Instruct}, using the same candidate-pool protocol and metrics.
We observe consistent improvements over mean Best-of-$K$ in risk-sensitive criteria (Tradeoff and prompt-level CVaR), with gains again concentrating on high-disagreement prompts (Table~\ref{tab:qwen14b_appendix}).

\paragraph{Hyperparameter sensitivity and ablations.}
Figure~\ref{fig:ablations} summarizes a sensitivity study over the main knobs of DARC-$\epsilon$, reporting improvements over mean Best-of-$K$ in $\Delta\mathrm{Tradeoff}$ and $\Delta\mathrm{CVaR}$ on fixed candidate pools.
Increasing the candidate pool size $K$ improves both metrics with diminishing returns (Fig.~\ref{fig:ablations}a).
The risk sensitivity $\beta$ in the entropic robust value $\widehat V_\beta$ yields rapid robustness gains that plateau at moderate values (Fig.~\ref{fig:ablations}b).
Fixing $\beta$, enlarging the near-optimal set via $\epsilon$ further improves robustness up to saturation (with slight degradation when overly permissive) (Fig.~\ref{fig:ablations}c).
Finally, the perturbation budget $N_{\text{aug}}$ used to estimate proxy disagreement $\hat\sigma$ saturates quickly; a small budget (e.g., $4$--$8$) suffices, keeping inference overhead modest (Fig.~\ref{fig:ablations}d).

\section{Conclusion}

We cast decoding-time alignment under heterogeneous preferences as risk-constrained decision making.
DARC uses a KL-robust (entropic) decoding rule with a conservative LCB/DRO interpretation, improving lower-tail outcomes while preserving mean quality.
Limitations include a finite candidate pool and scorer bias; perturbation-based disagreement proxies are scalable screening signals, not calibrated human-disagreement estimates.
Future work includes richer robustness signals and user/group-conditional risk control.

\section*{Impact Statement}
This paper presents work whose goal is to advance the field of machine learning.
Our contributions focus on inference-time response selection under preference heterogeneity.
As with many machine learning methods, the proposed approach may exhibit uneven performance if the underlying proxies or learned scorers encode dataset- or population-specific biases.
We encourage reporting sensitivity to key design choices and auditing proxy/scorer behavior when the method is used in practice.
Consequently, the method should be used and evaluated with appropriate care in deployment.

\bibliography{icml26paper}

@article{ouyang2022training,
  title={Training language models to follow instructions with human feedback},
  author={Ouyang, Long and Wu, Jeffrey and Jiang, Xu and Almeida, Diogo and Wainwright, Carroll and Mishkin, Pamela and Zhang, Chong and Agarwal, Sandhini and Slama, Katarina and Ray, Alex and others},
  journal={Advances in neural information processing systems},
  volume={35},
  pages={27730--27744},
  year={2022}
}

@article{rafailov2023direct,
  title={Direct preference optimization: Your language model is secretly a reward model},
  author={Rafailov, Rafael and Sharma, Archit and Mitchell, Eric and Manning, Christopher D and Ermon, Stefano and Finn, Chelsea},
  journal={Advances in neural information processing systems},
  volume={36},
  pages={53728--53741},
  year={2023}
}

@article{schulman2017proximal,
  title={Proximal policy optimization algorithms},
  author={Schulman, John and Wolski, Filip and Dhariwal, Prafulla and Radford, Alec and Klimov, Oleg},
  journal={arXiv preprint arXiv:1707.06347},
  year={2017}
}

@article{stiennon2020learning,
  title={Learning to summarize with human feedback},
  author={Stiennon, Nisan and Ouyang, Long and Wu, Jeffrey and Ziegler, Daniel and Lowe, Ryan and Voss, Chelsea and Radford, Alec and Amodei, Dario and Christiano, Paul F},
  journal={Advances in neural information processing systems},
  volume={33},
  pages={3008--3021},
  year={2020}
}

@article{hong2024orpo,
  title={Orpo: Monolithic preference optimization without reference model},
  author={Hong, Jiwoo and Lee, Noah and Thorne, James},
  journal={arXiv preprint arXiv:2403.07691},
  year={2024}
}

@article{meng2024simpo,
  title={Simpo: Simple preference optimization with a reference-free reward},
  author={Meng, Yu and Xia, Mengzhou and Chen, Danqi},
  journal={Advances in Neural Information Processing Systems},
  volume={37},
  pages={124198--124235},
  year={2024}
}

@article{ethayarajh2024kto,
  title={Kto: Model alignment as prospect theoretic optimization},
  author={Ethayarajh, Kawin and Xu, Winnie and Muennighoff, Niklas and Jurafsky, Dan and Kiela, Douwe},
  journal={arXiv preprint arXiv:2402.01306},
  year={2024}
}

@article{bradley1952rank,
  title={Rank analysis of incomplete block designs: I. the method of paired comparisons},
  author={Bradley, Ralph Allan and Terry, Milton E},
  journal={Biometrika},
  volume={39},
  number={3/4},
  pages={324--345},
  year={1952},
  publisher={JSTOR}
}

@article{christiano2017deep,
  title={Deep reinforcement learning from human preferences},
  author={Christiano, Paul F and Leike, Jan and Brown, Tom and Martic, Miljan and Legg, Shane and Amodei, Dario},
  journal={Advances in neural information processing systems},
  volume={30},
  year={2017}
}

@article{garg2025ipo,
  title={Ipo: Your language model is secretly a preference classifier},
  author={Garg, Shivank and Singh, Ayush and Singh, Shweta and Chopra, Paras},
  journal={arXiv preprint arXiv:2502.16182},
  year={2025}
}

@article{zhang2024diverging,
  title={Diverging Preferences: When do Annotators Disagree and do Models Know?},
  author={Zhang, Michael JQ and Wang, Zhilin and Hwang, Jena D and Dong, Yi and Delalleau, Olivier and Choi, Yejin and Choi, Eunsol and Ren, Xiang and Pyatkin, Valentina},
  journal={arXiv preprint arXiv:2410.14632},
  year={2024}
}

@article{chen2024pal,
  title={Pal: Pluralistic alignment framework for learning from heterogeneous preferences},
  author={Chen, Daiwei and Chen, Yi and Rege, Aniket and Vinayak, Ramya Korlakai},
  journal={arXiv preprint arXiv:2406.08469},
  year={2024}
}

@article{casper2023open,
  title={Open problems and fundamental limitations of reinforcement learning from human feedback},
  author={Casper, Stephen and Davies, Xander and Shi, Claudia and Gilbert, Thomas Krendl and Scheurer, J{\'e}r{\'e}my and Rando, Javier and Freedman, Rachel and Korbak, Tomasz and Lindner, David and Freire, Pedro and others},
  journal={arXiv preprint arXiv:2307.15217},
  year={2023}
}

@inproceedings{gao2023scaling,
  title={Scaling laws for reward model overoptimization},
  author={Gao, Leo and Schulman, John and Hilton, Jacob},
  booktitle={International Conference on Machine Learning},
  pages={10835--10866},
  year={2023},
  organization={PMLR}
}

@article{rafailov2024scaling,
  title={Scaling laws for reward model overoptimization in direct alignment algorithms},
  author={Rafailov, Rafael and Chittepu, Yaswanth and Park, Ryan and Sikchi, Harshit Sushil and Hejna, Joey and Knox, Brad and Finn, Chelsea and Niekum, Scott},
  journal={Advances in Neural Information Processing Systems},
  volume={37},
  pages={126207--126242},
  year={2024}
}

@article{wu2024towards,
  title={Towards robust alignment of language models: Distributionally robustifying direct preference optimization},
  author={Wu, Junkang and Xie, Yuexiang and Yang, Zhengyi and Wu, Jiancan and Chen, Jiawei and Gao, Jinyang and Ding, Bolin and Wang, Xiang and He, Xiangnan},
  journal={arXiv preprint arXiv:2407.07880},
  year={2024}
}

@article{ramesh2024group,
  title={Group robust preference optimization in reward-free rlhf},
  author={Ramesh, Shyam Sundhar and Hu, Yifan and Chaimalas, Iason and Mehta, Viraj and Sessa, Pier Giuseppe and Bou Ammar, Haitham and Bogunovic, Ilija},
  journal={Advances in Neural Information Processing Systems},
  volume={37},
  pages={37100--37137},
  year={2024}
}

@article{banerjee2024towards,
  title={Towards reliable alignment: Uncertainty-aware rlhf},
  author={Banerjee, Debangshu and Gopalan, Aditya},
  journal={arXiv preprint arXiv:2410.23726},
  year={2024}
}

@article{chow2018risk,
  title={Risk-constrained reinforcement learning with percentile risk criteria},
  author={Chow, Yinlam and Ghavamzadeh, Mohammad and Janson, Lucas and Pavone, Marco},
  journal={Journal of Machine Learning Research},
  volume={18},
  number={167},
  pages={1--51},
  year={2018}
}

@article{tamar2015policy,
  title={Policy gradient for coherent risk measures},
  author={Tamar, Aviv and Chow, Yinlam and Ghavamzadeh, Mohammad and Mannor, Shie},
  journal={Advances in neural information processing systems},
  volume={28},
  year={2015}
}

@article{ichihara2025evaluation,
  title={Evaluation of best-of-n sampling strategies for language model alignment},
  author={Ichihara, Yuki and Jinnai, Yuu and Morimura, Tetsuro and Ariu, Kaito and Abe, Kenshi and Sakamoto, Mitsuki and Uchibe, Eiji},
  journal={arXiv preprint arXiv:2502.12668},
  year={2025}
}

@article{sun2024fast,
  title={Fast best-of-n decoding via speculative rejection},
  author={Sun, Hanshi and Haider, Momin and Zhang, Ruiqi and Yang, Huitao and Qiu, Jiahao and Yin, Ming and Wang, Mengdi and Bartlett, Peter and Zanette, Andrea},
  journal={Advances in Neural Information Processing Systems},
  volume={37},
  pages={32630--32652},
  year={2024}
}

@article{kopf2023openassistant,
  title={Openassistant conversations-democratizing large language model alignment},
  author={K{\"o}pf, Andreas and Kilcher, Yannic and Von R{\"u}tte, Dimitri and Anagnostidis, Sotiris and Tam, Zhi Rui and Stevens, Keith and Barhoum, Abdullah and Nguyen, Duc and Stanley, Oliver and Nagyfi, Rich{\'a}rd and others},
  journal={Advances in neural information processing systems},
  volume={36},
  pages={47669--47681},
  year={2023}
}

@misc{nicholas22aira,
  doi = {10.5281/zenodo.6989727},
  url = {https://github.com/Nkluge-correa/Aira},
  author = {Nicholas Kluge Corrêa},
  title = {Aira},
  year = {2023},
  publisher = {GitHub},
  journal = {GitHub repository},
}

@inproceedings{hung2025reward,
  title={Reward-Guided Tree Search for Inference Time Alignment of Large Language Models},
  author={Hung, Chia-Yu and Majumder, Navonil and Mehrish, Ambuj and Poria, Soujanya},
  booktitle={Proceedings of the 2025 Conference of the Nations of the Americas Chapter of the Association for Computational Linguistics: Human Language Technologies (Volume 1: Long Papers)},
  pages={12575--12593},
  year={2025}
}

@article{huang2025best,
  title={Is best-of-n the best of them? coverage, scaling, and optimality in inference-time alignment},
  author={Huang, Audrey and Block, Adam and Liu, Qinghua and Jiang, Nan and Krishnamurthy, Akshay and Foster, Dylan J},
  journal={arXiv preprint arXiv:2503.21878},
  year={2025}
}

@inproceedings{lambert2025rewardbench,
  title={Rewardbench: Evaluating reward models for language modeling},
  author={Lambert, Nathan and Pyatkin, Valentina and Morrison, Jacob and Miranda, Lester James Validad and Lin, Bill Yuchen and Chandu, Khyathi and Dziri, Nouha and Kumar, Sachin and Zick, Tom and Choi, Yejin and others},
  booktitle={Findings of the Association for Computational Linguistics: NAACL 2025},
  pages={1755--1797},
  year={2025}
}

@article{zhou2024rmb,
  title={RMB: Comprehensively benchmarking reward models in LLM alignment},
  author={Zhou, Enyu and Zheng, Guodong and Wang, Binghai and Xi, Zhiheng and Dou, Shihan and Bao, Rong and Shen, Wei and Xiong, Limao and Fan, Jessica and Mou, Yurong and others},
  journal={arXiv preprint arXiv:2410.09893},
  year={2024}
}

@article{liu2024rm,
  title={Rm-bench: Benchmarking reward models of language models with subtlety and style},
  author={Liu, Yantao and Yao, Zijun and Min, Rui and Cao, Yixin and Hou, Lei and Li, Juanzi},
  journal={arXiv preprint arXiv:2410.16184},
  year={2024}
}

@inproceedings{jin2025rag,
  title={Rag-rewardbench: Benchmarking reward models in retrieval augmented generation for preference alignment},
  author={Jin, Zhuoran and Yuan, Hongbang and Men, Tianyi and Cao, Pengfei and Chen, Yubo and Xu, Jiexin and Li, Huaijun and Jiang, Xiaojian and Liu, Kang and Zhao, Jun},
  booktitle={Findings of the Association for Computational Linguistics: ACL 2025},
  pages={17061--17090},
  year={2025}
}

@article{wang2024interpretable,
  title={Interpretable preferences via multi-objective reward modeling and mixture-of-experts},
  author={Wang, Haoxiang and Xiong, Wei and Xie, Tengyang and Zhao, Han and Zhang, Tong},
  journal={arXiv preprint arXiv:2406.12845},
  year={2024}
}

@inproceedings{wang2024helpsteer,
  title={Helpsteer: Multi-attribute helpfulness dataset for steerlm},
  author={Wang, Zhilin and Dong, Yi and Zeng, Jiaqi and Adams, Virginia and Sreedhar, Makesh Narsimhan and Egert, Daniel and Delalleau, Olivier and Scowcroft, Jane and Kant, Neel and Swope, Aidan and others},
  booktitle={Proceedings of the 2024 Conference of the North American Chapter of the Association for Computational Linguistics: Human Language Technologies (Volume 1: Long Papers)},
  pages={3371--3384},
  year={2024}
}

@article{wang2024helpsteer2,
  title={Helpsteer2-preference: Complementing ratings with preferences},
  author={Wang, Zhilin and Bukharin, Alexander and Delalleau, Olivier and Egert, Daniel and Shen, Gerald and Zeng, Jiaqi and Kuchaiev, Oleksii and Dong, Yi},
  journal={arXiv preprint arXiv:2410.01257},
  year={2024}
}

@article{song2024importance,
  title={The importance of online data: Understanding preference fine-tuning via coverage},
  author={Song, Yuda and Swamy, Gokul and Singh, Aarti and Bagnell, J and Sun, Wen},
  journal={Advances in Neural Information Processing Systems},
  volume={37},
  pages={12243--12270},
  year={2024}
}

@article{cen2024value,
  title={Value-incentivized preference optimization: A unified approach to online and offline rlhf},
  author={Cen, Shicong and Mei, Jincheng and Goshvadi, Katayoon and Dai, Hanjun and Yang, Tong and Yang, Sherry and Schuurmans, Dale and Chi, Yuejie and Dai, Bo},
  journal={arXiv preprint arXiv:2405.19320},
  year={2024}
}

@article{chakraborty2024maxmin,
  title={MaxMin-RLHF: Alignment with diverse human preferences},
  author={Chakraborty, Souradip and Qiu, Jiahao and Yuan, Hui and Koppel, Alec and Huang, Furong and Manocha, Dinesh and Bedi, Amrit Singh and Wang, Mengdi},
  journal={arXiv preprint arXiv:2402.08925},
  year={2024}
}

@article{yang2024metaaligner,
  title={Metaaligner: Towards generalizable multi-objective alignment of language models},
  author={Yang, Kailai and Liu, Zhiwei and Xie, Qianqian and Huang, Jimin and Zhang, Tianlin and Ananiadou, Sophia},
  journal={Advances in Neural Information Processing Systems},
  volume={37},
  pages={34453--34486},
  year={2024}
}

@article{guo2024controllable,
  title={Controllable preference optimization: Toward controllable multi-objective alignment},
  author={Guo, Yiju and Cui, Ganqu and Yuan, Lifan and Ding, Ning and Sun, Zexu and Sun, Bowen and Chen, Huimin and Xie, Ruobing and Zhou, Jie and Lin, Yankai and others},
  journal={arXiv preprint arXiv:2402.19085},
  year={2024}
}

@article{li2025multi,
  title   = {{ENCORE}: Entropy-guided Reward Composition for Multi-head Safety Reward Models},
  author  = {Li, Xiaomin and Chen, Xupeng and Fan, Jingxuan and Jiang, Eric Hanchen and Gao, Mingye},
  journal = {Proceedings of the AAAI Conference on Artificial Intelligence},
  volume  = {40},
  number  = {37},
  pages   = {31743--31750},
  year    = {2026},
  doi     = {10.1609/aaai.v40i37.40442}
}

@inproceedings{huang2025deal,
  title={Deal: Decoding-time alignment for large language models},
  author={Huang, James Y and Sengupta, Sailik and Bonadiman, Daniele and Lai, Yi-an and Gupta, Arshit and Pappas, Nikolaos and Mansour, Saab and Kirchhoff, Katrin and Roth, Dan},
  booktitle={Proceedings of the 63rd Annual Meeting of the Association for Computational Linguistics (Volume 1: Long Papers)},
  pages={26280--26300},
  year={2025}
}

@book{vershynin2018high,
  title     = {High-Dimensional Probability: An Introduction with Applications in Data Science},
  author    = {Vershynin, Roman},
  series    = {Cambridge Series in Statistical and Probabilistic Mathematics},
  volume    = {47},
  year      = {2018},
  publisher = {Cambridge University Press},
  address   = {Cambridge},
  doi       = {10.1017/9781108231596},
  isbn      = {9781108231596}
}

@article{artzner1999coherent,
  title={Coherent measures of risk},
  author={Artzner, Philippe and Delbaen, Freddy and Eber, Jean-Marc and Heath, David},
  journal={Mathematical finance},
  volume={9},
  number={3},
  pages={203--228},
  year={1999},
  publisher={Wiley Online Library}
}

@article{rockafellar2000optimization,
  title={Optimization of conditional value-at-risk},
  author={Rockafellar, R Tyrrell and Uryasev, Stanislav and others},
  journal={Journal of risk},
  volume={2},
  pages={21--42},
  year={2000}
}

@book{boyd2004convex,
  title={Convex optimization},
  author={Boyd, Stephen and Vandenberghe, Lieven},
  year={2004},
  publisher={Cambridge university press}
}

@book{miettinen1999nonlinear,
  title={Nonlinear multiobjective optimization},
  author={Miettinen, Kaisa},
  volume={12},
  year={1999},
  publisher={Springer Science \& Business Media}
}

@book{Boucheron2013Concentration,
    author = {Boucheron, Stéphane and Lugosi, Gábor and Massart, Pascal},
    title = {Concentration Inequalities: A Nonasymptotic Theory of Independence},
    publisher = {Oxford University Press},
    year = {2013},
    month = {02},
    isbn = {9780199535255},
    doi = {10.1093/acprof:oso/9780199535255.001.0001}
}

@article{kendall2017uncertainties,
  title={What uncertainties do we need in bayesian deep learning for computer vision?},
  author={Kendall, Alex and Gal, Yarin},
  journal={Advances in neural information processing systems},
  volume={30},
  year={2017}
}

@article{lakshminarayanan2017simple,
  title={Simple and scalable predictive uncertainty estimation using deep ensembles},
  author={Lakshminarayanan, Balaji and Pritzel, Alexander and Blundell, Charles},
  journal={Advances in neural information processing systems},
  volume={30},
  year={2017}
}

@article{uma2021learning,
  title={Learning from disagreement: A survey},
  author={Uma, Alexandra N and Fornaciari, Tommaso and Hovy, Dirk and Paun, Silviu and Plank, Barbara and Poesio, Massimo},
  journal={Journal of Artificial Intelligence Research},
  volume={72},
  pages={1385--1470},
  year={2021}
}

@article{chow2015risk,
  title={Risk-sensitive and robust decision-making: a cvar optimization approach},
  author={Chow, Yinlam and Tamar, Aviv and Mannor, Shie and Pavone, Marco},
  journal={Advances in neural information processing systems},
  volume={28},
  year={2015}
}

@misc{mitchell2023note,
  title={A note on dpo with noisy preferences \& relationship to ipo},
  author={Mitchell, Eric},
  year={2023}
}

@article{chowdhury2024provably,
  title={Provably robust dpo: Aligning language models with noisy feedback},
  author={Chowdhury, Sayak Ray and Kini, Anush and Natarajan, Nagarajan},
  journal={arXiv preprint arXiv:2403.00409},
  year={2024}
}

@article{namkoong2016stochastic,
  title={Stochastic gradient methods for distributionally robust optimization with f-divergences},
  author={Namkoong, Hongseok and Duchi, John C},
  journal={Advances in neural information processing systems},
  volume={29},
  year={2016}
}

@article{duchi2019variance,
  title={Variance-based regularization with convex objectives},
  author={Duchi, John and Namkoong, Hongseok},
  journal={Journal of Machine Learning Research},
  volume={20},
  number={68},
  pages={1--55},
  year={2019}
}

@article{duchi2021statistics,
  title={Statistics of robust optimization: A generalized empirical likelihood approach},
  author={Duchi, John C and Glynn, Peter W and Namkoong, Hongseok},
  journal={Mathematics of Operations Research},
  volume={46},
  number={3},
  pages={946--969},
  year={2021},
  publisher={INFORMS}
}

@article{ben2002robust,
  title={Robust optimization--methodology and applications},
  author={Ben-Tal, Aharon and Nemirovski, Arkadi},
  journal={Mathematical programming},
  volume={92},
  number={3},
  pages={453--480},
  year={2002},
  publisher={Springer}
}

@article{duchi2021learning,
  title={Learning models with uniform performance via distributionally robust optimization},
  author={Duchi, John C and Namkoong, Hongseok},
  journal={The Annals of Statistics},
  volume={49},
  number={3},
  pages={1378--1406},
  year={2021},
  publisher={Institute of Mathematical Statistics}
}

@article{wiesemann2014distributionally,
  title={Distributionally robust convex optimization},
  author={Wiesemann, Wolfram and Kuhn, Daniel and Sim, Melvyn},
  journal={Operations research},
  volume={62},
  number={6},
  pages={1358--1376},
  year={2014},
  publisher={INFORMS}
}

@article{rahimian2019distributionally,
  title={Distributionally robust optimization: A review},
  author={Rahimian, Hamed and Mehrotra, Sanjay},
  journal={arXiv preprint arXiv:1908.05659},
  year={2019}
}

@article{levy2020large,
  title={Large-scale methods for distributionally robust optimization},
  author={Levy, Daniel and Carmon, Yair and Duchi, John C and Sidford, Aaron},
  journal={Advances in neural information processing systems},
  volume={33},
  pages={8847--8860},
  year={2020}
}

@book{dupuis2011weak,
  title={A weak convergence approach to the theory of large deviations},
  author={Dupuis, Paul and Ellis, Richard S},
  year={2011},
  publisher={John Wiley \& Sons}
}

@book{hansen2011robustness,
  title     = {Robustness},
  author    = {Hansen, Lars Peter and Sargent, Thomas J.},
  year      = {2008},
  publisher = {Princeton University Press},
  address   = {Princeton, NJ},
  isbn      = {9780691114422},
  doi       = {10.1515/9781400829385}
}

@article{zheng2023judging,
  title={Judging llm-as-a-judge with mt-bench and chatbot arena},
  author={Zheng, Lianmin and Chiang, Wei-Lin and Sheng, Ying and Zhuang, Siyuan and Wu, Zhanghao and Zhuang, Yonghao and Lin, Zi and Li, Zhuohan and Li, Dacheng and Xing, Eric and others},
  journal={Advances in neural information processing systems},
  volume={36},
  pages={46595--46623},
  year={2023}
}

@article{dubois2024length,
  title={Length-controlled alpacaeval: A simple way to debias automatic evaluators},
  author={Dubois, Yann and Galambosi, Bal{\'a}zs and Liang, Percy and Hashimoto, Tatsunori B},
  journal={arXiv preprint arXiv:2404.04475},
  year={2024}
}

@misc{li2023alpacaeval,
  title        = {AlpacaEval: An Automatic Evaluator of Instruction-following Models},
  author       = {Li, Xuechen and Zhang, Tianyi and Dubois, Yann and Taori, Rohan and Gulrajani, Ishaan and Guestrin, Carlos and Liang, Percy and Hashimoto, Tatsunori B.},
  year         = {2023},
  month        = {5},
  publisher    = {GitHub},
  journal      = {GitHub repository},
  howpublished = {\url{https://github.com/tatsu-lab/alpaca_eval}}
}

@article{dubey2024llama,
  title={The llama 3 herd of models},
  author={Dubey, Abhimanyu and Jauhri, Abhinav and Pandey, Abhinav and Kadian, Abhishek and Al-Dahle, Ahmad and Letman, Aiesha and Mathur, Akhil and Schelten, Alan and Yang, Amy and Fan, Angela and others},
  journal={arXiv e-prints},
  pages={arXiv--2407},
  year={2024}
}

@misc{qwen2025qwen25technicalreport,
      title={Qwen2.5 Technical Report}, 
      author={Qwen and An Yang and Baosong Yang and Beichen Zhang and Binyuan Hui and Bo Zheng and Bowen Yu and Chengyuan Li and Dayiheng Liu and Fei Huang and Haoran Wei and Huan Lin and Jian Yang and Jianhong Tu and Jianwei Zhang and Jianxin Yang and Jiaxi Yang and Jingren Zhou and Junyang Lin and Kai Dang and Keming Lu and Keqin Bao and Kexin Yang and Le Yu and Mei Li and Mingfeng Xue and Pei Zhang and Qin Zhu and Rui Men and Runji Lin and Tianhao Li and Tianyi Tang and Tingyu Xia and Xingzhang Ren and Xuancheng Ren and Yang Fan and Yang Su and Yichang Zhang and Yu Wan and Yuqiong Liu and Zeyu Cui and Zhenru Zhang and Zihan Qiu},
      year={2025},
      eprint={2412.15115},
      archivePrefix={arXiv},
      primaryClass={cs.CL}
}

@article{liu2024skywork,
  title={Skywork-reward: Bag of tricks for reward modeling in llms},
  author={Liu, Chris Yuhao and Zeng, Liang and Liu, Jiacai and Yan, Rui and He, Jujie and Wang, Chaojie and Yan, Shuicheng and Liu, Yang and Zhou, Yahui},
  journal={arXiv preprint arXiv:2410.18451},
  year={2024}
}

@inproceedings{gal2016dropout,
  title={Dropout as a bayesian approximation: Representing model uncertainty in deep learning},
  author={Gal, Yarin and Ghahramani, Zoubin},
  booktitle={international conference on machine learning},
  pages={1050--1059},
  year={2016},
  organization={PMLR}
}

@inproceedings{jinnai2024regularized,
  title={Regularized best-of-n sampling to mitigate reward hacking for language model alignment},
  author={Jinnai, Yuu and Morimura, Tetsuro and Ariu, Kaito and Abe, Kenshi},
  booktitle={ICML 2024 Workshop on Models of Human Feedback for AI Alignment},
  year={2024}
}

@inproceedings{
mitigatingiclr26,
title={From Curiosity to Caution: Mitigating Reward Hacking for Best-of-\$N\$ with Pessimism},
author={Zhuohao Yu and Steven Wu and Adam Block},
booktitle={The Fourteenth International Conference on Learning Representations},
year={2026},
url={https://openreview.net/forum?id=EZn2TmBBfF}
}

@article{khalaf2025inference,
  title={Inference-Time Reward Hacking in Large Language Models},
  author={Khalaf, Hadi and Verdun, Claudio Mayrink and Oesterling, Alex and Lakkaraju, Himabindu and Calmon, Flavio du Pin},
  journal={arXiv preprint arXiv:2506.19248},
  year={2025}
}
\bibliographystyle{icml2026}

\newpage
\appendix

\onecolumn
\section{Additional Theoretical Details}
\label{app:theory}

\subsection{A self-normalized (empirical-variance) concentration bound}
\label{app:emp-bern}

We first record a standard empirical-Bernstein style inequality (a self-normalized concentration bound)
that controls the deviation of the empirical mean in terms of the empirical standard deviation.
This lemma is used to prove Proposition~\ref{prop:uniform-lcb}.

\begin{lemma}[Empirical Bernstein bound (bounded case)]
\label{lem:emp-bern}
Let $X_1,\dots,X_n$ be i.i.d.\ supported on $[a,b]$ with mean $\mu$.
Let
\[
\hat\mu := \frac{1}{n}\sum_{i=1}^n X_i,
\qquad
\hat\sigma^2 := \frac{1}{n-1}\sum_{i=1}^n (X_i-\hat\mu)^2.
\]
Then for any $\delta\in(0,1)$, with probability at least $1-\delta$,
\begin{equation}
\mu \ge \hat\mu
- \sqrt{\frac{2\hat\sigma^2\log(2/\delta)}{n}}
- \frac{7(b-a)\log(2/\delta)}{3(n-1)}.
\label{eq:emp-bern}
\end{equation}
\end{lemma}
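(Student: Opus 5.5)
This is the Maurer--Pontil empirical Bernstein inequality, one-sided. The plan is to combine Bernstein's inequality, which uses the true variance, with a concentration bound showing that the sample standard deviation does not underestimate the true one by much. Each piece gets confidence $\delta/2$, and a union bound finishes. First rescale by $Z_i=(X_i-a)/(b-a)\in[0,1]$. The statement is affine-equivariant: $\hat\mu$ and $\mu$ shift and scale, while $\hat\sigma$ and the $(b-a)$ term scale. So it suffices to prove, for $[0,1]$-valued variables, that with probability at least $1-\delta$,
\[
\mu \ge \hat\mu - \sqrt{\tfrac{2\hat\sigma^2\log(2/\delta)}{n}} - \tfrac{7\log(2/\delta)}{3(n-1)}.
\]

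\textbf{Step 1 (Bernstein with true variance).} Apply the one-sided Bernstein inequality to the i.i.d.\ variables $\mu-Z_i$, which have variance $\sigma^2$ and are bounded by $1$. Write $L:=\log(2/\delta)$. With probability at least $1-\delta/2$,
\[
\mu-\hat\mu \le \sqrt{2\sigma^2 L/n} + L/(3n).
\]

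\textbf{Step 2 (lower deviation of the sample standard deviation).} Show that with probability at least $1-\delta/2$,
\[
\sigma \le \hat\sigma + \sqrt{2L/(n-1)}.
\]
This is the main obstacle. Write $(n-1)\hat\sigma^2=\frac{1}{2n}\sum_{i,j}(Z_i-Z_j)^2$, a U-statistic-type function $f(Z_1,\dots,Z_n)$ with $\mathbb{E}f=(n-1)\sigma^2$. The key property is that $f$ is self-bounding in the sense of Boucheron--Lugosi--Massart. Changing one coordinate to its optimal value decreases $f$ by at most $1$, and the sum of these decreases is at most a constant times $f$. This holds because $\sum_k \big(f-\inf_{z_k}f\big)\le \sum_k (Z_k-\bar Z)^2 \cdot n/(n-1)$ plus lower-order terms, using $[0,1]$-boundedness.

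The lower-tail inequality for self-bounding functions (Boucheron--Lugosi--Massart, Theorem~6.17/Maurer's version) then gives
\[
\Pr\big(\mathbb{E}f - f > t\big)\le \exp\!\big(-t^2/(2\mathbb{E}f)\big).
\]
Setting $t=\sqrt{2(n-1)\sigma^2 L}$ yields
\[
(n-1)\hat\sigma^2 \ge (n-1)\sigma^2 - \sqrt{2(n-1)\sigma^2L}.
\]
Solving this quadratic in $\sigma$ gives
\[
\sigma\le \hat\sigma + \sqrt{2L/(n-1)}.
\]
If the self-bounding constant comes out slightly off, I would fall back on citing Maurer--Pontil (2009), Theorem~10, directly for this step.

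\textbf{Step 3 (combine).} On the intersection of the two events, which has probability at least $1-\delta$, substitute Step 2 into Step 1:
\[
\sqrt{2\sigma^2L/n}\le \sqrt{2\hat\sigma^2L/n}+\frac{2L}{\sqrt{n(n-1)}}\le \sqrt{2\hat\sigma^2L/n}+\frac{2L}{n-1}.
\]
Adding the remaining term $L/(3n)\le L/(3(n-1))$ gives the total slack $\tfrac{7L}{3(n-1)}$. Undoing the rescaling yields \eqref{eq:emp-bern}.
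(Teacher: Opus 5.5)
Your proposal is correct and follows the route the paper only outlines in its appendix: Bernstein for the centered sum, then trading $\sigma$ for $\hat\sigma$, which is exactly the Maurer--Pontil empirical Bernstein theorem the paper alternatively cites, except that you actually carry out the bookkeeping that produces the $7/3$ constant. One tightening for Step 2: the decrements are $f-\inf_{z_k}f=\frac{n}{n-1}(Z_k-\bar Z)^2$, which sum to exactly $\frac{n}{n-1}f$, so Boucheron--Lugosi--Massart self-boundedness alone gives only the lower tail $\exp(-t^2/(2n\sigma^2))$; the constant you state follows instead from Maurer's squared condition $\sum_k(f-\inf_{z_k}f)^2\le f$ (valid because each decrement is at most $\frac{n-1}{n}$), and in any case the weaker conclusion $\sigma\le\hat\sigma+\sqrt{2nL}/(n-1)$ already suffices, since $\sqrt{2L/n}\cdot\sqrt{2nL}/(n-1)=2L/(n-1)$ is precisely the term you use in Step 3.
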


\paragraph{Remark.}
Lemma~\ref{lem:emp-bern} is a standard form of empirical Bernstein inequality.
We use it because it yields the same structure as \eqref{eq:lcb-bounded}:
a leading term of order $\hat\sigma\sqrt{\log(1/\delta)/n}$ and a lower-order term of order $\log(1/\delta)/n$.
For completeness, we provide a short proof outline in Appendix~\ref{app:proof-lem-empbern}.

\subsection{Proof of Proposition~\ref{prop:uniform-lcb}}
\label{app:proof-lcb}

\paragraph{Proof outline.}
We apply an empirical-Bernstein (self-normalized) inequality for bounded random variables to each fixed $y$,
then take a union bound over $K$ candidates with $\delta'=\delta/K$.
An alternative derivation can be obtained from Freedman's martingale inequality.
We then track constants to obtain~\eqref{eq:lcb-bounded}.

\paragraph{Proof.}
Fix any $y\in\mathcal{Y}$ and apply Lemma~\ref{lem:emp-bern} to $\{R_i(y)\}_{i=1}^n$ with $\delta'=\delta/K$.
With probability at least $1-\delta'$,
\[
\mu(y)\ge \hat\mu_n(y)
- \hat\sigma_n(y)\sqrt{\frac{2\log(2/\delta')}{n}}
- \frac{7(b-a)\log(2/\delta')}{3(n-1)}.
\]
Taking a union bound over the $K$ candidates yields probability at least $1-\delta$ that the inequality holds for all $y$.
Finally, note that $\log(2/\delta')=\log(2K/\delta)\le \log(K/\delta)+\log 2
\le C_0\,\log(K/\delta)$ for a universal constant $C_0$ (we may assume $K/\delta\ge 2$; otherwise the bound is trivial).
Moreover, $\frac{1}{n-1}\le \frac{2}{n}$ for $n\ge 2$.
Absorbing these constant factors into a single universal $c$ yields~\eqref{eq:lcb-bounded}.
\qed

\subsection{From LCB decoding to mean--dispersion decoding}
\label{app:proof-corollary}

This appendix formalizes the ``ignoring constants / lower-order term'' step used to connect uniform LCB maximization to a \emph{mean--dispersion surrogate} (used as an ablation / efficient approximation), rather than as the defining objective of our primary KL-robust entropic decoder.

\begin{corollary}[Mean--dispersion surrogate form under bounded ratings]

\label{cor:lcb-meanvar}
Under the conditions of Proposition~\ref{prop:uniform-lcb}, define
\[
\mathrm{LCB}_\delta(y)
:= \hat\mu_n(y)
- c\,\hat\sigma_n(y)\sqrt{\frac{\log(K/\delta)}{n}}
- c\,(b-a)\frac{\log(K/\delta)}{n}.
\]
Then any maximizer of $\mathrm{LCB}_\delta(y)$ is also a maximizer of
\begin{equation}
\hat\mu_n(y)-\lambda\hat\sigma_n(y),
\qquad
\lambda := c\sqrt{\frac{\log(K/\delta)}{n}},
\label{eq:cor-meanvar}
\end{equation}
since the last term $c\,(b-a)\log(K/\delta)/n$ is independent of $y$.

\end{corollary}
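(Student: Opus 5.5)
The argument is a direct algebraic reduction. Since $\mathrm{LCB}_\delta$ is maximized over the finite set $\mathcal{Y}=\mathcal{Y}(s)$ with $K=|\mathcal{Y}|$ fixed, and $n$, $\delta$, $a$, $b$, $c$ are the same for every candidate, I will write $\mathrm{LCB}_\delta(y)$ as a candidate-dependent part plus a candidate-independent constant. Setting $\lambda := c\sqrt{\log(K/\delta)/n}$ and $C := c\,(b-a)\log(K/\delta)/n$, the definition gives the identity
\[
\mathrm{LCB}_\delta(y) \;=\; \big(\hat\mu_n(y)-\lambda\,\hat\sigma_n(y)\big) \;-\; C
\qquad\text{for every } y\in\mathcal{Y}.
\]
Here $\lambda\ge 0$ and $C$ are deterministic given $(n,K,\delta,a,b)$ and do not depend on $y$. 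I will check this explicitly: $K$ is the size of the realized candidate set on which we condition, not a per-candidate quantity, so neither term varies across $y$.

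Next I will use the elementary fact that subtracting a constant preserves the ordering of a real-valued function on a finite set. For $y^\dagger\in\arg\max_{y}\mathrm{LCB}_\delta(y)$ and any $y'\in\mathcal{Y}$, we have $\mathrm{LCB}_\delta(y^\dagger)\ge\mathrm{LCB}_\delta(y')$. Adding $C$ to both sides gives $\hat\mu_n(y^\dagger)-\lambda\hat\sigma_n(y^\dagger)\ge \hat\mu_n(y')-\lambda\hat\sigma_n(y')$, so $y^\dagger$ maximizes \eqref{eq:cor-meanvar}. The reverse inclusion follows the same way, so the two argmax sets actually coincide. Both argmax sets are nonempty because $\mathcal{Y}$ is finite.

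Finally, I will note what the corollary inherits from Proposition~\ref{prop:uniform-lcb}. On the event of probability at least $1-\delta$ where \eqref{eq:lcb-bounded} holds uniformly, the mean--dispersion maximizer also maximizes a valid lower bound on $\mu(y)$. The selection step itself, however, is purely deterministic and needs no probabilistic argument.

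There is no real obstacle here. The only point needing care is to confirm that the constant $c$ and the $\log(K/\delta)$ factor are uniform across candidates. This holds because the proposition is stated with a single absolute constant and a union bound over the fixed set $\mathcal{Y}$.
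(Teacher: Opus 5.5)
Your proposal is correct and takes the same approach as the paper: both observe that $\mathrm{LCB}_\delta(y)$ and $\hat\mu_n(y)-\lambda\hat\sigma_n(y)$ differ by the $y$-independent constant $c\,(b-a)\log(K/\delta)/n$, so their argmax sets over the finite candidate set coincide. Your extra remarks, that $c$ and $\log(K/\delta)$ are uniform across candidates and that the selection step needs no probabilistic argument, are accurate but not needed for the conclusion.
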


\begin{proof}
The two objectives differ by an additive constant that does not depend on $y$, hence they have identical argmax sets.
\end{proof}
\begin{remark}[Standard deviation vs.\ Variance penalization]
\label{rem:std-vs-var}
We note a structural distinction between the LCB-derived surrogate and the entropic objective. The LCB rule (Eq.~\ref{eq:cor-meanvar}) implies a penalty on the \emph{standard deviation} $\hat{\sigma}$ (scaling as $O(n^{-1/2})$), which accounts for finite-sample estimation uncertainty. In contrast, the entropic objective (Eq.~1) approximates a mean-variance form $\hat{\mu} - \frac{\beta}{2}\hat{\sigma}^2$ under Taylor expansion, reflecting intrinsic risk aversion. 
While theoretically distinct, both objectives penalize dispersion. We use the $\sigma$-based LCB form primarily to establish finite-sample consistency guarantees, while the entropic form serves as our primary method for capturing risk sensitivity. Empirically, both metrics effectively demote high-disagreement candidates.
\end{remark}

\subsection{Lower-tail (risk) interpretation and uniform empirical LCB under sub-Gaussianity}
\label{app:tail}

This appendix supports the statement in Section~3.1 that an analogous LCB holds under sub-Gaussian noise
(as an alternative to Assumption~3.2). We first recall a lower-tail bound for a single sub-Gaussian variable,
then derive a \emph{uniform-in-$y$} empirical LCB for the mean $\mu(y)$ based on $n$ samples, analogous to
Proposition~\ref{prop:uniform-lcb}.

\paragraph{Bounded $\Rightarrow$ sub-Gaussian.}
Under boundedness $X\in[a,b]$, Hoeffding's lemma implies $X-\E[X]$ is $\nu$-sub-Gaussian with
\(
\nu \le (b-a)/2.
\)
Applying Lemma~\ref{lem:subg-tail} with $\nu=(b-a)/2$ yields the bounded lower-tail bound:
\begin{equation}
\Pr\!\left(X \le \E[X]-t\right) \le \exp\!\left(-\frac{2t^2}{(b-a)^2}\right),\qquad \forall\,t>0.
\label{eq:lower-tail-bounded}
\end{equation}
This justifies the statement that dispersion controls conservative (lower-tail) outcomes.

\begin{lemma}[Sub-Gaussian lower-tail bound]
\label{lem:subg-tail}
If $X-\E[X]$ is $\nu$-sub-Gaussian, then for any $t>0$,
\begin{equation}
\Pr\!\left(X \le \E[X]-t\right) \le \exp\!\left(-\frac{t^2}{2\nu^2}\right).
\label{eq:subg-tail}
\end{equation}
\end{lemma}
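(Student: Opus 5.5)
The plan is the standard Chernoff (exponential Markov) argument. Write $Z:=X-\E[X]$; by assumption $\E[\exp(\lambda Z)]\le \exp(\lambda^2\nu^2/2)$ for every $\lambda\in\R$. In particular this holds for negative $\lambda$, so $-Z$ is also $\nu$-sub-Gaussian. This is the only place where the two-sided nature of the definition matters. The lower-tail event $\{X\le \E[X]-t\}$ is the same as $\{-Z\ge t\}$, so we bound an upper tail of $-Z$.

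Fix $t>0$ and any $\lambda>0$. Since $x\mapsto e^{\lambda x}$ is increasing and positive, Markov's inequality gives
\begin{equation}
\Pr(-Z\ge t)\le e^{-\lambda t}\,\E\big[e^{-\lambda Z}\big]\le \exp\!\Big(-\lambda t+\frac{\lambda^2\nu^2}{2}\Big).
\end{equation}
Next we optimize the exponent over $\lambda>0$. The quadratic $-\lambda t+\lambda^2\nu^2/2$ is minimized at $\lambda^\star=t/\nu^2>0$, which is admissible, and its minimum value is $-t^2/(2\nu^2)$. Substituting gives \eqref{eq:subg-tail}.

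There is no real obstacle. The only care needed is the convention for "$\nu$-sub-Gaussian", which we take to be the MGF bound with variance proxy $\nu^2$.
\begin{itemize}
\item If the paper instead means a one-sided or $\psi_2$-norm definition, the same argument goes through with a constant change.
\item For consistency with the bounded case \eqref{eq:lower-tail-bounded}, Hoeffding's lemma gives exactly the MGF form with $\nu=(b-a)/2$. Plugging this in yields $\exp(-2t^2/(b-a)^2)$, which confirms that the MGF convention is the intended one.
\end{itemize}
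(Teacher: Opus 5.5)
Your proof is correct and is essentially the paper's own argument: apply Markov's inequality to $e^{-\lambda(X-\E[X])}$, bound the MGF at $-\lambda$, and take $\lambda=t/\nu^2$. One small caveat on your side remark: a one-sided sub-Gaussian condition is enough only if it controls the MGF at negative arguments (the lower side). This does not arise here, because the paper's Assumption~\ref{ass:subg} imposes the MGF bound for all $\lambda\in\R$, which matches your convention.
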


\begin{proof}
By Chernoff's method, for any $\lambda>0$,
\[
\Pr(X-\E[X]\le -t)=\Pr(e^{-\lambda(X-\E[X])}\ge e^{\lambda t})
\le e^{-\lambda t}\E[e^{-\lambda(X-\E[X])}]
\le e^{-\lambda t}e^{\lambda^2\nu^2/2}.
\]
Optimize over $\lambda$ by setting $\lambda=t/\nu^2$ to get \eqref{eq:subg-tail}.
\end{proof}

\paragraph{Uniform empirical LCB for sub-Gaussian rewards.}
Fix a state $s$ and a finite candidate set $\mathcal{Y}(s)$ of size $K := |\mathcal{Y}(s)|$.
For each $y\in\mathcal{Y}(s)$, let $R_1(y),\dots,R_n(y)$ be i.i.d.\ draws of the (scalar) reward with mean
$\mu(y):=\E[R(y)]$.
Define the empirical mean and (uncentered) empirical variance proxy
\[
\hat\mu_n(y):=\frac{1}{n}\sum_{i=1}^n R_i(y),
\qquad
\hat\sigma_n^2(y):=\frac{1}{n}\sum_{i=1}^n \bigl(R_i(y)-\hat\mu_n(y)\bigr)^2.
\]
(Any definition of $\hat\sigma_n(y)$ used in the main text can be substituted here; only its nonnegativity is used below.)

\begin{assumption}[Sub-Gaussian rewards (alternative to boundedness)]
\label{ass:subg}
For each $y\in\mathcal{Y}(s)$, the centered reward $R(y)-\mu(y)$ is $\nu$-sub-Gaussian, i.e.,
\(
\E[\exp(\lambda(R(y)-\mu(y)))] \le \exp(\lambda^2\nu^2/2)
\)
for all $\lambda\in\R$.
\end{assumption}

\begin{lemma}[Sample mean is sub-Gaussian]
\label{lem:mean-subg}
Under Assumption~\ref{ass:subg}, for any fixed $y$, the sample mean satisfies that
$\hat\mu_n(y)-\mu(y)$ is $(\nu/\sqrt{n})$-sub-Gaussian, i.e.,
\[
\E\!\left[\exp\!\left(\lambda(\hat\mu_n(y)-\mu(y))\right)\right] \le \exp\!\left(\frac{\lambda^2\nu^2}{2n}\right),
\qquad \forall\,\lambda\in\R.
\]
\end{lemma}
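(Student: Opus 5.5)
The argument is the standard moment-generating-function computation for an average of independent terms. Fix $y$ and $\lambda\in\R$, and write the centered sample mean as
\[
\hat\mu_n(y)-\mu(y)=\frac{1}{n}\sum_{i=1}^n \bigl(R_i(y)-\mu(y)\bigr).
\]
Exponentiating gives
\[
\exp\!\bigl(\lambda(\hat\mu_n(y)-\mu(y))\bigr)=\prod_{i=1}^n \exp\!\Bigl(\tfrac{\lambda}{n}\bigl(R_i(y)-\mu(y)\bigr)\Bigr).
\]

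\textbf{Step 1 (factorize).} The samples $R_1(y),\dots,R_n(y)$ are i.i.d.\ by \eqref{eq:iid-samples}. Independence across $i$ for this fixed $y$ is all that is needed, consistent with the remark on shared annotators. Hence the expectation of the product splits into the product of expectations, and identical distribution makes it $\bigl(\E[\exp(\tfrac{\lambda}{n}(R(y)-\mu(y)))]\bigr)^n$.

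\textbf{Step 2 (apply the assumption).} Assumption~\ref{ass:subg} holds for every real parameter, so we may use it with parameter $\lambda/n$. Each factor is then bounded by $\exp\!\bigl(\tfrac{\lambda^2\nu^2}{2n^2}\bigr)$.

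\textbf{Step 3 (combine).} Raising this bound to the $n$-th power gives
\[
\E\!\left[\exp\!\left(\lambda(\hat\mu_n(y)-\mu(y))\right)\right]\le \exp\!\left(\frac{\lambda^2\nu^2}{2n}\right).
\]
This is exactly the $(\nu/\sqrt n)$-sub-Gaussian condition, and $\lambda\in\R$ was arbitrary.

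\textbf{Technical points.} There is no real obstacle; two details deserve a line each. First, we need independence of the transformed variables $\exp(\tfrac{\lambda}{n}(R_i-\mu))$. This follows because measurable functions of independent variables are independent. Second, the expectations involved must be finite. This is guaranteed by the sub-Gaussian bound itself, so exchanging expectation and product is legitimate.
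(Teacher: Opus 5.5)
Your proof is correct and follows the paper's own argument essentially line for line. You factor the moment generating function of the average using independence across $i$, apply Assumption~\ref{ass:subg} at parameter $\lambda/n$ to each factor, and multiply the $n$ bounds $\exp(\lambda^2\nu^2/(2n^2))$ to obtain $\exp(\lambda^2\nu^2/(2n))$.
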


\begin{proof}
Identical to the standard argument: by independence and the sub-Gaussian MGF bound,
\[
\E\!\left[e^{\lambda(\hat\mu_n(y)-\mu(y))}\right]
= \prod_{i=1}^n \E\!\left[\exp\!\left(\frac{\lambda}{n}(R_i(y)-\mu(y))\right)\right]
\le \prod_{i=1}^n \exp\!\left(\frac{\lambda^2\nu^2}{2n^2}\right)
= \exp\!\left(\frac{\lambda^2\nu^2}{2n}\right).
\]
\end{proof}

\begin{proposition}[Uniform sub-Gaussian LCB (analogue of Prop.~\ref{prop:uniform-lcb})]
\label{prop:uniform-lcb-subg}
There exists an absolute constant $c>0$ such that for any $\delta\in(0,1)$, with probability at least $1-\delta$,
simultaneously for all $y\in\mathcal{Y}(s)$,
\begin{equation}
\mu(y) \ge \hat\mu_n(y)
- c\,\nu\,\sqrt{\frac{\log(K/\delta)}{n}}.
\label{eq:lcb-subg}
\end{equation}
We denote the right-hand side by $\mathrm{LCB}^{\mathrm{subG}}_\delta(y)$.
\end{proposition}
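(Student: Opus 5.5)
The plan is to combine Lemma~\ref{lem:mean-subg} with the Chernoff argument of Lemma~\ref{lem:subg-tail}, applied to the sample mean rather than to a single draw, and then take a union bound over the $K$ candidates, exactly mirroring the structure of the proof of Proposition~\ref{prop:uniform-lcb}.

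\textbf{Step 1 (single candidate).} Fix $y\in\mathcal{Y}(s)$. By Lemma~\ref{lem:mean-subg}, $\hat\mu_n(y)-\mu(y)$ is $(\nu/\sqrt{n})$-sub-Gaussian. Lemma~\ref{lem:subg-tail} only uses the MGF bound on the negative side, so it applies verbatim with $X=\hat\mu_n(y)$ and sub-Gaussian parameter $\nu/\sqrt n$. This gives, for all $t>0$,
\[
\Pr\bigl(\hat\mu_n(y)-\mu(y)\ge t\bigr)\le \exp\!\left(-\frac{n t^2}{2\nu^2}\right).
\]
Here we use the upper deviation of $\hat\mu_n$, equivalently the lower tail of $-\hat\mu_n$; sub-Gaussianity is symmetric since the MGF bound holds for all $\lambda\in\R$. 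Setting $t=\nu\sqrt{2\log(K/\delta)/n}$ makes the right-hand side equal to $\delta/K$.

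\textbf{Step 2 (union bound).} The failure events for the different $y$ are arbitrary and possibly dependent; as in the remark on shared annotators, no independence across candidates is needed. A union bound over the $K$ candidates gives, with probability at least $1-\delta$, simultaneously for all $y$,
\[
\mu(y)\ge\hat\mu_n(y)-\nu\sqrt{2\log(K/\delta)/n}.
\]
This is \eqref{eq:lcb-subg} with $c=\sqrt2$. If $K/\delta<1$ the logarithm would be negative, but this case cannot occur because $K\ge1$ and $\delta<1$. The empirical variance $\hat\sigma_n$ does not enter the argument, consistent with the parenthetical note that only its nonnegativity matters.

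\textbf{Main obstacle.} There is little real difficulty. The only point needing care is checking that the one-sided tail bound in Lemma~\ref{lem:subg-tail} is applied in the correct direction, namely to the upper tail of $\hat\mu_n$. A true empirical-variance version, analogous to \eqref{eq:lcb-bounded} with $\hat\sigma_n$ in place of $\nu$, would be harder: it would require concentration of $\hat\sigma_n$ itself. We avoid this by stating the bound with the known proxy $\nu$.
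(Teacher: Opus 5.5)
Your proof is correct and is essentially the paper's argument: Lemma~\ref{lem:mean-subg} plus the Chernoff bound of Lemma~\ref{lem:subg-tail} for each fixed $y$, the choice $t=\nu\sqrt{2\log(K/\delta)/n}$, and a union bound over the $K$ candidates, which gives $c=\sqrt{2}$. Your care about direction is justified, but fix one wording slip: the failure event is the upper tail $\hat\mu_n(y)-\mu(y)>t$, so Lemma~\ref{lem:subg-tail} must be applied to $X=-\hat\mu_n(y)$ using the two-sided MGF bound, not ``verbatim with $X=\hat\mu_n(y)$'' as your first sentence says---and note that the paper's own proof displays the lower-tail event $\hat\mu_n(y)\le\mu(y)-t$ before drawing its upper-tail conclusion, a slip your final inequality avoids.
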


\begin{proof}
Fix any $y\in\mathcal{Y}(s)$. By Lemma~\ref{lem:mean-subg} and Lemma~\ref{lem:subg-tail}, for any $t>0$,
\[
\Pr\!\left(\hat\mu_n(y)\le \mu(y)-t\right)
\le \exp\!\left(-\frac{n t^2}{2\nu^2}\right).
\]
Set $t=\nu\sqrt{2\log(K/\delta)/n}$ to obtain
\[
\Pr\!\left(\mu(y) < \hat\mu_n(y)-\nu\sqrt{\frac{2\log(K/\delta)}{n}}\right)\le \frac{\delta}{K}.
\]
Applying a union bound over all $y\in\mathcal{Y}(s)$ yields that with probability at least $1-\delta$,
\eqref{eq:lcb-subg} holds simultaneously for all $y$. Absorb the factor $\sqrt{2}$ into an absolute constant $c$.
\end{proof}

\paragraph{Discussion (relation to Prop.~\ref{prop:uniform-lcb}).}
Proposition~\ref{prop:uniform-lcb-subg} provides the claimed uniform LCB under sub-Gaussian noise.
Compared to the bounded-rewards LCB in \eqref{eq:lcb-bounded}, the sub-Gaussian LCB replaces the empirical
dispersion term $\hat\sigma_n(y)$ and the boundedness-dependent lower-order correction
$c(b-a)\log(K/\delta)/n$ with the (known) sub-Gaussian scale $\nu$.
If one additionally wants a data-dependent (empirical-Bernstein/Freedman-type) bound under sub-Gaussianity,
one can combine sub-exponential concentration for $(R(y)-\mu(y))^2$ with a union bound; we omit this here
since Section~3.1 only requires existence of an analogous LCB.

\subsection{Proof of Theorem~\ref{thm:kl-entropic}}
\label{app:proof-kl-entropic}

The equivalence between the KL-robust value and the entropic risk measure is a classical result in the theory of risk-sensitive control and large deviations, often referred to as the Donsker-Varadhan variational formula~\citep{dupuis2011weak}. We provide the derivation here for completeness.

\begin{proof}
Recall the definition of the KL-regularized robust value given in Equation~\eqref{eq:kl-rob-def}:
\begin{equation}
    \mathrm{Rob}^{\mathrm{KL}}_\beta(\mathbb{P};R)
    := \inf_{\mathbb{Q}\ll \mathbb{P}}
    \left\{ \mathbb{E}_{\mathbb{Q}}[R] + \frac{1}{\beta} D_{\mathrm{KL}}(\mathbb{Q}\|\mathbb{P}) \right\}.
\end{equation}
Let $f(x) = \frac{d\mathbb{Q}}{d\mathbb{P}}(x)$ be the Radon-Nikodym derivative (density ratio). The optimization problem can be formulated as minimizing the functional $J(f)$ subject to the normalization constraint $\int f \,d\mathbb{P} = 1$ and non-negativity $f \ge 0$:
\begin{equation}
    J(f) = \int f(x) R(x) \,d\mathbb{P}(x) + \frac{1}{\beta} \int f(x) \log f(x) \,d\mathbb{P}(x).
\end{equation}
We introduce a Lagrange multiplier $\gamma$ for the constraint $\int f \,d\mathbb{P} = 1$. The Lagrangian is:
\begin{equation}
    \mathcal{L}(f, \gamma) = \int \left( f(x) R(x) + \frac{1}{\beta} f(x) \log f(x) - \gamma f(x) \right) d\mathbb{P}(x) + \gamma.
\end{equation}
Taking the functional derivative with respect to $f(x)$ and setting it to zero yields the first-order condition:
\begin{equation}
    R(x) + \frac{1}{\beta} (\log f(x) + 1) - \gamma = 0.
\end{equation}
Solving for $f(x)$, we obtain the form of the optimal twisted distribution:
\begin{equation}
    \log f^*(x) = \beta\gamma - 1 - \beta R(x) \implies f^*(x) \propto \exp(-\beta R(x)).
\end{equation}
Using the normalization constraint $\int f^*(x) \,d\mathbb{P}(x) = 1$, we determine the normalization constant (partition function) $Z$:
\begin{equation}
    Z := \int \exp(-\beta R(x)) \,d\mathbb{P}(x) = \mathbb{E}_{\mathbb{P}}[\exp(-\beta R)].
\end{equation}
Thus, the optimal density ratio is given by Gibbs measure form $f^*(x) = \frac{1}{Z}\exp(-\beta R(x))$. Substituting $f^*$ back into the objective function, the expectation term and the entropy term combine conveniently:
\begin{align}
    \mathbb{E}_{\mathbb{Q}^*}[R] + \frac{1}{\beta} D_{\mathrm{KL}}(\mathbb{Q}^*\|\mathbb{P})
    &= \int f^*(x) \left( R(x) + \frac{1}{\beta}\log f^*(x) \right) \,d\mathbb{P}(x) \\
    &= \int f^*(x) \left( R(x) + \frac{1}{\beta}(-\beta R(x) - \log Z) \right) \,d\mathbb{P}(x) \\
    &= \int f^*(x) \left( R(x) - R(x) - \frac{1}{\beta}\log Z \right) \,d\mathbb{P}(x) \\
    &= -\frac{1}{\beta}\log Z \underbrace{\int f^*(x) \,d\mathbb{P}(x)}_{1} \\
    &= -\frac{1}{\beta}\log \mathbb{E}_{\mathbb{P}}[\exp(-\beta R)].
\end{align}
This concludes the proof.
\end{proof}

\subsection{Constraint vs.\ penalty: Lagrangian view (finite candidate set)}
\label{app:lagrangian}

This appendix supports the ``two interchangeable controls'' claim in the main text
for the entropic-risk-premium formulation.

\begin{lemma}[Pareto optimality and (supported) scalarization]
\label{lem:pareto-scalar}
Fix $\beta>0$ and define the entropic risk premium
$\RPhat(s,y):=\hat\mu(y)-\widehat V_\beta(y)$.
Let
\[
\mathcal{P}_\beta:=\{(\RPhat(s,y),\hat\mu(y)) : y\in\mathcal{Y}\}\subset\mathbb{R}^2
\]
be a finite set.

If $y^\star\in\arg\max_{y\in\mathcal{Y}} \hat\mu(y)-\lambda\,\RPhat(s,y)$ for some $\lambda\ge 0$,
then $y^\star$ is Pareto-optimal with respect to the bi-criteria
$(\RPhat,\hat\mu)$ (i.e., there is no $y$ such that
$\hat\mu(y)\ge \hat\mu(y^\star)$ and $\RPhat(s,y)\le \RPhat(y^\star)$ with at least one strict).

Conversely, if $(\RPhat(y^\star),\hat\mu(y^\star))$ is a \emph{supported} Pareto-optimal point of $\mathcal{P}_\beta$
(i.e., it lies on the upper boundary of $\mathrm{conv}(\mathcal{P}_\beta)$ and admits a supporting line),
then there exists $\lambda\ge 0$ such that
\[
y^\star\in\arg\max_{y\in\mathcal{Y}} \hat\mu(y)-\lambda\,\RPhat(s,y).
\]
\end{lemma}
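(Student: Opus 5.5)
Both directions are elementary facts about scalarizing a finite bi-criteria set; I would argue directly in the plane $\mathbb{R}^2$ with coordinates $(r,m)=(\RPhat(s,y),\hat\mu(y))$, where the goal is to decrease $r$ and increase $m$. The scalarized score is $\phi_\lambda(y):=\hat\mu(y)-\lambda\,\RPhat(s,y)$, which is linear in the point $(r,m)$.

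\textbf{Forward direction.} Suppose $y^\star$ maximizes $\phi_\lambda$ but some $y$ dominates it: $\hat\mu(y)\ge\hat\mu(y^\star)$ and $\RPhat(s,y)\le\RPhat(s,y^\star)$, with one inequality strict. Then
\[
\phi_\lambda(y)-\phi_\lambda(y^\star)=\bigl(\hat\mu(y)-\hat\mu(y^\star)\bigr)+\lambda\bigl(\RPhat(s,y^\star)-\RPhat(s,y)\bigr)\ge 0 .
\]
This inequality is strict when the $\hat\mu$ gap is strict, or when the $\RPhat$ gap is strict and $\lambda>0$. That contradicts maximality in those cases. The remaining case is $\lambda=0$ with only the risk premium strictly better. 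There $y$ ties $y^\star$ in $\phi_0$, so maximality gives no contradiction, and I expect this edge case to be the only real obstacle. I would handle it in one of two ways: restrict to $\lambda>0$, or read the statement as asserting that \emph{some} maximizer of $\phi_\lambda$ is Pareto-optimal. The second reading is available because the candidate set is finite, so among the tied maximizers one can take the one with minimal $\RPhat$, and that one is undominated. I would state this caveat explicitly rather than hide it.

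\textbf{Converse direction.} Let $p^\star=(r^\star,m^\star)$ be supported, so there is a nonzero $(\alpha,\gamma)$ with $\gamma m-\alpha r\le \gamma m^\star-\alpha r^\star$ for all $(r,m)\in\mathrm{conv}(\mathcal{P}_\beta)$. The task is to normalize this inequality into the form $\phi_\lambda(y)\le\phi_\lambda(y^\star)$, which needs $\gamma>0$ and $\alpha\ge 0$.

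First I would show the sign conditions. Because $p^\star$ lies on the upper-left (Pareto) boundary, the supporting normal can be chosen in the closed cone $\{\gamma\ge 0,\alpha\ge 0\}$. A normal with $\gamma<0$ or $\alpha<0$ would make $p^\star$ maximize a direction that rewards increasing risk or decreasing mean, which is incompatible with being the Pareto-relevant face of the hull. Formally, I would replace the hull by its free-disposal extension $\mathrm{conv}(\mathcal{P}_\beta)+(\mathbb{R}_{+}\times\mathbb{R}_{-})$ and note that any supporting normal of this set has $\alpha,\gamma\ge 0$.

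Second, I would rule out $\gamma=0$. In that case the inequality says $p^\star$ has minimal $r$ over the set, and $p^\star$ may not maximize $m$ among the points attaining that minimal $r$. The fix is either to perturb the normal slightly toward $\gamma>0$ (possible because the set is a finite polytope and $p^\star$ is Pareto-optimal), or to use the definition of ``supported'' as the existence of a supporting line with finite nonnegative slope. Once $\gamma>0$, set $\lambda:=\alpha/\gamma\ge 0$ and divide through by $\gamma$. This gives $\phi_\lambda(y)\le\phi_\lambda(y^\star)$ for every $y\in\mathcal{Y}$, which is the claim.

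I expect the main obstacle throughout to be these degenerate vertical and horizontal supporting lines, namely $\lambda=0$ in the forward direction and $\gamma=0$ in the converse. I would resolve both with the finite-set perturbation argument or by fixing the definition, as described above.
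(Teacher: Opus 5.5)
Your proposal is correct and uses the same standard finite-scalarization argument as the paper: domination contradicts maximality of a nonnegative linear scalarization, and the slope of a supporting line of $\mathrm{conv}(\mathcal{P}_\beta)$ at a supported Pareto point supplies $\lambda\ge 0$; your finiteness-based fix for a vertical supporting line in the converse is sound, since Pareto-optimality makes $\hat\mu(y^\star)$ maximal among candidates of minimal $\RPhat$, so any sufficiently large finite $\lambda$ works. Your $\lambda=0$ caveat is a real defect of the statement, which the paper's one-line proof overlooks: if two candidates have equal $\hat\mu$ but different $\RPhat$, both maximize $\hat\mu-0\cdot\RPhat$, yet the one with larger $\RPhat$ is dominated, so the forward direction holds as stated only for $\lambda>0$, and for $\lambda=0$ one gets only weak Pareto-optimality (or Pareto-optimality of the tied maximizer with smallest $\RPhat$, exactly as you propose).
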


\begin{proof}
This is a standard scalarization fact for finite bi-criteria optimization~\citep{boyd2004convex,miettinen1999nonlinear}.
If a solution were dominated under $(\mathrm{RP}^c_{\beta},\hat\mu)$, it could not maximize any nonnegative linear scalarization.
Conversely, each supported Pareto-optimal point admits a supporting hyperplane for $\mathrm{conv}(\mathcal{P}_\beta)$, whose slope yields a $\lambda\ge 0$ such that $y^\star$ maximizes $\hat\mu-\lambda\,\mathrm{RP}· _\beta$.
\end{proof}

\subsection{Proof outline for Lemma~\ref{lem:emp-bern}}
\label{app:proof-lem-empbern}

We briefly outline the proof strategy for Lemma~\ref{lem:emp-bern}.

One route is to apply a martingale concentration inequality (Freedman/Bernstein-type) to the centered sum
$\sum_{i=1}^n (X_i-\mu)$ and control the conditional variance by the empirical variance, yielding an ``empirical Bernstein'' bound.
Another route is to invoke a known empirical Bernstein theorem directly.

\subsection{Selection statistics: empirical mean, dispersion, and CVaR}
\label{app:selection_stats}

For each input $s$ and candidate response $y\in\cY(s)$, we estimate robust selection statistics using a fixed reward model $R_\psi$.
Concretely, we construct $M$ style-preserving perturbations $\{\tilde y^{(m)}\}_{m=1}^M$ of $y$ (e.g., formatting or phrasing variations),
and compute $M$ reward samples
$
r^{(m)}(s,y)=R_\psi(s,\tilde y^{(m)}).
$
We then define:
\[
\hat\mu_{\text{sel}}(s,y)=\frac{1}{M}\sum_{m=1}^M r^{(m)}(s,y),
\qquad
\hat\sigma_{\text{sel}}(s,y)=\sqrt{\frac{1}{M-1}\sum_{m=1}^M \big(r^{(m)}(s,y)-\hat\mu_{\text{sel}}(s,y)\big)^2 }.
\]
For CVaR, let $r^{(1:M)}_{(1)}\le \cdots \le r^{(1:M)}_{(M)}$ be the sorted reward samples, and let
$k=\max\{1,\lceil \alpha M\rceil\}$.
We use the empirical lower-tail mean
\[
\widehat{\mathrm{CVaR}}_{\alpha,\text{sel}}(s,y)=\frac{1}{k}\sum_{i=1}^{k} r^{(1:M)}_{(i)}.
\]
In all experiments, $M$ and $\alpha$ are fixed globally (no per-task tuning), and the same reward model $R_\psi$ is used for all methods.
\subsection{Proof of Proposition~\ref{prop:chi2-dro}}
\label{app:proof-chi2}

We prove the lower bound~\eqref{eq:chi2-dro-lower} and characterize when it is tight.
Let $\mathbb{Q}\ll\mathbb{P}$ and define $g := \frac{d\mathbb{Q}}{d\mathbb{P}}-1$.
Then $\mathbb{E}_{\mathbb{P}}[g]=0$ and the $\chi^2$ constraint implies
\[
\mathbb{E}_{\mathbb{P}}[g^2] = D_{\chi^2}(\mathbb{Q}\,\|\,\mathbb{P}) \le \rho.
\]
Moreover,
\[
\mathbb{E}_{\mathbb{Q}}[R]
= \mathbb{E}_{\mathbb{P}}[(1+g)R]
= \mu_{\mathbb{P}} + \mathbb{E}_{\mathbb{P}}[g(R-\mu_{\mathbb{P}})].
\]
By Cauchy--Schwarz,
\[
\mathbb{E}_{\mathbb{P}}[g(R-\mu_{\mathbb{P}})]
\ge
- \sqrt{\mathbb{E}_{\mathbb{P}}[g^2]}\ \sqrt{\mathbb{E}_{\mathbb{P}}[(R-\mu_{\mathbb{P}})^2]}
\ge
-\sqrt{\rho}\,\sigma_{\mathbb{P}}.
\]
Taking the infimum over all feasible $\mathbb{Q}$ yields~\eqref{eq:chi2-dro-lower}.

For tightness, equality in Cauchy--Schwarz is achieved when
\[
g^\star(R) = -\sqrt{\rho}\,\frac{R-\mu_{\mathbb{P}}}{\sigma_{\mathbb{P}}},
\]
which corresponds to the candidate extremal density
\[
\frac{d\mathbb{Q}^\star}{d\mathbb{P}}(R) = 1 + g^\star(R)
= 1 - \sqrt{\rho}\,\frac{R-\mu_{\mathbb{P}}}{\sigma_{\mathbb{P}}},
\]
i.e.,~\eqref{eq:chi2-opt-density}.
This defines a valid probability density if it is nonnegative $\mathbb{P}$-almost surely, which is exactly the stated condition.
A sufficient condition is $R \le \mu_{\mathbb{P}} + \sigma_{\mathbb{P}}/\sqrt{\rho}$ $\mathbb{P}$-a.s., giving~\eqref{eq:chi2-tightness-sufficient}.

Specializing to the empirical distribution $\widehat{\mathbb{P}}_n^y$ yields~\eqref{eq:chi2-dro-empirical} by substituting
$\mu_{\widehat{\mathbb{P}}_n^y}=\widehat{\mu}_n(y)$ and
$\sigma_{\widehat{\mathbb{P}}_n^y}=\sqrt{\widehat v_n(y)}=\sqrt{\tfrac{n-1}{n}}\,\widehat\sigma_n(y)$.
\qed
\subsection{Proof of Lemma~\ref{lem:eps-lagrange}}
\label{app:proof-eps-lagrange}

Let $y_{\epstie}$ be any solution of~\eqref{eq:eps-constrained-entropic}.
If there existed $y\in\cY(s)$ such that $\Vhat(s,y)\ge \Vhat(s,y_{\epstie})$ and
$\hat\sigma(s,y)\le \hat\sigma(s,y_{\epstie})$ with at least one strict inequality, then:
- if $\Vhat(s,y)>\Vhat(s,y_{\epstie})$, $y$ is feasible for~\eqref{eq:eps-constrained-entropic} and strictly better in the objective $\hat\sigma$;
- if $\Vhat(s,y)=\Vhat(s,y_{\epstie})$ but $\hat\sigma(s,y)<\hat\sigma(s,y_{\epstie})$, $y$ is also feasible and strictly better.
Either case contradicts optimality of $y_{\epstie}$. Hence $y_{\epstie}$ is Pareto-optimal.

For the scalarization claim, consider the finite set
\[
\mathcal{S}:=\{(\hat\sigma(s,y),\Vhat(s,y)):\ y\in\cY(s)\}\subset\mathbb{R}^2
\]
and its convex hull $\mathrm{conv}(\mathcal{S})$.
If $(\hat\sigma(s,y_{\epstie}),\Vhat(s,y_{\epstie}))$ is a supported Pareto-optimal point, then there exists a supporting line to $\mathrm{conv}(\mathcal{S})$
with outward normal $(\lambda_\epsilon,1)$ for some $\lambda_\epsilon\ge 0$, i.e.,
\[
\Vhat(s,y) - \lambda_\epsilon \hat\sigma(s,y)
\le
\Vhat(s,y_{\epstie}) - \lambda_\epsilon \hat\sigma(s,y_{\epstie})
\quad \forall\,y\in\cY(s).
\]
Therefore $y_{\epstie}$ maximizes the linear scalarization, proving~\eqref{eq:eps-supported-scalarization}.
\qed

\subsection{Second-moment surrogate of the LCB rule}
\label{app:meanvar-surrogate}

\begin{corollary}[Second-moment surrogate]
\label{cor:meanvar-surrogate}
Up to an additive term uniform across $y$, maximizing $\mathrm{LCB}_\delta(y)$ is equivalent to
\begin{equation}
y_{\mathrm{LCB}}
\in
\arg\max_{y\in\mathcal{Y}}
\Big(\hat\mu_n(y) - \lambda \hat\sigma_n(y)\Big),
\qquad
\lambda := c\sqrt{\tfrac{\log(K/\delta)}{n}}.
\label{eq:meanvar}
\end{equation}
\end{corollary}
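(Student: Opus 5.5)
The plan is to reduce the statement directly to the definition of $\mathrm{LCB}_\delta(y)$ from Proposition~\ref{prop:uniform-lcb}, in the same way as Corollary~\ref{cor:lcb-meanvar}. First, fix the prompt $s$, the realized candidate set $\mathcal{Y}$, the sample size $n$, the confidence level $\delta$ and the absolute constant $c$. Then set $\lambda := c\sqrt{\log(K/\delta)/n}$ and $C := c\,(b-a)\log(K/\delta)/n$. Neither quantity depends on the candidate $y$, because $K=|\mathcal{Y}|$, $n$, $\delta$ and the range $[a,b]$ from Assumption~\eqref{eq:bounded} are all shared across candidates.

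Next, I will rewrite the bound candidate by candidate:
\[
\mathrm{LCB}_\delta(y) = \big(\hat\mu_n(y) - \lambda\,\hat\sigma_n(y)\big) - C
\qquad \text{for every } y\in\mathcal{Y}.
\]
Write $g(y):=\hat\mu_n(y)-\lambda\hat\sigma_n(y)$. For any $y,y'$ we have $\mathrm{LCB}_\delta(y)\ge \mathrm{LCB}_\delta(y')$ if and only if $g(y)\ge g(y')$, since subtracting the common constant $C$ preserves all pairwise comparisons. Hence $\arg\max_{y\in\mathcal{Y}}\mathrm{LCB}_\delta(y)=\arg\max_{y\in\mathcal{Y}} g(y)$ as sets. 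The sets are nonempty because $\mathcal{Y}$ is finite. This gives \eqref{eq:meanvar}, with the "additive term uniform across $y$" being exactly $C$.

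There is no real obstacle here. The only points to check are bookkeeping ones:
\begin{itemize}
\item $\lambda$ must be computed with the same $K$ and $\delta$ as the union bound in Proposition~\ref{prop:uniform-lcb}, so that it is genuinely independent of $y$.
\item The lower-order term must use a common range $[a,b]$ rather than per-candidate ranges. If one allowed candidate-specific ranges $[a_y,b_y]$, the term would no longer be uniform and the equivalence would only hold approximately.
\end{itemize}
I would add a one-line remark that the statistical guarantee transfers as well. On the probability-$(1-\delta)$ event of Proposition~\ref{prop:uniform-lcb}, the selected $y_{\mathrm{LCB}}$ satisfies $\mu(y_{\mathrm{LCB}})\ge \max_y g(y)-C$.
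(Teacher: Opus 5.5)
Your proposal is correct and matches the paper's argument: $\mathrm{LCB}_\delta(y)$ and $\hat\mu_n(y)-\lambda\hat\sigma_n(y)$ differ only by the $y$-independent term $c\,(b-a)\log(K/\delta)/n$, so the two argmax sets coincide. This is the same one-line proof the paper gives for Corollary~\ref{cor:lcb-meanvar}. Your added remark that $\mu(y_{\mathrm{LCB}})\ge \max_{y}\big(\hat\mu_n(y)-\lambda\hat\sigma_n(y)\big)-c\,(b-a)\log(K/\delta)/n$ on the high-probability event of Proposition~\ref{prop:uniform-lcb} is also correct.
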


\subsection{Robustness to proxy disagreement estimates}
\label{app:proxy-robust}

This appendix formalizes robustness of LCB-based selection when the decoder uses proxy estimates
$(\tilde\mu,\tilde\sigma)$ in place of the empirical statistics $(\hat\mu_n,\hat\sigma_n)$.
We show that under a uniform proxy error event, proxy maximization incurs only an additive slack
$\Delta_{\mathrm{prox}}$ relative to the sample-based optimum.

\paragraph{Assumption (uniform proxy closeness).}
Let $(\tilde\mu(y),\tilde\sigma(y))$ be proxy estimates used by the decoder.
Assume that with probability at least $1-\delta_{\mathrm{prox}}$, for all $y\in\mathcal{Y}$,
\begin{equation}
\big|\tilde\mu(y)-\hat\mu_n(y)\big|\le \varepsilon_\mu,
\qquad
\big|\tilde\sigma(y)-\hat\sigma_n(y)\big|\le \varepsilon_\sigma.
\label{eq:proxy-error}
\end{equation}
\paragraph{Remark (dependence and interpretation).}
Assumption~\eqref{eq:proxy-error} intentionally abstracts away the dependence structure of proxy-generated scores:
the perturbation-and-RM procedure may induce correlated errors and systematic bias.
Our goal here is not to claim that proxy samples satisfy i.i.d.\ concentration, but to show that \emph{any} proxy
estimator that is uniformly close to the sample statistics yields only an additive degradation in the pessimistic value,
captured by $\Delta_{\mathrm{prox}}$.
In Sec.~4, we empirically assess the accuracy of the proxy estimates and their correlation with human disagreement.

\begin{corollary}[Robust LCB under proxy errors]
\label{cor:robust-lcb}
Fix $\delta\in(0,1)$.
On the intersection of the LCB event in Proposition~\ref{prop:uniform-lcb} (proved in Appendix~\ref{app:proof-lcb})
and the proxy error event~\eqref{eq:proxy-error}, simultaneously for all $y\in\mathcal{Y}$,
\begin{equation}
\mu(y)
\;\ge\;
\underbrace{\tilde\mu(y)
- c\,\tilde\sigma(y)\sqrt{\frac{\log(K/\delta)}{n}}
- c\,(b-a)\frac{\log(K/\delta)}{n}}_{\widetilde{\mathrm{LCB}}_\delta(y)}
\;-\;
\Delta_{\mathrm{prox}}.
\label{eq:robust-lcb}
\end{equation}
where
\begin{equation}
\Delta_{\mathrm{prox}}
:= \varepsilon_\mu
+ c\,\varepsilon_\sigma\sqrt{\frac{\log(K/\delta)}{n}}.
\label{eq:prox-gap}
\end{equation}
The intersection event holds with probability at least $1-(\delta+\delta_{\mathrm{prox}})$ by a union bound.

Consequently, letting $\tilde y\in\arg\max_{y\in\mathcal{Y}} \widetilde{\mathrm{LCB}}_\delta(y)$ and
$y^\star\in\arg\max_{y\in\mathcal{Y}} \mathrm{LCB}_\delta(y)$,
we have the value suboptimality bound
\begin{equation}
\mathrm{LCB}_\delta(y^\star) - \mathrm{LCB}_\delta(\tilde y) \;\le\; 2\,\Delta_{\mathrm{prox}}.
\label{eq:proxy-2delta}
\end{equation}
In particular, proxy maximization is within $O(\Delta_{\mathrm{prox}})$ of the sample-based optimum.
\end{corollary}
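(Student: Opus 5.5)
The plan is to work on the intersection event and combine the per-candidate lower bound from Proposition~\ref{prop:uniform-lcb} with the proxy error bounds~\eqref{eq:proxy-error}. Write $\kappa:=c\sqrt{\log(K/\delta)/n}$ and $L:=c(b-a)\log(K/\delta)/n$, so that $\mathrm{LCB}_\delta(y)=\hat\mu_n(y)-\kappa\hat\sigma_n(y)-L$ and $\widetilde{\mathrm{LCB}}_\delta(y)=\tilde\mu(y)-\kappa\tilde\sigma(y)-L$. Let $E_1$ be the event of Proposition~\ref{prop:uniform-lcb}, with $\Pr(E_1^c)\le\delta$, and let $E_2$ be the proxy event, with $\Pr(E_2^c)\le\delta_{\mathrm{prox}}$. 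A union bound then gives $\Pr(E_1\cap E_2)\ge 1-(\delta+\delta_{\mathrm{prox}})$. Only marginal bounds are used, so no independence between the proxy and the samples is needed.

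The central step is a uniform comparison on $E_2$. For every $y$,
\[
\bigl|\widetilde{\mathrm{LCB}}_\delta(y)-\mathrm{LCB}_\delta(y)\bigr|\le|\tilde\mu(y)-\hat\mu_n(y)|+\kappa|\tilde\sigma(y)-\hat\sigma_n(y)|\le\varepsilon_\mu+\kappa\varepsilon_\sigma=\Delta_{\mathrm{prox}}.
\]
This uses the triangle inequality together with $\kappa\ge 0$, and the additive $L$ terms cancel. On $E_1\cap E_2$ we then have $\mu(y)\ge\mathrm{LCB}_\delta(y)\ge\widetilde{\mathrm{LCB}}_\delta(y)-\Delta_{\mathrm{prox}}$ for all $y$ simultaneously, which is~\eqref{eq:robust-lcb}.

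For the suboptimality claim~\eqref{eq:proxy-2delta}, I would use the standard argmax-perturbation chain. Since $\tilde y$ maximizes $\widetilde{\mathrm{LCB}}_\delta$,
\[
\mathrm{LCB}_\delta(y^\star)\le\widetilde{\mathrm{LCB}}_\delta(y^\star)+\Delta_{\mathrm{prox}}\le\widetilde{\mathrm{LCB}}_\delta(\tilde y)+\Delta_{\mathrm{prox}}\le\mathrm{LCB}_\delta(\tilde y)+2\Delta_{\mathrm{prox}}.
\]
This step needs only $E_2$. Combining it with $E_1$ gives $\mu(\tilde y)\ge\mathrm{LCB}_\delta(y^\star)-2\Delta_{\mathrm{prox}}$, which is the closing "within $O(\Delta_{\mathrm{prox}})$" statement.

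There is no real obstacle here. The only care needed is bookkeeping: check that the sign of the $\sigma$-term is handled by the absolute value, which it is since $\kappa\ge 0$. Also be clear that the proxy event is stated relative to the empirical statistics $(\hat\mu_n,\hat\sigma_n)$ rather than the population quantities $(\mu,\sigma)$. That is exactly why $E_1$ is still needed to pass from $\mathrm{LCB}_\delta$ to $\mu$.
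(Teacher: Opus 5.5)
Your proposal is correct and takes essentially the same route as the paper: on the intersection event you combine the uniform LCB of Proposition~\ref{prop:uniform-lcb} with the proxy bounds~\eqref{eq:proxy-error} to get $|\widetilde{\mathrm{LCB}}_\delta(y)-\mathrm{LCB}_\delta(y)|\le\Delta_{\mathrm{prox}}$ uniformly in $y$, then apply argmax stability. The only difference is that you write out the three-step argmax chain and note it needs only the proxy event, whereas the paper just cites ``standard argmax stability.''
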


\paragraph{Proof.}
By Proposition~\ref{prop:uniform-lcb} (proved in Appendix~\ref{app:proof-lcb}),
for all $y$ on the LCB event,
\[
\mu(y)\ge \hat\mu_n(y)
- c\,\hat\sigma_n(y)\sqrt{\frac{\log(K/\delta)}{n}}
- c\,(b-a)\frac{\log(K/\delta)}{n}.
\]
Using~\eqref{eq:proxy-error}, we have $\hat\mu_n(y)\ge \tilde\mu(y)-\varepsilon_\mu$ and
$\hat\sigma_n(y)\le \tilde\sigma(y)+\varepsilon_\sigma$.
Substituting and rearranging yields~\eqref{eq:robust-lcb}--\eqref{eq:prox-gap}.

Finally, since for all $y$ we have
$\mathrm{LCB}_\delta(y)\ge \widetilde{\mathrm{LCB}}_\delta(y)-\Delta_{\mathrm{prox}}$
and
$\mathrm{LCB}_\delta(y)\le \widetilde{\mathrm{LCB}}_\delta(y)+\Delta_{\mathrm{prox}}$,
standard argmax stability yields~\eqref{eq:proxy-2delta}.
\qed

\subsection{$\epsilon$-tie breaking: Pareto and scalarization properties}
\label{app:eps-pareto}

\begin{lemma}[$\epsilon$-tie breaking on the robust-value--disagreement frontier]
\label{lem:eps-lagrange}
Fix a prompt $s$ and candidate set $\cY(s)$.
Any solution $y_{\epstie}(s)$ of~~\eqref{eq:eps-constrained-entropic} is Pareto-optimal with respect to $(\Vhat,\hat\sigma)$:
there does not exist $y\in\cY(s)$ such that
$\Vhat(s,y)\ge \Vhat(s,y_{\epstie})$ and $\hat\sigma(s,y)\le \hat\sigma(s,y_{\epstie})$ with at least one strict inequality.
Moreover, if $(\hat\sigma(s,y_{\epstie}),\Vhat(s,y_{\epstie}))$ is a \emph{supported} Pareto-optimal point of the finite set
$\{(\hat\sigma(s,y),\Vhat(s,y)) : y\in\cY(s)\}\subset\mathbb{R}^2$,
then there exists a (possibly non-unique) $\lambda_\epsilon\ge 0$ such that
\begin{equation}
y_{\epstie}(s)\in \arg\max_{y\in\cY(s)} \big(\Vhat(s,y)-\lambda_\epsilon \hat\sigma(s,y)\big).
\label{eq:eps-supported-scalarization}
\end{equation}

\end{lemma}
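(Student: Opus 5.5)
The argument has two parts: Pareto-optimality, then the supported-scalarization claim. For the first part I argue by contradiction using the constrained form \eqref{eq:eps-constrained-entropic}. Suppose some $y\in\cY(s)$ dominates $y_{\epstie}$, meaning $\Vhat(s,y)\ge\Vhat(s,y_{\epstie})$ and $\hat\sigma(s,y)\le\hat\sigma(s,y_{\epstie})$, with at least one inequality strict. Since $y_{\epstie}$ is feasible, $\Vhat(s,y)\ge \Vhat(s,y_{\epstie})\ge \widehat V_{\max}(s)-\epsilon$, so $y$ lies in $\mathcal{F}_\epsilon(s)$. I then split into two cases.

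\emph{Case 1: the $\hat\sigma$ inequality is strict.} Then $y$ is a feasible point with strictly smaller objective, contradicting minimality of $y_{\epstie}$.

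\emph{Case 2: only the $\Vhat$ inequality is strict.} Then $\hat\sigma(s,y)=\hat\sigma(s,y_{\epstie})$, and $y$ is a feasible point that ties in the objective. This is the delicate step. As literally stated, the $\arg\min$ permits ties, so a solution tied in $\hat\sigma$ but weakly worse in $\Vhat$ could be weakly dominated. My first attempt is to resolve this by tie-breaking: among minimizers of $\hat\sigma$, choose one maximizing $\Vhat$, i.e. a lexicographic selection. I would state this convention explicitly. Under it, Case 2 contradicts the tie-break rule, and the Pareto claim holds for the selected solution. Alternatively, I would weaken the conclusion to weak Pareto-optimality for an arbitrary minimizer. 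The candidate set is finite, so the lexicographic maximizer exists.

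For the scalarization claim, I work with the finite set $\mathcal{S}=\{(\hat\sigma(s,y),\Vhat(s,y))\}$ and its convex hull, which is a polytope. "Supported" means there is a supporting line at the point $p_\epsilon=(\hat\sigma(s,y_{\epstie}),\Vhat(s,y_{\epstie}))$ with outward normal $(-\lambda,1)$, up to scaling. By definition of a supporting line,
\[
\Vhat(s,y)-\lambda_\epsilon\hat\sigma(s,y)\le \Vhat(s,y_{\epstie})-\lambda_\epsilon\hat\sigma(s,y_{\epstie})\quad\text{for all } y\in\cY(s),
\]
which is exactly \eqref{eq:eps-supported-scalarization}.

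It remains to show the normal can be taken as $(-\lambda_\epsilon,1)$ with $\lambda_\epsilon\ge0$. Let the outward normal be $(\alpha,\beta')$ in the $(\hat\sigma,\Vhat)$ coordinates. Pareto-optimality, with "upper-left" meaning smaller $\hat\sigma$ and larger $\Vhat$, places $p_\epsilon$ on the upper-left boundary. This forces $\alpha\le0$ and $\beta'\ge0$. If $\beta'>0$, I normalize to $\beta'=1$ and get $\lambda_\epsilon=-\alpha\ge0$. The degenerate case $\beta'=0$ is a vertical line, meaning $y_{\epstie}$ has minimal $\hat\sigma$. There I would perturb to a nearby non-vertical supporting line at the same vertex; this is possible because a polytope vertex has a cone of normals, and Pareto-optimality keeps some normal in the open quadrant. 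Taking a large finite $\lambda_\epsilon$ then works. This degenerate normalization, together with the tie issue in Case 2, is the main technical care needed. The rest is standard finite bi-criteria scalarization, as in Lemma~\ref{lem:pareto-scalar}.
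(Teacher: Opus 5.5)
Your two-part structure is the paper's own: a contradiction via feasibility of a dominating candidate, then a supporting line of $\mathrm{conv}(\mathcal{S})$ read off as a linear scalarization. Your Case~2 concern is correct and exposes an error in the paper's proof. Its first bullet asserts that if $\Vhat(s,y)>\Vhat(s,y_{\epstie})$ then $y$ is ``strictly better in the objective $\hat\sigma$''. Only $\hat\sigma(s,y)\le\hat\sigma(s,y_{\epstie})$ is actually available. Take two candidates in $\mathcal{F}_\epsilon(s)$ with equal $\hat\sigma$ and different $\Vhat$: the lower one is a minimizer of \eqref{eq:eps-constrained-entropic} and is dominated. So ``any solution is Pareto-optimal'' is false as stated, and your fix is needed: a lexicographic tie-break by larger $\Vhat$, or weak Pareto-optimality for arbitrary minimizers. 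The implementation's tie-break by larger $\hat\mu$ (Appendix~\ref{app:hparams}) does not supply this, because at fixed $\hat\sigma$ a larger $\hat\mu$ can come with a smaller $\Vhat$.

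In the second part, one inference is wrong. Pareto-optimality does not place $p_\epsilon$ on the upper-left boundary of the hull, and it does not force a supporting normal into $\{\alpha\le 0,\beta'\ge 0\}$. Take $\mathcal{S}=\{(0,0),(1,0.1),(2,10)\}$ in $(\hat\sigma,\Vhat)$ coordinates and $\epsilon\in[9.9,10)$. The middle point is the unique solution of \eqref{eq:eps-constrained-entropic}, it is Pareto-optimal, and it is a vertex admitting a supporting line (normal $(1,-1)$). Yet it maximizes $\Vhat-\lambda\hat\sigma$ only if $\lambda\le 0.1$ and $\lambda\ge 9.9$. Pareto points off the upper-left hull are exactly the unsupported ones, so the orientation must come from the hypothesis, not from Pareto-optimality. ``Supported'' has to be read as in Lemma~\ref{lem:pareto-scalar}: the point lies on the upper boundary of the hull, with a supporting normal $(-\lambda,1)$ up to scaling, or vertical. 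With that reading the claim is essentially immediate, which is all the paper's proof does. The paper writes the normal as $(\lambda_\epsilon,1)$; your $(-\lambda_\epsilon,1)$ is the correct sign.

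Your vertical case is a genuine addition and is right, but you can avoid the normal-cone perturbation. In that case $\hat\sigma(s,y_{\epstie})$ is minimal over $\cY(s)$. By Pareto-optimality, $\Vhat(s,y_{\epstie})$ is then maximal among candidates sharing that $\hat\sigma$, so
\[
\lambda_\epsilon:=\max\Big\{0,\ \max_{y:\,\hat\sigma(s,y)>\hat\sigma(s,y_{\epstie})}\frac{\Vhat(s,y)-\Vhat(s,y_{\epstie})}{\hat\sigma(s,y)-\hat\sigma(s,y_{\epstie})}\Big\}
\]
works directly, with the inner maximum over an empty set taken as $0$.
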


\begin{proof}
See Appendix~\ref{app:proof-eps-lagrange}.
\end{proof}

\begin{remark}[$\epsilon$-rule approximates $\Vhat$--dispersion under small robust-value gaps]
Fix $\lambda\ge 0$ and let
$y_\lambda\in\arg\max_{y\in\cY(s)} \big(\Vhat(s,y)-\lambda\hat\sigma(s,y)\big)$.
If $y_\lambda\in\mathcal{F}_\epsilon(s)$ (i.e., its robust value is within $\epsilon$ of the best robust value),
then the $\epsilon$-tie-break output $y_{\epstie}$ satisfies
\[
\Vhat(s,y_{\epstie})-\lambda\hat\sigma(s,y_{\epstie})
\ge
\Vhat(s,y_\lambda)-\lambda\hat\sigma(s,y_\lambda)-\epstie.
\]
\end{remark}

\subsection{From pairwise comparisons to scalar samples}
\label{app:pairwise_bridge}
\paragraph{From pairwise preferences to scalar satisfaction (estimation layer).}
Human feedback is often collected as pairwise comparisons or multi-criteria judgments rather than scalar scores.
One may induce scalar satisfaction via standard latent-utility scalarizations (e.g., Bradley--Terry/Thurstone-style fitting or calibrated ratings); our method then operates on the induced scalar samples.
We emphasize that the \emph{high-probability guarantee} in Section~\ref{sec:lcb} applies directly to the i.i.d.\ scalar-sample regime above; when scalarization is obtained indirectly from comparisons, we treat the resulting $(\hat\mu,\hat\sigma)$ as an estimator and validate it empirically.
We sketch how common pairwise preference data can be mapped to scalar samples used in Sec.~\ref{sec:lcb}.
Assume annotator $i$ has a latent utility $U_i(y)$ and pairwise labels are generated from noisy utility differences,
e.g., Bradley--Terry / Thurstone:
\begin{equation}
\Pr(y \succ_i y') = \sigma\!\left(\frac{U_i(y)-U_i(y')}{\eta}\right),
\end{equation}
where $\sigma(\cdot)$ is a logistic or Gaussian CDF and $\eta>0$.
Given multiple comparisons involving $y$, a scalarization (e.g., per-annotator win-rate or fitted BT/Thurstone score) yields a proxy $\widehat U_i(y)$.

\begin{lemma}[LCB form under sub-Gaussian scalarization noise]
\label{lem:pairwise_subg}
Fix $y$ and suppose $\widehat U_i(y) = U_i(y) + \xi_i(y)$ where $\{\xi_i(y)\}_{i=1}^n$ are independent and $\nu$-sub-Gaussian.
Then with probability at least $1-\delta$, simultaneously for all $y\in\mathcal{Y}(s)$,
\[
U(y) \;\ge\; \widehat\mu_n(y) - c'\widehat\sigma_n(y)\sqrt{\frac{\log(K/\delta)}{n}},
\]
for an absolute constant $c'>0$ (up to lower-order terms), i.e., the same mean--dispersion LCB structure as in~\eqref{eq:lcb-bounded}.
\end{lemma}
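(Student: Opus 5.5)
The plan is to reduce the statement to the bounded-sample machinery already behind Proposition~\ref{prop:uniform-lcb}, with a truncation step that absorbs the sub-Gaussian tails into a lower-order term. First I would fix the interpretation of the quantities. Set $Z_i(y):=\widehat U_i(y)=U_i(y)+\xi_i(y)$ and read $U(y)$ as the population mean $\E[U_i(y)]$. I would assume the scalarization noise is centered, $\E[\xi_i(y)]=0$, so that $\E[Z_i(y)]=U(y)$. Here $\widehat\mu_n(y)$ and $\widehat\sigma_n(y)$ denote the empirical mean and standard deviation of $\{Z_i(y)\}_{i=1}^n$.

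I would also assume the annotator utilities $U_i(y)$ lie in a bounded scale $[a,b]$ and are drawn independently of the noise. Then for each fixed $y$ the $Z_i(y)$ are i.i.d. The only obstruction to invoking Lemma~\ref{lem:emp-bern} directly is that $Z_i(y)$ is unbounded.

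\textbf{Step 1 (truncation event).} By Lemma~\ref{lem:subg-tail} applied to both tails of each $\xi_i(y)$, plus a union bound over the $nK$ pairs $(i,y)$, the following holds with probability at least $1-\delta/2$:
\[
|\xi_i(y)|\le L:=\nu\sqrt{2\log(4nK/\delta)}\quad\text{for all } i,y.
\]
On this event every $Z_i(y)$ lies in $[a-L,\,b+L]$.

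\textbf{Step 2 (empirical Bernstein).} Define clipped variables $\bar Z_i(y):=\min\{\max\{Z_i(y),a-L\},\,b+L\}$. These are i.i.d. and bounded, so Lemma~\ref{lem:emp-bern} applies with confidence level $\delta/(2K)$. A union bound over $y$, exactly as in the proof of Proposition~\ref{prop:uniform-lcb}, gives
\[
\E[\bar Z(y)]\ge \bar\mu_n(y)-c\,\bar\sigma_n(y)\sqrt{\log(K/\delta)/n}-c\,(b-a+2L)\log(K/\delta)/n .
\]
On the event of Step~1, clipping changes nothing, so $\bar\mu_n=\widehat\mu_n$ and $\bar\sigma_n=\widehat\sigma_n$.

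\textbf{Step 3 (clipping bias).} It remains to compare $\E[\bar Z(y)]$ with $U(y)$. Since $U_i(y)\in[a,b]$, clipping only acts when $|\xi_i|>L$. Hence
\[
|\E[\bar Z(y)]-U(y)|\le \E\big[|\xi|\,\mathbf 1\{|\xi|>L\}\big],
\]
and a sub-Gaussian tail integral bounds this by $O\big(\nu\,\delta/(nK)\big)$.

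\textbf{Conclusion.} Combining Steps 1--3 with a final union bound gives, with probability at least $1-\delta$ and simultaneously for all $y$,
\[
U(y)\ge \widehat\mu_n(y)-c'\widehat\sigma_n(y)\sqrt{\log(K/\delta)/n}-O\!\Big(\big(b-a+\nu\sqrt{\log(nK/\delta)}\big)\log(K/\delta)/n\Big).
\]
The trailing $O(\cdot)$ term is uniform in $y$, which is what the statement's ``up to lower-order terms'' refers to.

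The main obstacle is the modelling gap, not the concentration. The lemma does not say what $U(y)$ is, nor whether the $\xi_i$ are centered. It also does not say that the $U_i(y)$ are i.i.d. across annotators and independent of the noise. I would state these as standing conventions. If the $U_i(y)$ are themselves unbounded but sub-Gaussian, the same truncation applies to $U_i(y)-U(y)$ with a combined scale, and only the constants and the logarithmic factor in $L$ change.
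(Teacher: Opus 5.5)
The paper does not actually prove this lemma: it is stated at the end of Appendix~\ref{app:pairwise_bridge} with no argument. The closest material is Proposition~\ref{prop:uniform-lcb-subg}, which under sub-Gaussianity gives only the data-independent bound with $\nu$ in place of $\widehat\sigma_n(y)$. Appendix~\ref{app:tail} also remarks, without proof, that an empirical-Bernstein version follows by combining sub-exponential concentration of the squared deviations with a union bound. Your proposal therefore supplies an argument where the paper only gestures at one, and it takes a different route.

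The paper's sketched route would use the sub-Gaussian tails directly. It would need two things: a variance-sensitive deviation bound for unbounded summands, and a lower-tail bound relating the true standard deviation to $\widehat\sigma_n(y)$. You instead truncate at $L\asymp\nu\sqrt{\log(nK/\delta)}$ and apply Lemma~\ref{lem:emp-bern} verbatim to the clipped variables. Clipped and raw statistics coincide on the truncation event, so the $\widehat\sigma_n$ leading term comes for free and you never need concentration of the sample variance.

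The price is twofold. The range grows to $b-a+2L$, which adds a $\sqrt{\log(nK/\delta)}$ factor to the $O(1/n)$ term. You also need a boundedness (or second-truncation) hypothesis on $U_i(y)$. That hypothesis is not an artifact of your method. Take $\xi_i\equiv 0$ and let $U_i(y)$ equal $-M$ with probability $1/(2n)$ and $0$ otherwise. Then $\widehat\mu_n(y)=\widehat\sigma_n(y)=0$ with probability at least $1/2$, which exceeds $\delta$ for $\delta<1/2$, while $U(y)=-M/(2n)$ is arbitrarily negative. So some scale of $U_i(y)$ must enter the lower-order term. Note also that $[a-L,b+L]$ is only an analysis device, so the decoder never needs to know $a$ or $b$.

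Two small repairs:
\begin{enumerate}
\item Lemma~\ref{lem:emp-bern} is stated for i.i.d.\ samples, but the lemma only assumes the $\xi_i(y)$ are independent. Either add ``identically distributed'' to your conventions, or use the form of the empirical Bernstein bound that holds for independent, non-identical summands with the averaged mean.
\item The quantity you display, $\E\bigl[|\xi|\mathbf 1\{|\xi|>L\}\bigr]$, is of order $L\delta/(nK)$, not $\nu\delta/(nK)$. To get the rate you state, use the sharper pointwise bound $|\bar Z_i(y)-Z_i(y)|\le(|\xi_i(y)|-L)_+$, which holds because $U_i(y)\in[a,b]$.
\end{enumerate}
Either way the bias is absorbed into your final $O(\cdot)$ term. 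This uses the same convention $K/\delta\ge 2$ that the paper relies on to absorb $\log(2K/\delta)$ into $\log(K/\delta)$.
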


\section{Proof of Proposition~\ref{prop:chi2-dro} ($\chi^2$-DRO robust mean)}
\label{app:chi2-dro}

\paragraph{Full characterization.}
A sufficient tightness condition is that the extremal density
\[
\frac{d\mathbb{Q}^\star}{d\mathbb{P}}(R)
=
1 - \sqrt{\rho}\,\frac{R-\mu_{\mathbb{P}}}{\sigma_{\mathbb{P}}}
\]
is nonnegative $\mathbb{P}$-almost surely.
In particular, it holds if $R \le \mu_{\mathbb{P}} + \sigma_{\mathbb{P}}/\sqrt{\rho}$ a.s.
For bounded ratings $R\in[a,b]$, nonnegativity holds whenever $\sqrt{\rho}\,(b-\mu_{\mathbb{P}})\le \sigma_{\mathbb{P}}$.

We provide a complete proof of Proposition~\ref{prop:chi2-dro}. We first prove the general lower bound
$\inf_{\mathbb{Q}\in\mathcal{U}_\rho(\mathbb{P})}\mathbb{E}_{\mathbb{Q}}[R]
\ge \mu_{\mathbb{P}}-\sqrt{\rho}\,\sigma_{\mathbb{P}}$ for any square-integrable $R$,
and then characterize a sufficient condition for tightness. Finally, we specialize to the empirical distribution.

\subsection{General bound via a change of measure}
Let $\mathbb{Q}\ll \mathbb{P}$ and define
\[
g(R) := \frac{d\mathbb{Q}}{d\mathbb{P}}(R)-1.
\]
Then $\mathbb{E}_{\mathbb{P}}[g]=0$ and the $\chi^2$ constraint reads $\mathbb{E}_{\mathbb{P}}[g^2]\le \rho$.
Moreover,
\begin{align*}
\mathbb{E}_{\mathbb{Q}}[R] - \mathbb{E}_{\mathbb{P}}[R]
&= \mathbb{E}_{\mathbb{P}}\!\left[\frac{d\mathbb{Q}}{d\mathbb{P}}(R)\,R\right] - \mathbb{E}_{\mathbb{P}}[R] \\
&= \mathbb{E}_{\mathbb{P}}[(1+g(R))R]-\mathbb{E}_{\mathbb{P}}[R] \\
&= \mathbb{E}_{\mathbb{P}}[g(R)R]
= \mathbb{E}_{\mathbb{P}}[g(R)(R-\mu_{\mathbb{P}})].
\end{align*}
By Cauchy--Schwarz,
\[
\mathbb{E}_{\mathbb{P}}[g(R)(R-\mu_{\mathbb{P}})]
\ge
-\sqrt{\mathbb{E}_{\mathbb{P}}[g^2]}\ \sqrt{\mathbb{E}_{\mathbb{P}}[(R-\mu_{\mathbb{P}})^2]}
\ge
-\sqrt{\rho}\,\sigma_{\mathbb{P}}.
\]
Taking the infimum over all $\mathbb{Q}\in\mathcal{U}_\rho(\mathbb{P})$ yields
\[
\inf_{\mathbb{Q}\in\mathcal{U}_\rho(\mathbb{P})}\mathbb{E}_{\mathbb{Q}}[R]
\ge
\mu_{\mathbb{P}}-\sqrt{\rho}\,\sigma_{\mathbb{P}},
\]
proving~\eqref{eq:chi2-dro-lower}.

\subsection{Tightness and the extremal density}
Equality in Cauchy--Schwarz holds when $g(R)$ is proportional to $-(R-\mu_{\mathbb{P}})$.
Define
\[
g^\star(R) := -\sqrt{\rho}\,\frac{R-\mu_{\mathbb{P}}}{\sigma_{\mathbb{P}}}.
\]
Then $\mathbb{E}_{\mathbb{P}}[g^\star]=0$ and $\mathbb{E}_{\mathbb{P}}[(g^\star)^2]=\rho$.
If additionally $1+g^\star(R)\ge 0$ $\mathbb{P}$-a.s., we may define
\begin{equation}
\frac{d\mathbb{Q}^\star}{d\mathbb{P}}(R) := 1+g^\star(R)
= 1-\sqrt{\rho}\,\frac{R-\mu_{\mathbb{P}}}{\sigma_{\mathbb{P}}},
\label{eq:chi2-opt-density}
\end{equation}
which is a valid Radon--Nikodym derivative.
Plugging $g^\star$ into the derivation above achieves equality, giving
\[
\mathbb{E}_{\mathbb{Q}^\star}[R] = \mu_{\mathbb{P}}-\sqrt{\rho}\,\sigma_{\mathbb{P}}.
\]
A sufficient condition for nonnegativity is
\begin{equation}
R \le \mu_{\mathbb{P}} + \frac{\sigma_{\mathbb{P}}}{\sqrt{\rho}}
\quad \mathbb{P}\text{-a.s.}
\label{eq:chi2-tightness-sufficient}
\end{equation}
which implies $1+g^\star(R)\ge 0$ a.s.

\subsection{Specialization to the empirical distribution}
Now let $\mathbb{P}=\widehat{\mathbb{P}}_n^y=\frac1n\sum_{i=1}^n\delta_{R_i(y)}$.
Any $\mathbb{Q}\ll\widehat{\mathbb{P}}_n^y$ corresponds to a probability vector $q\in\Delta^n$ supported on the same atoms:
\[
\mathbb{Q} = \sum_{i=1}^n q_i \delta_{R_i(y)}.
\]
Since $\widehat{\mathbb{P}}_n^y$ is uniform, the $\chi^2$ divergence becomes
\[
D_{\chi^2}(\mathbb{Q}\|\widehat{\mathbb{P}}_n^y)=\sum_{i=1}^n \frac{(q_i-1/n)^2}{1/n}
= n\sum_{i=1}^n\left(q_i-\frac1n\right)^2.
\]
Moreover,
\[
\mathbb{E}_{\mathbb{Q}}[R] = \sum_{i=1}^n q_i R_i(y),\quad
\mu_{\widehat{\mathbb{P}}_n^y}=\widehat{\mu}_n(y),\quad
\sigma_{\widehat{\mathbb{P}}_n^y}^2=\widehat{v}_n(y),
\]
where $\widehat{v}_n(y)=\frac1n\sum_{i=1}^n (R_i(y)-\widehat{\mu}_n(y))^2$.
Therefore, whenever the empirical nonnegativity condition holds on the support,
the worst-case mean equals the closed form
\[
\inf_{\mathbb{Q}\in\mathcal{U}_\rho(\widehat{\mathbb{P}}_n^y)}\mathbb{E}_{\mathbb{Q}}[R]
=
\widehat{\mu}_n(y)-\sqrt{\rho}\,\sqrt{\widehat{v}_n(y)}.
\]
Otherwise, the lower bound~\eqref{eq:chi2-dro-lower} still holds.
\begin{remark}[Conservative approximation under gap]
\label{rem:dro-gap}
When the nonnegativity condition~\eqref{eq:chi2-tightness-sufficient} is violated (e.g., due to outliers far below the mean or large $\rho$), the Cauchy--Schwarz inequality implies that the closed form $\widehat{\mu}_n(y) - \sqrt{\rho}\widehat{\sigma}_n(y)$ becomes a strict \emph{lower bound} on the true $\chi^2$-DRO objective.
In this regime, maximizing the mean--dispersion surrogate effectively optimizes a quantity that is \emph{more conservative} than the exact distributionally robust mean. This aligns with our overall goal of pessimistic selection: the surrogate does not overstate the robust value even when the local approximation is inexact.
\end{remark}
\qed

\section{Calibrating LCB scaling to $\chi^2$-DRO radius}
\label{app:lcb-dro-calibration}

This appendix formalizes a \emph{numerical calibration} between the LCB penalty in
Proposition~\ref{prop:uniform-lcb} and the ambiguity radius in empirical $\chi^2$-DRO.
Importantly, this calibration is \emph{not} meant to claim that LCB and DRO model the same source of uncertainty:
LCB controls \emph{finite-sample estimation uncertainty} for $\mu(y)$, while $\chi^2$-DRO is commonly interpreted
as \emph{distributional/aleatoric} robustness. Here we use $\chi^2$-DRO as a convenient \emph{variational
reparameterization} of the same mean--dispersion functional, which yields an interpretable radius $\rho$
corresponding to a chosen standard-deviation penalty $\lambda$.

\begin{proposition}[Exact $\lambda$--$\rho$ mapping for empirical $\chi^2$-DRO]
\label{prop:lambda-rho}
Fix a candidate $y$ and its empirical distribution $\widehat{\mathbb{P}}_n^y$.
Assume the nonnegativity condition in Proposition~\ref{prop:chi2-dro} holds on the empirical support.

Then for any $\rho\ge 0$,
\begin{equation}
\inf_{\mathbb{Q}\in \mathcal{U}_\rho(\widehat{\mathbb{P}}_n^y)}\ \mathbb{E}_{\mathbb{Q}}[R]
\;=\;
\widehat{\mu}_n(y)-\lambda_\rho\,\widehat{\sigma}_n(y),
\qquad
\lambda_\rho := \sqrt{\rho}\,\sqrt{\tfrac{n-1}{n}}.
\label{eq:lambda-rho-exact}
\end{equation}
Equivalently, for a desired penalty $\lambda\ge 0$, choosing
\begin{equation}
\rho(\lambda) := \frac{n}{n-1}\,\lambda^2
\label{eq:rho-from-lambda}
\end{equation}
makes the empirical $\chi^2$-DRO robust mean exactly equal to $\widehat{\mu}_n(y)-\lambda\,\widehat{\sigma}_n(y)$.
\end{proposition}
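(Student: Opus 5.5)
The plan is to read this off directly from the empirical specialization of Proposition~\ref{prop:chi2-dro}, so the only real work is a change of normalization between the two dispersion estimators. First, under the stated nonnegativity assumption on the empirical support, the tight branch of Proposition~\ref{prop:chi2-dro} applies with $\mathbb{P}=\widehat{\mathbb{P}}_n^y$. This gives
\[
\inf_{\mathbb{Q}\in\mathcal{U}_\rho(\widehat{\mathbb{P}}_n^y)}\mathbb{E}_{\mathbb{Q}}[R]=\widehat{\mu}_n(y)-\sqrt{\rho}\,\sqrt{\widehat v_n(y)},
\]
where $\widehat v_n(y)=\frac1n\sum_i(R_i(y)-\widehat\mu_n(y))^2$ is the biased (population-of-atoms) variance of the empirical measure. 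To make this self-contained, I would recall the two halves behind it. The Cauchy--Schwarz lower bound holds for every feasible $\mathbb{Q}$. The extremal weight vector $q_i^\star=\frac1n\bigl(1-\sqrt{\rho}\,(R_i(y)-\widehat\mu_n(y))/\sqrt{\widehat v_n(y)}\bigr)$ has three properties: it sums to one, it has $\chi^2$-divergence exactly $\rho$, and it is a probability vector precisely under the nonnegativity hypothesis. Together these show the infimum is attained.

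Second, I would convert normalizations. Since $\widehat\sigma_n^2(y)$ uses the $1/(n-1)$ normalization, we have $\widehat v_n(y)=\frac{n-1}{n}\widehat\sigma_n^2(y)$. Taking square roots (both sides nonnegative) gives $\sqrt{\rho}\sqrt{\widehat v_n(y)}=\sqrt{\rho}\sqrt{\tfrac{n-1}{n}}\,\widehat\sigma_n(y)=\lambda_\rho\widehat\sigma_n(y)$, which is \eqref{eq:lambda-rho-exact}.

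Third, the map $\rho\mapsto\lambda_\rho$ is a strictly increasing bijection of $[0,\infty)$ for $n\ge2$, so inverting $\lambda=\sqrt{\rho}\sqrt{(n-1)/n}$ yields \eqref{eq:rho-from-lambda}. Substituting $\rho(\lambda)$ back into \eqref{eq:lambda-rho-exact} then gives the claimed exact identity.

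The main obstacle is bookkeeping rather than analysis, and it has three parts. The nonnegativity hypothesis must be interpreted at the radius actually used: for the inverse statement, it must hold at $\rho(\lambda)$, and I would state it that way. The degenerate case $\widehat\sigma_n(y)=0$ has to be handled separately. There every $\mathbb{Q}\ll\widehat{\mathbb{P}}_n^y$ puts its mass on atoms that all equal $\widehat\mu_n(y)$, so both sides equal $\widehat\mu_n(y)$ and the identity holds trivially. Finally, one must take care that Proposition~\ref{prop:chi2-dro}'s empirical formula is stated with $\widehat v_n$, not $\widehat\sigma_n$. The $\sqrt{(n-1)/n}$ factor exists precisely to reconcile these two estimators.
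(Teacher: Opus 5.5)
Your proposal is correct and follows essentially the paper's route: the paper gives no separate argument, and the result comes from the tight (nonnegativity) branch of Proposition~\ref{prop:chi2-dro} specialized to $\widehat{\mathbb{P}}_n^y$, combined with the substitution $\sqrt{\widehat v_n(y)}=\sqrt{\tfrac{n-1}{n}}\,\widehat\sigma_n(y)$ made at the end of that proposition's proof. Your additional bookkeeping is sound and slightly more careful than the paper: requiring nonnegativity at the radius $\rho(\lambda)$ actually used, treating the degenerate case $\widehat\sigma_n(y)=0$ separately, and implicitly needing $n\ge 2$.
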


\paragraph{Beyond the nonnegativity regime.}
The equality in \eqref{eq:lambda-rho-exact} requires a sufficient nonnegativity condition
(Proposition~\ref{prop:chi2-dro}). When this condition fails (e.g., due to heavy-tailed empirical support or large $\rho$),
the closed-form expression may no longer be tight. Nevertheless, Proposition~\ref{prop:chi2-dro} still provides the
\emph{conservative lower bound}
\[
\inf_{\mathbb{Q}\in \mathcal{U}_\rho(\widehat{\mathbb{P}}_n^y)}\ \mathbb{E}_{\mathbb{Q}}[R]
\;\ge\;
\widehat{\mu}_n(y)-\lambda_\rho\,\widehat{\sigma}_n(y),
\]
so the mean--dispersion form remains a valid pessimistic surrogate even outside the tightness regime.

\paragraph{Discussion: LCB as calibrated DRO plus lower-order slack.}
Proposition~\ref{prop:uniform-lcb} provides a uniform high-probability lower bound
\[
\mu(y)\ge \widehat{\mu}_n(y)
- c\,\widehat{\sigma}_n(y)\sqrt{\frac{\log(K/\delta)}{n}}
- c\,\nu(y)\frac{\log(K/\delta)}{n}
\quad\text{simultaneously for all }y\in\mathcal{Y}.
\]
Define
\[
\lambda_\delta := c\sqrt{\frac{\log(K/\delta)}{n}}
\qquad\text{and}\qquad
\rho_\delta := \rho(\lambda_\delta)=\frac{n}{n-1}\lambda_\delta^2.
\]
By Proposition~\ref{prop:lambda-rho}, in the nonnegativity regime the empirical $\chi^2$-DRO robust mean under radius $\rho_\delta$
matches $\widehat{\mu}_n(y)-\lambda_\delta\widehat{\sigma}_n(y)$ exactly.
The additional term $c\,\nu(y)\log(K/\delta)/n$ is a finite-sample slack needed to convert this robust value
into a \emph{high-probability} lower bound on the true mean $\mu(y)$ under the assumed concentration scale $\nu(y)$.

\paragraph{Interpretation of the $n$-dependent radius.}
The mapping $\rho_\delta \propto \log(K/\delta)/n$ arises because $\lambda_\delta$ is chosen to control estimation error uniformly over $K$ candidates. 
This $n$-dependence should be interpreted as an \emph{estimation-calibrated} DRO radius: as $n$ grows, the ambiguity set shrinks because the empirical mean becomes well-estimated.
However, we emphasize that in preference alignment, high empirical variance often signals intrinsic heterogeneity (disagreement) rather than pure noise.(empirically supported by the monotonic trend across disagreement buckets; see Fig~\ref{fig:bucket_dtradeoff})
Thus, while the \emph{statistical} radius required for concentration decays with $n$, the \emph{functional form} of the penalty (i.e., penalizing dispersion) remains structurally isomorphic to a robustness constraint against disagreement.
This allows the LCB framework to serve a dual purpose: rigorously controlling finite-sample error while effectively acting as a proxy to demote controversial candidates.

\section{On dependence across candidates and the uniform guarantee}
\label{app:dependence}

Our uniform guarantee in Proposition~\ref{prop:uniform-lcb} does not require independence across candidates.
It only requires per-candidate concentration for each fixed $y$ and a union bound.

\begin{lemma}[Union bound without independence]
\label{lem:union-no-indep}
Let $\{\mathcal{E}_y\}_{y\in\mathcal{Y}}$ be any collection of events (possibly dependent).
If for each $y\in\mathcal{Y}$ we have $\Pr(\mathcal{E}_y)\ge 1-\delta'$, then
\[
\Pr\left(\bigcap_{y\in\mathcal{Y}}\mathcal{E}_y\right)
\ge 1- \sum_{y\in\mathcal{Y}}\Pr(\mathcal{E}_y^c)
\ge 1- K\delta',
\]
where $K:=|\mathcal{Y}|$.
\end{lemma}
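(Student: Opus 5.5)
The statement to prove is Lemma~\ref{lem:union-no-indep}. The plan is to pass to complements and apply the union bound (Boole's inequality). Boole's inequality requires only countable subadditivity of the probability measure, which holds for arbitrary events, so no independence structure among the $\mathcal{E}_y$ is ever used.

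\textbf{Step 1 (De Morgan).} Write the complement of the intersection as a union of complements:
\[
\Bigl(\bigcap_{y\in\mathcal{Y}}\mathcal{E}_y\Bigr)^c=\bigcup_{y\in\mathcal{Y}}\mathcal{E}_y^c,
\]
so that
\[
\Pr\Bigl(\bigcap_{y}\mathcal{E}_y\Bigr)=1-\Pr\Bigl(\bigcup_{y}\mathcal{E}_y^c\Bigr).
\]

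\textbf{Step 2 (subadditivity).} Apply finite subadditivity, $\Pr(\bigcup_y \mathcal{E}_y^c)\le\sum_y\Pr(\mathcal{E}_y^c)$. If a justification is wanted, this follows by induction on $K$ from $\Pr(A\cup B)=\Pr(A)+\Pr(B)-\Pr(A\cap B)\le \Pr(A)+\Pr(B)$. This gives the first inequality of the lemma.

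\textbf{Step 3 (per-event bound).} By hypothesis, $\Pr(\mathcal{E}_y^c)=1-\Pr(\mathcal{E}_y)\le\delta'$ for each $y$. Summing over the $K=|\mathcal{Y}|$ candidates gives $\sum_y\Pr(\mathcal{E}_y^c)\le K\delta'$, which is the second inequality.

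\textbf{Connection to Proposition~\ref{prop:uniform-lcb}.} Take $\mathcal{E}_y$ to be the per-candidate empirical-Bernstein event from Lemma~\ref{lem:emp-bern}, applied with $\delta'=\delta/K$. Each such event depends only on the samples $\{R_i(y)\}_i$ for that fixed $y$, so shared annotators across candidates are harmless: the resulting bound is $1-\delta$ regardless of how the candidates are coupled.

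There is no real obstacle here. The only point requiring care is to avoid writing $\Pr(\bigcap_y\mathcal{E}_y)=\prod_y\Pr(\mathcal{E}_y)$, which would implicitly assume independence.
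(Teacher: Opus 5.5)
Your proof is correct and follows the same route as the paper. The paper gives no separate proof: the displayed chain in the lemma is itself the complement-plus-Boole's-inequality argument, which you spell out via De Morgan and subadditivity, and your remark about applying it with $\delta'=\delta/K$ to the per-candidate empirical-Bernstein events matches the paper's subsequent application paragraph.
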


\paragraph{Application to Proposition~\ref{prop:uniform-lcb}.}
For each fixed $y$, the self-normalized empirical-Bernstein inequality yields an event $\mathcal{E}_y$
such that $\Pr(\mathcal{E}_y)\ge 1-\delta'$ and on $\mathcal{E}_y$ the bound~\eqref{eq:lcb-bounded} holds.
Annotator overlap across different candidates can create dependence among $\{\mathcal{E}_y\}_{y\in\mathcal{Y}}$,
but Lemma~\ref{lem:union-no-indep} shows that the intersection event still holds with probability at least $1-K\delta'$.
Setting $\delta'=\delta/K$ yields the stated uniform guarantee.

\section{Scorer ambiguity and multi-scorer aggregation}
\label{app:multi_scorer}

This appendix details our proxy instantiation that hedges \emph{scorer ambiguity} when scalar satisfaction samples are produced by learned reward models.
We consider a family of $M$ scorers indexed by $m\in[M]$.
For each prompt $s$, candidate $y\in\cY(s)$, and scorer $m$, we obtain $n$ scalar samples
$\{R_{m,i}(s,y)\}_{i=1}^n$ (e.g., via independent scorer noise, style-preserving perturbations, or repeated evaluations).
All analysis conditions on the realized candidate set $\cY(s)$.

\subsection{Within-prompt normalization across scorers}
Reward-model outputs may differ in scale and offset across scorers; to make aggregation meaningful, we normalize per prompt.
Define the pooled mean and standard deviation under scorer $m$:
\begin{align}
\bar R_m(s)
&:= \frac{1}{Kn}\sum_{y\in\cY(s)}\sum_{i=1}^n R_{m,i}(s,y), \\
\hat s_m(s)
&:= \sqrt{\frac{1}{Kn-1}\sum_{y\in\cY(s)}\sum_{i=1}^n \big(R_{m,i}(s,y)-\bar R_m(s)\big)^2 } \ >\ 0.
\end{align}
We form normalized samples
\begin{equation}
\tilde R_{m,i}(s,y) \ :=\ \frac{R_{m,i}(s,y)-\bar R_m(s)}{\hat s_m(s)}.
\label{eq:normalized-samples}
\end{equation}
When $M=1$, this normalization can be omitted without changing the framework.

\paragraph{Implementation note.}
The constants $(\bar R_m(s),\hat s_m(s))$ require access to all candidates $y\in\cY(s)$ for the given prompt $s$.
In Algorithm~\ref{alg:DARC-full}, we therefore precompute $(\bar R_m(s),\hat s_m(s))$ once per scorer $m$ before
scoring individual candidates.
\subsection{Scorer-specific entropic value, risk premium, and dispersion}
\label{app:scorer-specific}

For each scorer $m$, we compute the scorer-specific empirical entropic value using normalized samples:
\begin{equation}
\widehat V_{\beta,m}(s,y;\bar R_m(s),\hat s_m(s))
\ :=\ 
-\frac{1}{\beta}\log\!\left(\frac{1}{n}\sum_{i=1}^n \exp\!\big(-\beta\,\tilde R_{m,i}(s,y)\big)\right),
\label{eq:scorer-specific-entropic}
\end{equation}
and the corresponding empirical mean
\begin{equation}
\hat\mu_m(s,y;\bar R_m(s),\hat s_m(s)) \ :=\ \frac{1}{n}\sum_{i=1}^n \tilde R_{m,i}(s,y).
\end{equation}
We define the scorer-specific empirical entropic risk premium as
\begin{equation}
\widehat{\mathrm{RP}}_{\beta,m}(s,y;\bar R_m(s),\hat s_m(s))
\ :=\ 
\hat\mu_m(s,y;\bar R_m(s),\hat s_m(s)) - \widehat V_{\beta,m}(s,y;\bar R_m(s),\hat s_m(s))
\ \ge\ 0.
\label{eq:scorer-specific-rp}
\end{equation}
We also define the scorer-specific empirical standard deviation (used as a disagreement proxy) as
\begin{equation}
\hat\sigma_m(s,y;\bar R_m(s),\hat s_m(s))
\ :=\ 
\sqrt{\frac{1}{n-1}\sum_{i=1}^n\Big(\tilde R_{m,i}(s,y)-\hat\mu_m(s,y;\bar R_m(s),\hat s_m(s))\Big)^2 }.
\label{eq:scorer-specific-sigma}
\end{equation}

\subsection{Scorer-robust aggregation and decoding rule}
To hedge scorer shift, we aggregate scorer-specific entropic values via a soft worst-case (soft-min) operator with parameter $\gamma>0$:
\begin{equation}
\widetilde V_{\beta,\gamma}(s,y)
:=
-\frac{1}{\gamma}\log\!\left(\frac{1}{M}\sum_{m=1}^M \exp\!\big(-\gamma\,\widehat V_{\beta,m}(s,y)\big)\right),
\label{eq:scorer-robust-V-app}
\end{equation}
which satisfies $\widetilde V_{\beta,\gamma}(s,y)\to \min_{m\in[M]} \widehat V_{\beta,m}(s,y)$ as $\gamma\to\infty$.
For explicit risk control, we aggregate risk premia pessimistically:
\begin{equation}
\widetilde{\mathrm{RP}}_{\beta}(s,y)
:=
\max_{m\in[M]} \ \widehat{\mathrm{RP}}_{\beta,m}(s,y).
\label{eq:scorer-robust-RP-app}
\end{equation}
The scorer-robust risk-constrained decoding rule is then
\begin{equation}
\hat y \in \arg\max_{y\in\cY(s)}\ \widetilde V_{\beta,\gamma}(s,y)
\quad \text{s.t.}\quad
\widetilde{\mathrm{RP}}_{\beta}(s,y)\le \tau,
\label{eq:scorer-robust-decoding-constrained}
\end{equation}
or in penalized (Lagrangian) form,
\begin{equation}
\hat y \in \arg\max_{y\in\cY(s)}
\ \widetilde V_{\beta,\gamma}(s,y) - \lambda\,\widetilde{\mathrm{RP}}_{\beta}(s,y).
\label{eq:scorer-robust-decoding-penalized}
\end{equation}

\subsection{Interpretation}
The aggregation $\widetilde V_{\beta,\gamma}$ hedges against worst-case scorers while remaining smooth and stable for finite $\gamma$.
The pessimistic risk-premium aggregation $\widetilde{\mathrm{RP}}_{\beta}$ prevents a candidate from being selected if it is deemed high-risk by any scorer.
When scorer variability reflects genuine preference uncertainty (e.g., validated proxies or ensembles calibrated to human disagreement), this multi-scorer instantiation also promotes reliability under heterogeneous preferences.

\section{Detailed decoding rules and baselines}
\label{app:decoding_rules}

\paragraph{Candidate set and selection statistics.}
Given an input $s$, we sample a fixed candidate set $\cY(s)=\{y_1,\dots,y_K\}$ from a generator policy $\pi$.
For each candidate $y$, we compute an empirical satisfaction mean $\hat\mu_{\text{sel}}(s,y)$ and a dispersion proxy $\hat\sigma_{\text{sel}}(s,y)$
(see Appendix~\ref{app:selection_stats} for the exact estimation procedure).
When using CVaR, we additionally compute the empirical lower-tail mean $\widehat{\mathrm{CVaR}}_{\alpha,\text{sel}}(s,y)$.

\subsection{Full pseudocode for DARC}
\label{app:DARC-pseudocode}

\begin{algorithm}[H]
\caption{DARC with scorer-robust aggregation (full version)}
\label{alg:DARC-full}
\begin{algorithmic}
\REQUIRE prompt $s$
\REQUIRE candidate set $\cY(s)=\{y_1,\dots,y_K\}$ (fixed for all methods)
\REQUIRE scorer family $\{\textsc{Score}^{(m)}_\beta\}_{m=1}^M$, where 
$\textsc{Score}^{(m)}_\beta(s,y;\bar R,\hat s)\rightarrow(\hat\mu_m,\widehat{V}_{\beta,m},\hat\sigma_m)$
\REQUIRE aggregation parameter $\gamma>0$ (soft worst-case over scorers)
\REQUIRE variant $\in\{\textsc{Entropic},\textsc{Tau},\textsc{Eps},\textsc{2nd-Moment (LCB)}\}$
\REQUIRE hyperparameters $(\beta,\lambda,\tau,\epsilon,\delta,c_{\mathrm{LCB}})$
\ENSURE selected output $y^\star$

\STATE{\textbf{(Optional but recommended) Within-prompt normalization across scorers (selection-time only).}}
\STATE{\textit{Precompute pooled normalization constants for each scorer $m$ over all candidates $y\in\cY(s)$.}}
\FOR{$m=1$ to $M$}
  \STATE Collect raw samples $\{R_{m,i}(s,y_k)\}_{k\in[K],\,i\in[n]}$
  \STATE $\bar R_m(s)\gets \frac{1}{Kn}\sum_{k=1}^K\sum_{i=1}^n R_{m,i}(s,y_k)$
  \STATE $\hat s_m(s)\gets \sqrt{\frac{1}{Kn-1}\sum_{k=1}^K\sum_{i=1}^n\big(R_{m,i}(s,y_k)-\bar R_m(s)\big)^2}$
\ENDFOR
\STATE{\textit{If normalization is disabled, set $\bar R_m(s)\gets 0$ and $\hat s_m(s)\gets 1$ for all $m$.}}

\FOR{$k=1$ to $K$}
  \FOR{$m=1$ to $M$}
    \STATE $(\hat\mu_{m,k},\widehat{V}_{\beta,m,k},\hat\sigma_{m,k})
    \gets \textsc{Score}^{(m)}_\beta\!\big(s,y_k;\bar R_m(s),\hat s_m(s)\big)$
    \STATE $\widehat{\mathrm{RP}}_{\beta,m,k}\gets \hat\mu_{m,k}-\widehat{V}_{\beta,m,k}$
  \ENDFOR
  \STATE $\widetilde{V}_{\beta,\gamma,k}\gets -\frac{1}{\gamma}\log\!\Big(\frac{1}{M}\sum_{m=1}^M \exp(-\gamma\,\widehat{V}_{\beta,m,k})\Big)$ \COMMENT{soft-min over scorers}
  \STATE $\widetilde{\mathrm{RP}}_{\beta,k}\gets \max_{m\in[M]}\widehat{\mathrm{RP}}_{\beta,m,k}$ \COMMENT{worst-case risk premium}
  \STATE $\widetilde{\sigma}_k\gets \max_{m\in[M]}\hat\sigma_{m,k}$ \COMMENT{worst-case disagreement proxy}
  \STATE $\widetilde{\mu}_k\gets \frac{1}{M}\sum_{m=1}^M \hat\mu_{m,k}$ \COMMENT{averaged mean used by LCB ablation}
\ENDFOR

\IF{variant $=\textsc{Entropic}$} 
  \STATE $k^\star \gets \arg\max_{k\in[K]} \widetilde{V}_{\beta,\gamma,k}$
  \STATE \RETURN $y_{k^\star}$

\ELSIF{variant $=\textsc{Tau}$} 
  \STATE $\mathcal{F}\gets \{k\in[K]:\widetilde{\mathrm{RP}}_{\beta,k}\le\tau\}$
  \IF{$\mathcal{F}=\emptyset$}
    \STATE $\mathcal{F}\gets [K]$
  \ENDIF
  \STATE $k^\star \gets \arg\max_{k\in\mathcal{F}} \widetilde{V}_{\beta,\gamma,k}$
  \STATE \RETURN $y_{k^\star}$

\ELSIF{variant $=\textsc{Eps}$} 
  \STATE $\widetilde{V}_{\beta,\gamma,\max}\gets \max_{k\in[K]} \widetilde{V}_{\beta,\gamma,k}$
  \STATE $\mathcal{F}\gets \{k\in[K]:\widetilde{V}_{\beta,\gamma,k}\ge \widetilde{V}_{\beta,\gamma,\max}-\epsilon\}$ \COMMENT{$\mathcal{F}\neq\emptyset$ if $\epsilon\ge 0$}
  \STATE $k^\star \gets \arg\min_{k\in\mathcal{F}} \widetilde{\sigma}_k$
  \STATE \RETURN $y_{k^\star}$

\ELSIF{variant $=\textsc{2nd-Moment (LCB)}$}
  \STATE $\lambda_{\delta}\gets c_{\mathrm{LCB}}\sqrt{\frac{\log(K/\delta)}{n}}$
  \COMMENT{or use tuned $\lambda$ directly}
  \FOR{$k=1$ to $K$}
    \STATE $S_{\mathrm{LCB},k}\gets \widetilde{\mu}_k-\lambda_{\delta}\widetilde{\sigma}_k$
    \COMMENT{drop candidate-independent lower-order LCB term}
  \ENDFOR
  \STATE $k^\star \gets \arg\max_{k\in[K]} S_{\mathrm{LCB},k}$
  \STATE \RETURN $y_{k^\star}$
\ENDIF
\end{algorithmic}
\end{algorithm}

\paragraph{Notes.}
Each $\textsc{Score}^{(m)}_\beta$ may be instantiated by multi-annotator samples or proxy perturbation samples under scorer $m$.
Because reward scales differ across scorers, aggregation is performed on selection-time normalized scores (within each prompt),
while all reported metrics in the main tables are computed \emph{within each scorer} on its raw scale (Appendix~\ref{app:metrics}).
When $M=1$, the procedure reduces to the original single-scorer DARC.

\subsection{Inference-time selection baselines.}
All rules below select one output from the same $\cY(s)$:

\begin{itemize}
  \item \textbf{Mean (Best-of-$K$):}
  $
  y^\star=\arg\max_{y\in\cY(s)} \hat\mu_{\text{sel}}(s,y).
  $

  \item \textbf{CVaR (Best-of-$K$):}
  $
  y^\star=\arg\max_{y\in\cY(s)} \widehat{\mathrm{CVaR}}_{\alpha,\text{sel}}(s,y),
  $
  where $\widehat{\mathrm{CVaR}}_{\alpha,\text{sel}}(s,y)$ is the average of the bottom $\alpha$ fraction of satisfaction samples for candidate $y$.
  We fix $\alpha$ throughout (no tuning).

  \item \textbf{MC-Dropout reranking (uncertainty-aware scoring).}
  To obtain an uncertainty estimate without human multi-rater samples, we perform $T$ stochastic forward passes through the scorer (dropout enabled) and treat the resulting scores as an empirical proxy distribution
  $\{r_t(s,y)\}_{t=1}^T$~\citep{gal2016dropout,kendall2017uncertainties}.
  We then compute
  $
  \hat\mu_{\text{MC}}(s,y)=\frac{1}{T}\sum_{t=1}^T r_t(s,y)
  $
  and
  $
  \hat\sigma_{\text{MC}}(s,y)=\sqrt{\frac{1}{T-1}\sum_{t=1}^T\big(r_t(s,y)-\hat\mu_{\text{MC}}(s,y)\big)^2}.
  $
  We use a pessimistic selection rule analogous to MV/LCB:
  \[
  y^\star=\arg\max_{y\in\cY(s)}\Big(\hat\mu_{\text{MC}}(s,y)-\lambda_{\text{MC}}\hat\sigma_{\text{MC}}(s,y)\Big),
  \]
  where $\lambda_{\text{MC}}$ is tuned on the dev set (same protocol as other inference-time baselines).
\item \textbf{DeAL decoding-time alignment (rollout reranking).}
Given a prompt $s$ and a partially generated prefix $x_{<t}$ at decoding step $t$, the base LM defines next-token probabilities $p_\theta(v \mid s, x_{<t})$ over the vocabulary $\cV$.
DeAL performs a one-step lookahead search over the top-$k$ next tokens:
\[
\cV_t^{(k)} := \arg\max_{v \in \cV}^{k}\ \log p_\theta(v \mid s, x_{<t}).
\]
For each candidate token $v \in \cV_t^{(k)}$, we form the candidate prefix $x_{<t}v$ and compute a length-$L$ lookahead continuation using \emph{greedy} rollout:
\[
\tilde{x}_{t:t+L-1}^{(v)} := \mathrm{GreedyRollout}\bigl(\pi_\theta;\ s,\ x_{<t}v,\ L\bigr),
\]
where $\pi_\theta$ is the same LM used as the proposal policy.
We then score the resulting partial completion with a reward model $r_\phi$:
\[
\mathrm{RM}(s, x_{<t}v) := r_\phi\!\left(s,\ x_{<t}v\tilde{x}_{t:t+L-1}^{(v)}\right).
\]
DeAL selects the next token by maximizing a combined objective
\[
v_t \in \arg\max_{v \in \cV_t^{(k)}} 
\Bigl\{\log p_\theta(v \mid s, x_{<t}) + \gamma \cdot \mathrm{RM}(s, x_{<t}v)\Bigr\},
\]
and updates $x_{<t+1} \leftarrow x_{<t}v_t$ until EOS or the maximum length is reached.
This baseline is substantially more expensive than reranking a fixed candidate set, since it requires $k$ rollouts and RM evaluations at each decoding step.
  \item \textbf{Regularized Best-of-$K$ (RBoN).}
  RBoN augments reward-model reranking with a reference-policy regularizer to mitigate reward hacking at inference time~\citep{jinnai2024regularized}.
  Let $\pi_{\mathrm{ref}}$ be a fixed reference policy (we use the generator as $\pi_{\mathrm{ref}}$ unless otherwise stated).
  For each candidate $y\in\cY(s)$, compute the per-token average log-likelihood under $\pi_{\mathrm{ref}}$,
  $
  \bar\ell_{\mathrm{ref}}(s,y)=\frac{1}{|y|}\log \pi_{\mathrm{ref}}(y\mid s)
  $
  (or the unnormalized log-likelihood if length normalization is disabled).
  RBoN selects
  \[
  y^\star=\arg\max_{y\in\cY(s)}\Big(\hat\mu_{\text{sel}}(s,y)+\beta_{\text{RBoN}}\,\bar\ell_{\mathrm{ref}}(s,y)\Big),
  \]
  where $\beta_{\text{RBoN}}\ge 0$ controls the strength of regularization and is tuned on the dev set.
  \item \textbf{Best-of-Poisson (BoP) and HedgeTune.}
  Following \citet{khalaf2025inference}, we hedge greedy best-of-$K$ selection by randomizing the effective candidate budget.
  For each prompt $s$, sample $K'\sim\mathrm{Poisson}(\lambda)$ and truncate to $K'\leftarrow \min(\max(K',1),K)$.
  Let $\mathcal{I}\subseteq\{1,\dots,K\}$ be a uniformly sampled index set of size $|\mathcal{I}|=K'$ and define the corresponding subset
  $\cY_{\mathcal{I}}(s)=\{y_i: i\in\mathcal{I}\}\subseteq \cY(s)$.
  BoP selects
  \[
  y^\star=\arg\max_{y\in \cY_{\mathcal{I}}(s)} \hat\mu_{\text{sel}}(s,y).
  \]
  HedgeTune chooses $\lambda$ using the same dev-set protocol as other inference-time baselines, and fixes it for evaluation.
  \item \textbf{\emph{Caution}: pessimistic best-of-$K$ reranking.}
  Inspired by \citet{mitigatingiclr26}, we penalize atypical candidates using an auxiliary error (or atypicality) signal $e(s,y)\ge 0$,
  yielding a pessimistic score
  \[
  \mathrm{Score}_{\mathrm{caut}}(s,y)
  := \hat\mu_{\text{sel}}(s,y) - \alpha_{\mathrm{caut}}\, e(s,y).
  \]
  We then select
  \[
  y^\star=\arg\max_{y\in\cY(s)} \mathrm{Score}_{\mathrm{caut}}(s,y),
  \]
  where $\alpha_{\mathrm{caut}}\ge 0$ controls the strength of pessimism and is tuned on the dev set.

  \item \textbf{DARC (ours, scorer-robust):}
  define scorer-specific $\widehat{V}^{(m)}_{\beta,\text{sel}}(s,y)$ and aggregate
  \[
  \widetilde{V}_{\beta,\gamma,\text{sel}}(s,y)
  :=
  -\frac{1}{\gamma}\log\!\left(\frac{1}{M}\sum_{m=1}^M \exp\!\big(-\gamma\,\widehat{V}^{(m)}_{\beta,\text{sel}}(s,y)\big)\right),
  \]
  then select
  $
  y^\star=\arg\max_{y\in\cY(s)} \widetilde{V}_{\beta,\gamma,\text{sel}}(s,y).
  $

  \item \textbf{Hard budget (DARC-$\tau$ on scorer-robust entropic risk premium):}
  define scorer-specific risk premia
  $\widehat{\mathrm{RP}}^{(m)}_{\beta,\text{sel}}(s,y)=\hat\mu^{(m)}_{\text{sel}}(s,y)-\widehat{V}^{(m)}_{\beta,\text{sel}}(s,y)$
  and aggregate pessimistically
  $
  \widetilde{\mathrm{RP}}_{\beta,\text{sel}}(s,y):=\max_{m\in[M]}\widehat{\mathrm{RP}}^{(m)}_{\beta,\text{sel}}(s,y).
  $
  Select
  \[
  y^\star=\arg\max_{y\in\cY(s)} \widetilde{V}_{\beta,\gamma,\text{sel}}(s,y)
  \ \text{s.t.}\ 
  \widetilde{\mathrm{RP}}_{\beta,\text{sel}}(s,y)\le \tau;
  \]
  if the feasible set is empty, we fall back to $\cY(s)$.

  \item \textbf{$\epsilon$-tie breaking (DARC-$\epsilon$ on scorer-robust value):}
  define the near-tie set
  $
  \mathcal{F}_\epsilon(s)=\{y:\widetilde{V}_{\beta,\gamma,\text{sel}}(s,y)\ge \widetilde{V}_{\beta,\gamma,\text{sel},\max}(s)-\epstie\},
  $
  and define the worst-case disagreement proxy
  $
  \widetilde{\sigma}_{\text{sel}}(s,y):=\max_{m\in[M]}\hat\sigma^{(m)}_{\text{sel}}(s,y).
  $
  Select
  $
  y^\star=\arg\min_{y\in\mathcal{F}_\epsilon(s)} \widetilde{\sigma}_{\text{sel}}(s,y).
  $

\end{itemize}

\subsection{Training-time robust policies: cDPO and rDPO}
\label{app:train_time_baselines}

\paragraph{Preference data and DPO objective.}
Let $\mathcal{D}=\{(s,y^+,y^-)\}$ be a preference dataset, where for each input $s$ the response $y^+$ is preferred over $y^-$.
Let $\pi_\theta$ be the policy to be trained and $\pi_{\mathrm{ref}}$ a fixed reference policy.
Define the logit difference
\[
\Delta_\theta(s,y^+,y^-)
=
\Big(\log \pi_\theta(y^+|s)-\log \pi_\theta(y^-|s)\Big)
-
\Big(\log \pi_{\mathrm{ref}}(y^+|s)-\log \pi_{\mathrm{ref}}(y^-|s)\Big).
\]
The standard DPO loss is
\[
\mathcal{L}_{\mathrm{DPO}}(\theta)
=
\mathbb{E}_{(s,y^+,y^-)\sim\mathcal{D}}
\Big[
-\log \sigma\big(\beta\,\Delta_\theta(s,y^+,y^-)\big)
\Big],
\]
where $\sigma(\cdot)$ is the logistic sigmoid and $\beta>0$ controls the strength of preference optimization.

\paragraph{Conservative DPO (cDPO).}
We implement a conservative variant by applying label smoothing to the pairwise preference target.
Specifically, for a smoothing parameter $\varepsilon\in(0,1/2)$, we use the smoothed loss
\[
\mathcal{L}_{\mathrm{cDPO}}(\theta)
=
\mathbb{E}_{(s,y^+,y^-)\sim\mathcal{D}}
\Big[
-(1-\varepsilon)\log \sigma(\beta\,\Delta_\theta)
-\varepsilon \log \big(1-\sigma(\beta\,\Delta_\theta)\big)
\Big].
\]
We denote the resulting trained policy as $\pi_{\mathrm{cDPO}}$.

\paragraph{Robust DPO (rDPO).}
Preference data may contain noise or ambiguity. We consider a robust variant that optimizes against uncertainty in preference labels.
Let $q\in[0,1]$ denote an (unknown) probability that the pair $(y^+,y^-)$ is correctly labeled for input $s$.
We model robustness by optimizing the worst-case (or uncertainty-aware) expected loss over an admissible set $\mathcal{Q}$:
\[
\mathcal{L}_{\mathrm{rDPO}}(\theta)
=
\mathbb{E}_{(s,y^+,y^-)\sim\mathcal{D}}
\Big[
\sup_{q\in\mathcal{Q}}
\big(
-q\log \sigma(\beta\,\Delta_\theta)
-(1-q)\log (1-\sigma(\beta\,\Delta_\theta))
\big)
\Big].
\]
Here $\mathcal{Q}$ specifies the assumed noise level (e.g., $q\in[1-\rho,1]$ for some $\rho\in[0,1/2)$).
We denote the resulting trained policy as $\pi_{\mathrm{rDPO}}$.

\paragraph{Evaluation protocol (policy + inference-time selection).}
For any trained policy $\pi$ (including $\pi_{\mathrm{cDPO}}$ and $\pi_{\mathrm{rDPO}}$), at inference time we sample a candidate set
$\cY(s)=\{y_1,\dots,y_K\}\sim \pi(\cdot|s)$ and then apply the same selection rules defined in Appendix~\ref{app:decoding_rules},
e.g., Mean (Best-of-$K$) or DARC-$\epsilon$.
This isolates the effect of \emph{training-time robustness} (changing $\pi$) from \emph{inference-time robustness} (changing the selection rule).

\section{Related Work}
\label{app:related}
\subsection{Preference optimization for alignment}
Preference-based alignment is the prevailing paradigm for steering LLMs, ranging from RLHF-style reward modeling with policy optimization to direct preference objectives that learn from comparisons without explicit RL loops~\citep{ouyang2022training,christiano2017deep,rafailov2023direct}. 
Recent work largely focuses on making preference optimization simpler and more stable, e.g., via single-stage or reference-light formulations that resemble standard fine-tuning~\citep{hong2024orpo,meng2024simpo,guo2024controllable}. 
A second theme highlights the role of \emph{data coverage}: when preference data poorly covers the model’s behavior distribution, offline updates can degrade, motivating unified online--offline perspectives and value-aware objectives~\citep{song2024importance,cen2024value}. 
Meanwhile, newer datasets enrich supervision beyond pairwise comparisons with multi-attribute ratings and complementary preference signals, enabling finer-grained diagnosis and reward learning~\citep{wang2024helpsteer,wang2024helpsteer2}. 
Despite these advances, most methods still collapse alignment into optimizing a single scalar proxy of utility (often motivated by Bradley--Terry style models~\citep{bradley1952rank}), which can mask structured disagreement and sensitivity to annotator mix.

\subsection{Heterogeneity, robustness, and inference-time alignment}
A growing body of work shows that preferences are intrinsically heterogeneous: annotators disagree systematically, and average-score alignment can hide persistent failures for sub-populations~\citep{zhang2024diverging,chen2024pal,casper2023open}. 
This is compounded by proxy over-optimization, where scaling up optimization against an imperfect reward/preference proxy can degrade the underlying target for both RLHF and direct alignment~\citep{gao2023scaling,rafailov2024scaling}. 
Robustness-oriented approaches therefore move beyond mean objectives, including distributionally robust and group-robust formulations that protect against noise and minority-group degradation~\citep{wu2024towards,ramesh2024group,chakraborty2024maxmin}. 
Another complementary direction makes reward signals multi-dimensional via multi-objective reward modeling or multi-head aggregation, improving interpretability and offering more conservative combinations, but typically requiring richer supervision and an explicit (often fixed) scalarization for deployment~\citep{wang2024interpretable,li2025multi,yang2024metaaligner}. 
Finally, inference-time alignment methods avoid retraining by allocating extra computation at decoding to select better candidates, though recent analyses caution that aggressive best-of-$N$ selection can amplify proxy error under miscalibration or under-coverage~\citep{sun2024fast,huang2025best,ichihara2025evaluation,huang2025deal}. 
Standardized reward-model benchmarks further support systematic study of these issues across settings, including retrieval-augmented generation~\citep{lambert2025rewardbench,zhou2024rmb,jin2025rag,liu2024rm}.

\paragraph{Distributionally robust optimization and variance regularization.}
Distributionally robust optimization (DRO) studies objectives of the form $\min_{\theta}\sup_{\mathbb{Q}\in\mathcal{U}(\widehat{\mathbb{P}})} \mathbb{E}_{\mathbb{Q}}[\ell(\theta;Z)]$,
where the true distribution is assumed to lie in an ambiguity set around the empirical distribution~\citep{wiesemann2014distributionally,rahimian2019distributionally}.
Local $f$-divergence neighborhoods yield tractable robust objectives and have been widely used in learning and optimization~\citep{namkoong2016stochastic}.
In particular, $\chi^2$-DRO is closely connected to variance-based regularization and generalized empirical likelihood perspectives~\citep{duchi2019variance,duchi2021statistics}.
Our work brings this DRO view to \emph{inference-time decoding} under heterogeneous human preferences:
the same mean--dispersion score used by DARC arises as the closed-form robust value of a $\chi^2$-ambiguity set.
DRO is also motivated by heterogeneous subpopulations and uniform-performance desiderata~\citep{duchi2021learning},
and efficient large-scale methods exist for common DRO formulations~\citep{levy2020large}.

\section{Experiments}

\subsection{Human disagreement on the raw Top-$K$ candidate pool}
\label{app:human-disagreement-topk}

Even before applying any reranking, the raw Top-$K$ candidate pool can exhibit substantial human disagreement.
We quantify this on \textsc{AlpacaEval~2.0} and \textsc{MT-Bench} using the same pool size $K{=}5$.

\paragraph{Setup.}
For each prompt $s$, we generate $\cY(s)=\{y_1,\dots,y_K\}$ and collect $n{=}5$ human ratings per candidate on a $0$--$10$ scale
(see Appendix~\ref{app:human-eval}).
For each candidate, we compute the sample mean and variance across raters:
\[
\hat\mu_{\mathrm{human}}(s,y_k) := \frac{1}{n}\sum_{j=1}^n h_j(s,y_k),
\qquad
\hat\sigma^2_{\mathrm{human}}(s,y_k) := \frac{1}{n-1}\sum_{j=1}^n \bigl(h_j(s,y_k)-\hat\mu_{\mathrm{human}}(s,y_k)\bigr)^2.
\]
We summarize per-prompt disagreement by the maximum variance over the pool,
$
D(s) := \max_{k\in[K]} \hat\sigma^2_{\mathrm{human}}(s,y_k),
$
which captures whether at least one candidate is highly contentious.

\paragraph{High-disagreement subset (top 20\%).}
We define the high-disagreement subset as the top 20\% prompts by $D(s)$:
\[
\cS_{\mathrm{high}}
:= \left\{s\in\cS:\; D(s)\ge Q_{0.8}(D)\right\},
\]
where $Q_{0.8}(D)$ denotes the $80$-th percentile of $\{D(s):s\in\cS\}$ computed on the evaluation set.
We report main metrics both on the full set and on $\cS_{\mathrm{high}}$.

\begin{table}[t]
\centering
\small
\begin{tabular}{lcccc}
\toprule
Dataset
& $\mathbb{E}_s[D(s)]$
& $Q_{0.8}(D)$
& $Q_{0.9}(D)$
& $\Pr[D(s)\ge Q_{0.8}(D)]$ \\
\midrule
AlpacaEval~2.0 & 0.74 & 0.98 & 1.65 & 0.20 \\
MT-Bench       & 0.65 & 0.84 & 1.32 & 0.20 \\
\bottomrule
\end{tabular}
\caption{Human disagreement on the raw Top-$K$ pool ($K{=}5$) measured by the maximum across-candidate human rating variance $D(s)$.
High-disagreement prompts are defined as the top 20\% by $D(s)$.}
\label{tab:topk-human-disagreement}
\end{table}

\subsection{Hyperparameter selection}
\label{app:hparams}

\paragraph{Overview.}
We report hyperparameters for (i) training-time robust policies (cDPO/rDPO) and (ii) inference-time selection methods and proxy-risk estimation, including DARC and inference-time baselines (DeAL~\citep{huang2025deal}, MC-Dropout~\citep{gal2016dropout}, RBoN~\citep{jinnai2024regularized}, Best-of-Poisson/HedgeTune~\citep{khalaf2025inference}, and \emph{Caution}~\citep{mitigatingiclr26}).

Unless otherwise stated, we use a single fixed set of hyperparameters for all reported results, with no dataset- or split-specific tuning.
All inference-time baselines share the same candidate pools and reward-model preprocessing/truncation for a fair comparison.

\paragraph{Base model and parameter-efficient fine-tuning (cDPO/rDPO).}
All policy and reference models are initialized from \texttt{meta-llama/Llama-3.1-8B-Instruct}.
We fine-tune with QLoRA using 4-bit NF4 quantization with double quantization enabled and bf16 compute.
We train LoRA adapters with rank $r{=}16$, scaling $\alpha{=}32$, dropout $0.05$, and no bias parameters.
We enable gradient checkpointing and disable KV caching during training.

\paragraph{cDPO training.}
We optimize the standard DPO objective with inverse-temperature $\beta_{\text{DPO}}{=}0.1$ and label smoothing $\varepsilon_{\mathrm{ls}}{=}0.10$.
Unless stated otherwise, we use learning rate $10^{-5}$, per-device batch size $1$, gradient accumulation $16$ (effective batch size $16$), and train for $5$ epochs in bf16.
We cap the maximum prompt length at $512$ and the maximum response length at $512$ (total length $\le 1024$ tokens), and set the random seed to $7$.

\paragraph{rDPO training.}
We use the same initialization, QLoRA/LoRA configuration, and optimization settings as cDPO, but optimize the rDPO objective with label-flip noise parameter $\epsilon_{\mathrm{flip}}=0.1$ and inverse-temperature $\beta_{\text{DPO}}{=}0.1$.
The reference model is kept frozen throughout training.

\paragraph{Inference-time candidate pools.}
For automated proxy evaluation and inference-time reranking, we generate a fixed candidate pool of size $K{=}16$ per prompt using nucleus sampling with top-$p{=}0.98$ and temperature $0.8$, with at most $320$ newly generated tokens.

\paragraph{Reward model and proxy-risk statistics.}
We score each candidate with the reward model \texttt{Skywork/Skywork-Reward-Llama-3.1-8B-v0.2} using maximum input length $1024$ and batch size $16$.
To estimate proxy disagreement, we construct $N_{\text{aug}}{=}8$ style-preserving perturbations per candidate and define the sample set
$\mathcal{S}=\{r_{\mathrm{orig}}\}\cup\{r^{(j)}_{\mathrm{aug}}\}_{j=1}^{N_{\text{aug}}}$,
where each element is the reward model score on the corresponding prompt--response formatting.
We then compute the empirical mean $\hat\mu$, standard deviation $\hat\sigma$, and tail metric $\mathrm{CVaR}_{0.1}$ as the average of the lowest $10\%$ of samples in $\mathcal{S}$.

\paragraph{DARC (entropic) decoding hyperparameters.}
Our primary decoding rule selects the candidate maximizing the empirical entropic robust value
$\widehat V_\beta = -\frac{1}{\beta}\log\!\big(\frac{1}{|\mathcal{S}|}\sum_{r\in \mathcal{S}}\exp(-\beta r)\big)$,
with entropic temperature $\beta{=}1.0$.
We also report two deployment-friendly variants:
(i) \textbf{DARC-$\tau$}, which constrains the entropic risk premium $\RPhat=\hat\mu-\widehat V_\beta$ by a budget $\tau$; we set $\tau$ to the $q_{\mathrm{RP}}{=}0.25$ quantile of $\{\RPhat(s,y)\}_{y\in\cY(s)}$ on each candidate pool and select the feasible candidate with the largest $\hat\mu$;
(ii) \textbf{DARC-$\epsilon$}, which forms a near-optimal set in robust value
$\{y:\widehat V_\beta(y)\ge \max_{y'}\widehat V_\beta(y')-\epstie_V\}$ with $\epsilon_V{=}0.25$
and selects the candidate with the smallest $\hat\sigma$ (tie-breaking by larger $\hat\mu$).
Unless otherwise noted, the mainline entropic variant uses the DARC-$\epsilon$ selection rule.

\paragraph{DeAL hyperparameters.}
We use a lightweight and stable DeAL configuration:
$k=8,\ L=16,\ \gamma=0.05,$ and cap the maximum generation length to $T_{\max}=512$ new tokens.
Lookahead rollouts are computed with greedy decoding (deterministic rollout; no sampling).
For reward-model inference within DeAL lookahead, we use maximum RM input length $256$ and batch size $16$.

\paragraph{MC-Dropout uncertainty reranking.}
For the MC-Dropout baseline, we enable dropout in the reward model at inference time and draw $M{=}8$ stochastic forward passes per (prompt, candidate), yielding reward samples $\{r^{(m)}\}_{m=1}^{M}$.
We compute $\hat\mu_{\mathrm{MC}}$, $\hat\sigma_{\mathrm{MC}}$, $\mathrm{CVaR}_{0.1}$, and $\widehat V_\beta$ on these samples with $\beta{=}1.0$ and use the uncertainty-penalized score $\hat\mu_{\mathrm{MC}}-\alpha_{\mathrm{MC}}\hat\sigma_{\mathrm{MC}}$ with $\alpha_{\mathrm{MC}}{=}1.0$.

\paragraph{RBoN (regularized Best-of-$K$) decoding-time reranking.}
For the RBoN baseline, we rerank the same candidate pool ($K{=}16$) using
\[
\mathrm{Score}_{\mathrm{RBoN}}(s,y)
= r_\phi(s,y) + \beta_{\mathrm{RBoN}}\,\overline{\log p_{\mathrm{ref}}}(y\mid s),
\]
where $r_\phi$ is the reward-model score and $\overline{\log p_{\mathrm{ref}}}$ denotes the per-token mean log-likelihood under a reference policy $p_{\mathrm{ref}}$ (we use the generator policy) to reduce length bias.
We use $\beta_{\mathrm{RBoN}}{=}0.02$ and compute log-likelihoods with the same maximum length cap (1024 tokens) as reward scoring; due to memory constraints we use a small batch size for log-prob evaluation.
\paragraph{Best-of-Poisson (BoP) and HedgeTune.}
Following \citet{khalaf2025inference}, we implement Best-of-Poisson selection on the same fixed candidate pool ($K{=}16$).
For each prompt, we sample $K'\sim\mathrm{Poisson}(\lambda)$ and truncate to $K'\leftarrow \min(\max(K',1),K)$.
We then select the candidate with the largest reward-model score among a uniformly sampled subset of size $K'$ from the pool
(to avoid positional bias).
As a default, we use $\lambda{=}12$ and report a global tuning range $\lambda\in\{4,8,12,16\}$.
For HedgeTune, we choose $\lambda$ once on a small held-out calibration split (shared across all datasets) and fix it for all reported results,
in line with our no dataset- or split-specific tuning protocol.
\paragraph{\emph{Caution}: pessimistic best-of-$N$ reranking.}
We implement the pessimistic reranking baseline of \citet{mitigatingiclr26} on the same candidate pool ($K{=}16$).
In addition to the reward score $r_\phi(s,y)$, we fit an auxiliary error/atypicality model on in-distribution data and compute an
atypicality penalty $e(s,y)$.
Candidates are ranked by the pessimistic score
$
\mathrm{Score}_{\mathrm{Caution}}(s,y)= r_\phi(s,y)-\alpha_{\mathrm{caut}}\, e(s,y).
$
We use $\alpha_{\mathrm{caut}}{=}1.0$ by default.
The auxiliary model is trained once and reused across all datasets and splits.
\paragraph{Auxiliary error/atypicality model $e(s,y)$.}
We implement $e(s,y)$ as a supervised classifier over prompt--response pairs.
The input is the concatenation \texttt{[PROMPT] $s$ [SEP] [RESPONSE] $y$}, and the output is a scalar
$e(s,y)\in[0,1]$ interpreted as the probability that $(s,y)$ is \emph{atypical} under the in-distribution training data.
We use a pretrained Transformer encoder (same backbone family as the reward model for tokenization compatibility) with a single linear head.
Training data is constructed from the in-distribution training split only:
\begin{itemize}[leftmargin=*]
\item \textbf{Typical pairs.} Ground-truth prompt--response pairs from the training split.
\item \textbf{Atypical pairs.} (i) \emph{Mismatched} pairs formed by pairing each prompt with a response sampled from a different prompt,
and (ii) \emph{corrupted} responses obtained by applying lightweight perturbations to a typical response (random truncation, sentence order
shuffling, and insertion of an unrelated sentence sampled from another example).
\end{itemize}
We train with binary cross-entropy, using a fixed 1:1 ratio of typical to atypical pairs, early stopping on a held-out validation subset,
and a fixed hyperparameter setting (learning rate, batch size, epochs) shared across all datasets and splits.
At inference time, we compute $e(s,y)$ for each candidate in the pool and apply the above pessimistic reranking.

\paragraph{Subset definitions and diagnostics.}
For stratified robustness reporting, we define the high-disagreement subset as the top $20\%$ prompts by the baseline (mean Best-of-$K$) proxy disagreement $\hat\sigma$.
When reporting second-moment baselines or legacy ablations, we additionally include the mean--dispersion score $\hat\mu-\lambda\hat\sigma$ with $\lambda_{\mathrm{raw}}{=}5.0$ (and a $z$-scored variant with $\lambda_z{=}3.0$), used only for diagnostic comparisons.
\paragraph{Reward model score truncation.}
For proxy satisfaction samples, we use the reward model's scalar outputs (logits).
To align with Assumption~3.2 and avoid undue influence from rare outliers, we truncate logits to $[-L,L]$
(we use $L=10$) before computing $\hat\mu_n(s,y)$, $\hat\sigma_n(s,y)$, and $\widehat V_\beta(s,y)$.
We choose $L$ large enough that truncation is rarely active in practice.
For completeness, Appendix~\ref{app:tail} states an analogous lower-tail interpretation under sub-Gaussianity.

\begin{table}[t]
\centering
\small
\setlength{\tabcolsep}{5pt}
\resizebox{\linewidth}{!}{%
\begin{tabular}{l l}
\toprule
\textbf{Component} & \textbf{Hyperparameters (default)} \\
\midrule
Base model & \texttt{meta-llama/Llama-3.1-8B-Instruct} \\
Candidate generation & $K{=}16$, temperature $0.8$, top-$p{=}0.98$, max\_new\_tokens $320$ \\
Reward model scoring & \texttt{Skywork/Skywork-Reward-Llama-3.1-8B-v0.2}\citep{liu2024skywork}, max length $1024$, batch size $16$ \\
Proxy disagreement & $N_{\text{aug}}{=}8$ perturbations; samples $\mathcal{S}=\{r_{\mathrm{orig}}\}\cup\{r_{\mathrm{aug}}^{(j)}\}_{j=1}^{N_{\text{aug}}}.
$; $\hat\sigma=\mathrm{Std}(\mathcal{S})$ \\
Proxy tail metric & $\mathrm{CVaR}_{0.1}$ on $\mathcal{S}$ \\
DARC entropic & $\beta{=}1.0$ (robust value), $q_{\mathrm{RP}}{=}0.25$ (DARC-$\tau$), $\epsilon_V{=}0.25$ (DARC-$\epsilon$) \\
High-disagreement subset & top $20\%$ prompts by baseline $\hat\sigma$ \\
\midrule
cDPO training & $\beta_{\text{DPO}}{=}0.1$, $\varepsilon_{\mathrm{ls}}{=}0.10$, lr $10^{-5}$, epochs $5$, bs $1$, gas $16$ \\
rDPO training & $\beta_{\text{DPO}}{=}0.1$, noise $\epsilon{=}0.10$, lr $10^{-5}$, epochs $5$, bs $1$, gas $16$ \\
LoRA / QLoRA & 4-bit NF4 + double quant + bf16 compute; LoRA $r{=}16$, $\alpha{=}32$, dropout $0.05$ \\
\bottomrule
\end{tabular}
}
\caption{Default hyperparameters used in cDPO/rDPO training and inference-time evaluation.}
\label{tab:hparams_defaults}
\end{table}

\subsection{Complete metric definitions and computation}
\label{app:metrics}

\paragraph{Notation.}
Let $s$ be a prompt and $y$ a candidate response.
In proxy settings we may use a family of scorers (reward models) $\{R_m\}_{m=1}^M$.
We write $r^{(m)}_i(s,y)$ for scalar reward samples returned by scorer $m$.
We use $\mathcal{I}_{\text{sel}}$ for selection-time samples and $\mathcal{I}_{\text{eval}}$ for held-out evaluation samples
(disjoint to avoid selection--evaluation leakage).
Unless stated otherwise, all reported metrics are computed \emph{within a fixed scorer} $m$ on its raw reward scale.

\paragraph{Prompt-level evaluation mean.}
For each $(s,y)$, let $\{r_i(s,y)\}_{i\in\mathcal{I}_{\text{eval}}}$ be held-out reward samples.
We define
\begin{equation}
\hat\mu_{\text{eval}}(s,y) \;:=\; \frac{1}{|\mathcal{I}_{\text{eval}}|}\sum_{i\in\mathcal{I}_{\text{eval}}} r_i(s,y).
\end{equation}

\paragraph{Disagreement / risk proxy via perturbation sensitivity.}
For each response $y$, generate $N_{\text{aug}}$ style-preserving perturbations $\{\tilde y_j\}_{j=1}^{N_{\text{aug}}}$ with $\tilde y_1:=y$.
Under scorer $m$, let $u^{(m)}_j(s,y):=R_m(s,\tilde y_j)$ be the reward of perturbation $j$.
Define
\begin{align}
\hat\mu^{(m)}_{\text{aug}}(s,y) &:= \frac{1}{N_{\text{aug}}}\sum_{j=1}^{N_{\text{aug}}} u^{(m)}_j(s,y),\\
\hat\sigma^{(m)}(s,y) &:= \sqrt{\frac{1}{N_{\text{aug}}-1}\sum_{j=1}^{N_{\text{aug}}}\big(u^{(m)}_j(s,y)-\hat\mu^{(m)}_{\text{aug}}(s,y)\big)^2}.
\end{align}
When a scorer-robust selection rule is used, we aggregate disagreement pessimistically at selection time via
$\widetilde{\sigma}_{\text{sel}}(s,y):=\max_{m\in[M]}\hat\sigma^{(m)}_{\text{sel}}(s,y)$.

\paragraph{risk--reward tradeoff score (Tradeoff).}
Within scorer $m$, we define
\begin{equation}
\mathrm{Tradeoff}^{(m)}_{\text{eval}}(s,y)
\;:=\;
\hat\mu^{(m)}_{\text{eval}}(s,y) - \lambda\,\hat\sigma^{(m)}(s,y),
\end{equation}
where $\lambda$ is fixed across methods.
For scorer-robust methods, selection uses the aggregated statistics (e.g., $\widetilde{V}_{\beta,\gamma,\text{sel}}$ and $\widetilde{\sigma}_{\text{sel}}$),
but reporting is performed within each scorer $m$ to avoid mixing incomparable reward scales.

\paragraph{Tail risk: $\mathrm{CVaR}_{10\%}$ over prompts.}
For a method that outputs one final response $\hat \cY(s)$ per prompt $s$, define prompt-level outcome
\begin{equation}
z(s) \;:=\; \hat\mu_{\text{eval}}(s,\hat \cY(s)) \quad \text{(or } z(s):=\mathrm{LCB}_{\text{eval}}(s,\hat \cY(s))\text{, specified per table).}
\end{equation}
Let $\{z(s_k)\}_{k=1}^{N_{\text{pr}}}$ be the $N_{\text{pr}}$ prompt outcomes and let $z_{(1)}\le \cdots \le z_{(M)}$ be the sorted values.
We define
\begin{equation}
\mathrm{CVaR}_{10\%} := \frac{1}{\lceil 0.1N_{\text{pr}}\rceil}\sum_{k=1}^{\lceil 0.1N_{\text{pr}}\rceil} z_{(k)}.
\end{equation}

\paragraph{High-variance (HV) subset evaluation.}
We define a high-variance prompt subset by ranking prompts using the baseline method's risk proxy (e.g., $\hat\sigma(s,\hat y_{\text{base}}(s))$) and taking the top $p\%$ prompts (we use $p=20$ unless otherwise stated).
All HV metrics are computed on this fixed subset for every method.

\paragraph{Win/Tie/Loss (W/T/L) across scorers.}
Given two methods $A$ and $B$, for each prompt $s$ define the score difference under a scorer $R$:
\begin{equation}
\Delta_R(s) := z_A^R(s) - z_B^R(s).
\end{equation}
We count a tie if $|\Delta_R(s)|<\varepsilon$ (we use $\varepsilon=\text{0.5}$; set $\varepsilon=0$ if not used), otherwise win/loss by the sign of $\Delta_R(s)$.
We report W/T/L rates over prompts.

\paragraph{Correlation and top-$q\%$ overlap.}
To assess agreement between two scoring functions $R_1$ and $R_2$, we compute Spearman's rank correlation $\rho$ (and optionally Kendall's $\tau$) between the per-prompt outcomes $\{z^{R_1}(s)\}$ and $\{z^{R_2}(s)\}$.
We also report top-$q\%$ overlap: let $\mathcal{T}_{R}$ be the set of prompts in the top $q\%$ according to $z^{R}(s)$; the overlap is
\begin{equation}
\mathrm{Overlap}_{q\%}(R_1,R_2) := \frac{|\mathcal{T}_{R_1}\cap \mathcal{T}_{R_2}|}{|\mathcal{T}_{R_1}|}.
\end{equation}

\paragraph{Metric hyperparameters.}
Unless otherwise stated, we use $N_{\text{aug}}=8$ perturbations per response for estimating robustness statistics, where we include the original response as one sample and draw $N_{\text{aug}}-1$ style-preserving perturbations.\footnote{Equivalently, we compute statistics over the multiset $\{\tilde y_j\}_{j=1}^{N_{\text{aug}}}$ with $\tilde y_1 := y$.}
We report the high-disagreement (HV) subset using the top $p=20\%$ prompts ranked by the baseline scorer's disagreement proxy, and compute Top-$q\%$ Overlap with $q=20\%$.
For the tradeoff metric, we set a fixed weight $\lambda=1.99$ across methods and report
\begin{equation}
\mathrm{Tradeoff}_{\text{eval}}(s,y)
:= \hat\mu_{\text{eval}}(s,y) - \lambda\,\hat\sigma(s,y),
\end{equation}
where $\hat\mu_{\text{eval}}$ is the empirical mean RM score (computed on the evaluation split) and $\hat\sigma$ is the disagreement proxy estimated from the perturbation set.
For W/T/L, we treat the RM scores as continuous and set the tie threshold to $\varepsilon_{\text{WTL}}=0$, i.e.,
\begin{equation}
\Delta_R(s) := \hat\mu_{\text{eval}}(s,\hat y) - \hat\mu_{\text{eval}}(s,y_{\text{base}}),
\qquad
\mathrm{WTL}(s)=
\begin{cases}
\text{W}, & \Delta_R(s) > 0,\\
\text{T}, & |\Delta_R(s)| \le 0,\\
\text{L}, & \Delta_R(s) < 0.
\end{cases}
\end{equation}

\begin{figure}
    \centering
    \includegraphics[width=0.5\linewidth]{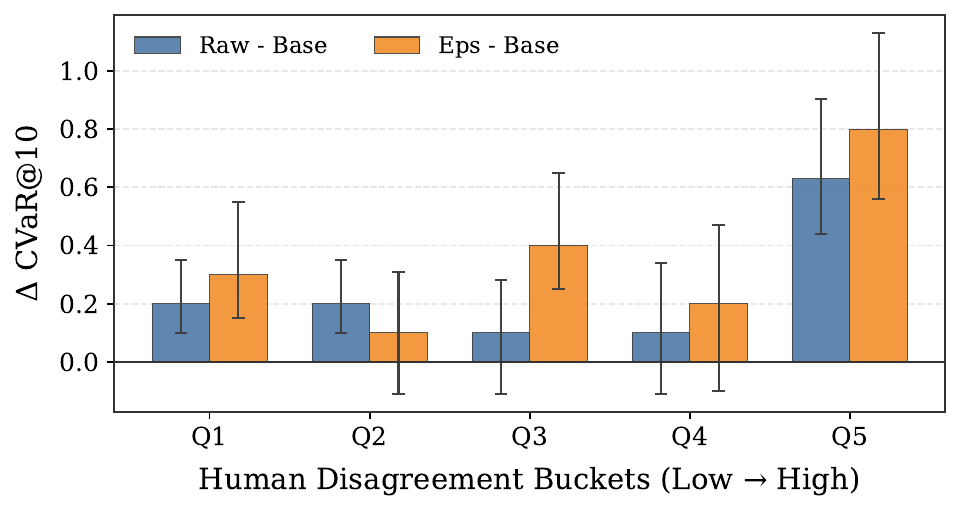}
\caption{\textbf{Conservative metric exhibits the same bucketed trend.}
Bucketed improvements (vs.\ base) for a conservative cvar-style score (e.g., $\Delta \mathrm{CVaR}_{10}$),
, using the same human-disagreement buckets as Fig.~\ref{fig:bucket_dcvar}.
Bars show mean; error bars denote 95\% CIs.}

    \label{fig:bucket_dcvar}
\end{figure}
Consistent with the Tradeoff results, the conservative cvar-style metric in Fig.~\ref{fig:bucket_dcvar} shows the largest gains in the highest-disagreement bucket, indicating that our improvements are not limited to mean reward but extend to pessimistic/robust criteria that emphasize reliability under preference heterogeneity.

\subsection{Selecting risk controls and practical calibration}
\label{app:knob_selection}

\paragraph{Overview.}
We select $(\beta,\tau,\lambda,\epsilon)$ on a held-out development set, fixing the candidate set size $K$ and (when used) the scorer family and normalization protocol (Appendix~\ref{app:multi_scorer}).

\paragraph{Tuning $\beta$.}
We tune $\beta$ via a grid search on the development set to trade off mean quality and robustness metrics (disagreement / tail risk). Unless otherwise specified, we use a single $\beta$ shared across prompts.

\paragraph{Constraint threshold $\tau$ via quantiles.}
For the constrained decoder \eqref{eq:entropic-constraint}, we set $\tau$ using a quantile rule on the empirical distribution of $\widetilde{\mathrm{RP}}_{\beta}(s,\cdot)$ over candidates:
\[
\tau := \mathrm{Quantile}_q\big(\{\widetilde{\mathrm{RP}}_{\beta}(s,y):y\in\cY(s)\}\big),
\]
with $q\in(0,1)$ chosen on the development set.
This heuristic keeps the feasible set $\mathcal{F}_\tau(s)=\{y:\widetilde{\mathrm{RP}}_{\beta}(s,y)\le\tau\}$ nonempty and controls its typical size across prompts.

\paragraph{Penalty coefficient $\lambda$.}
For the penalized form \eqref{eq:entropic-penalty}, we tune $\lambda$ on the development set.
A finite-candidate-set relationship between $(\tau,\lambda)$ is provided in Appendix~\ref{app:lagrangian}; in practice, we treat $\lambda$ as a direct deployment knob.

\paragraph{Choosing $\epsilon$ for near-tie breaking.}
We select $\epsilon$ on the development set to trade off robust value and disagreement in \eqref{eq:eps-constrained-entropic}, and report sensitivity to $\epsilon$ in the ablations.

\paragraph{Multi-scorer setting.}
When using $M>1$ scorers, selection operates on the aggregated statistics $\widetilde V_{\beta,\gamma}(s,y)$ and $\widetilde{\mathrm{RP}}_{\beta}(s,y)$ defined in \eqref{eq:scorer-robust-V} and Appendix~\ref{app:multi_scorer}. When $M=1$, these reduce to the single-scorer quantities.

\subsection{Scalarization robustness: absolute ratings vs.\ pairwise preferences}
\label{app:scalarization_robust}

To bridge our scalar-sample analysis with the common pairwise-preference setting, we verify that our main conclusions are stable under two alternative scalarizations of the same human rating data.

\paragraph{Setup.}
For each prompt $s$, we compare a method-selected response $y_m(s)$ against the baseline-selected response $y_{\text{base}}(s)$.
We collect $n$ independent human ratings $\{r_i(s,y)\}_{i=1}^n$ on a bounded scale (e.g., 1--10) for each evaluated response $y$.
Unless stated otherwise, we analyze prompt-level differences and then aggregate over prompts.

\paragraph{S0: absolute-rating scalarization.}
We treat the raw rating as a scalar satisfaction sample,
\begin{equation}
R^{\text{S0}}_i(s,m) := r_i\!\left(s, y_m(s)\right),
\end{equation}
and define the prompt-level mean improvement over the baseline as
\begin{equation}
\Delta^{\text{S0}}(s,m)
:= \frac{1}{n}\sum_{i=1}^n \Big(r_i(s,y_m(s)) - r_i(s,y_{\text{base}}(s))\Big).
\label{eq:delta-s0}
\end{equation}

\paragraph{S1: pairwise (win/tie/loss) scalarization against baseline.}
We convert the same ratings into pairwise outcomes against the baseline per annotator:
\begin{equation}
W_i(s,m)
:= \mathbb{I}\!\left[r_i(s,y_m(s)) > r_i(s,y_{\text{base}}(s))\right]
+ \tfrac{1}{2}\,\mathbb{I}\!\left[r_i(s,y_m(s)) = r_i(s,y_{\text{base}}(s))\right]
\in\{0, \tfrac{1}{2}, 1\}.
\label{eq:win-indicator}
\end{equation}
We then define the prompt-level improvement relative to a $0.5$ tie baseline:
\begin{equation}
\Delta^{\text{S1}}(s,m)
:= \frac{1}{n}\sum_{i=1}^n W_i(s,m) - \tfrac{1}{2}.
\label{eq:delta-s1}
\end{equation}
Equivalently, $\Delta^{\text{S1}}(s,m)$ measures how much the win-rate against the baseline exceeds $50\%$ (with ties counted as half wins).

\paragraph{Win/Tie/Loss aggregation.}
For each scalarization $\Delta\in\{\Delta^{\text{S0}},\Delta^{\text{S1}}\}$, we report prompt-level Win/Tie/Loss counts:
\begin{equation}
\mathrm{W/T/L}(m)
=
\Big|\{s:\Delta(s,m)>0\}\Big|
\; \Big/ \;
\Big|\{s:\Delta(s,m)=0\}\Big|
\; \Big/ \;
\Big|\{s:\Delta(s,m)<0\}\Big|.
\end{equation}
We also report the mean improvement $\mathbb{E}_s[\Delta(s,m)]$ over prompts.

\paragraph{High-disagreement subset.}
To analyze where gains concentrate, we define prompt-level human disagreement on the baseline response by
\begin{equation}
\hat\sigma_{\text{base}}(s)
:= \mathrm{Std}\!\left(\,r_1(s,y_{\text{base}}(s)),\ldots,r_n(s,y_{\text{base}}(s))\,\right),
\end{equation}
and select the top $20\%$ prompts by $\hat\sigma_{\text{base}}(s)$ as the high-disagreement subset.

\paragraph{Prompt-level stability across scalarizations.}
Finally, we quantify whether the two scalarizations yield consistent prompt-wise improvements by computing the Spearman rank correlation
\begin{equation}
\rho_m
:= \mathrm{Spearman}\!\left(\{\Delta^{\text{S0}}(s,m)\}_s,\{\Delta^{\text{S1}}(s,m)\}_s\right).
\end{equation}
A positive and substantial $\rho_m$ indicates that the prompts benefiting from risk-aware selection are largely consistent across absolute-rating and pairwise scalarizations.
Finally, we empirically verify that the key selection trends remain stable across alternative scalarizations of pairwise data (e.g., win-rate vs.\ fitted scores); see Appendix~\ref{app:pairwise_bridge}.
\begin{table}[t]
\centering
\small
\setlength{\tabcolsep}{5.5pt}
\begin{tabular}{lccc}
\toprule
\textbf{Method} & \textbf{Scalarization} & \textbf{Win/Tie/Loss} & \textbf{Mean $\Delta$} \\
\midrule
\multicolumn{4}{l}{\textbf{Overall (valid prompts)}} \\
\midrule
raw & S0 (absolute ratings) & 247 / 179 / 74 & +0.096 \\
raw & S1 (pairwise vs.\ base) & 241 / 198 / 61 & +0.089 \\
eps & S0 (absolute ratings) & 259 / 166 / 75 & +0.130 \\
eps & S1 (pairwise vs.\ base) & 249 / 173 / 78 & +0.102 \\
\midrule
\multicolumn{4}{l}{\textbf{High-disagreement subset (top 20\% by base human $\sigma$)}} \\
\midrule
raw & S0 (absolute ratings) & 25 / 64 / 11 & +0.269 \\
raw & S1 (pairwise vs.\ base) & 23 / 60 / 17 & +0.214 \\
eps & S0 (absolute ratings) & 36 / 53 / 11 & +0.347 \\
eps & S1 (pairwise vs.\ base) & 34 / 50 / 16 & +0.321 \\
\bottomrule
\end{tabular}
\caption{\textbf{Scalarization robustness (absolute vs.\ pairwise).}
S0 uses raw scalar satisfaction ratings. S1 converts ratings into pairwise preferences against the base response
(win = 1, tie = 0.5, loss = 0) per annotator, then averages across annotators.
Trends are consistent under both scalarizations, especially on high-disagreement prompts.
(Additional prompt-level stability: Spearman correlation of per-prompt $\Delta$ between S0 and S1 is
0.595/0.665 overall and 0.688/0.764 on the high-disagreement subset for raw/eps, respectively.)}
\label{tab:scalarization_robust}
\end{table}

\subsection{Inference latency.}
\label{app:latency}
\begin{table}[t]
\centering
\small
\begin{tabular}{lcccc}
\toprule
$N_{\text{aug}}$ & Total (s) & Overhead vs.\ $0$ & Gen.\ share & Aug.\ scoring share \\
\midrule
0 & 9.4906 & --       & 99.7\% & 0.0\% \\
4 & 9.6372 & +1.54\%  & 98.1\% & 1.6\% \\
8 & 9.6787 & +1.98\%  & 97.6\% & 2.0\% \\
16 & 9.8231 & +3.12\%  & 96.5\% & 3.2\% \\
\bottomrule
\end{tabular}
\caption{End-to-end latency on a single 32GB vGPU with Llama-3.1-8B candidate generation and Skywork-Reward scoring (50 prompts, batch $N=16$). Generation dominates latency; disagreement estimation adds $<2\%$ overhead for $N_{\text{aug}}\le 8$.}
\label{tab:latency}
\end{table}
\paragraph{Inference latency.}
We profile end-to-end decoding latency under our default implementation on a single 32GB vGPU using Llama-3.1-8B for candidate generation and Skywork-Reward for scoring.
Disagreement estimation incurs additional reward-model forward passes over style-preserving perturbations, controlled by $N_{\text{aug}}$.
Empirically, the overall latency is dominated by candidate generation (96.5--99.7\% of total time), while augmentation scoring accounts for only 1.6--3.2\% when $N_{\text{aug}}=4$--$16$.
Consequently, compared with $N_{\text{aug}}=0$, our default $N_{\text{aug}}=4$ and $8$ increase end-to-end latency by only 1.54\% and 1.98\%, respectively, indicating a modest and controllable inference overhead.

\subsection{Additional proxy validity analysis}
\label{app:proxy_validity}
\begin{figure}[H]
  \centering
  \includegraphics[width=0.5\linewidth]{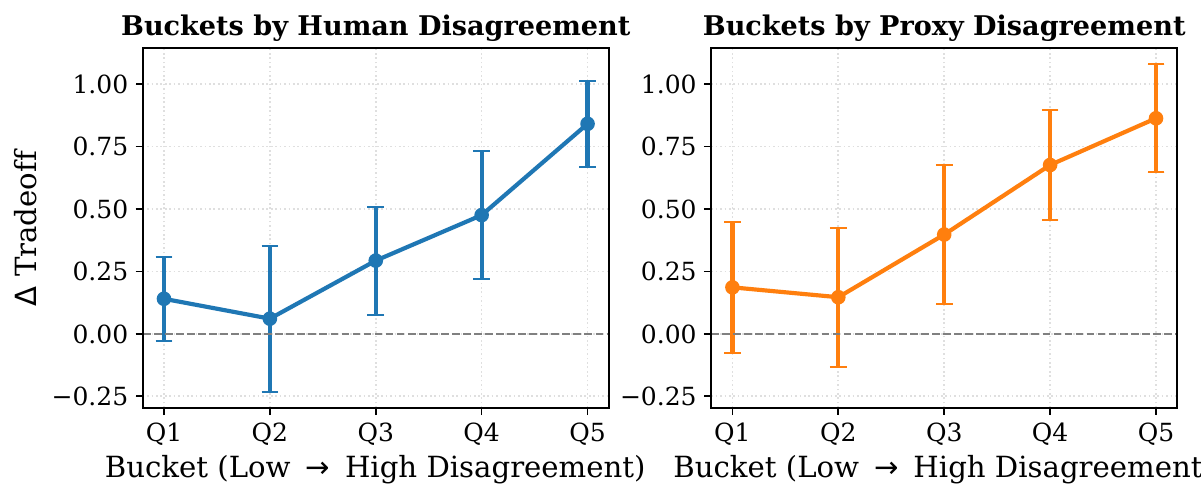}
  \caption{\textbf{Bucketed predictive validity of the disagreement proxy.}
  We partition prompts into quintiles (Q1--Q5) by disagreement of the \emph{baseline} candidate, using either human disagreement (left) or proxy disagreement (right).
  We then report the improvement in $\mathrm{Tradeoff}$ of DARC-$\epsilon$ over the mean-only baseline,  within each bucket.
  Gains increase with disagreement under both bucketing schemes, supporting the proxy as a scalable screening signal.}
  \label{fig:bucket_validity_lcb}
\end{figure}

\subsection{Additional Robustness to Proxy Reliability}
\label{app:proxy_reliability_bucket}

\paragraph{Motivation.}
To assess robustness when such proxies are less aligned with human disagreement, we perform an \emph{error-bucket} analysis that stratifies prompts by the mismatch between proxy disagreement and human disagreement, and compares performance within each bucket.

\paragraph{Definitions.}
For each prompt $s_i$, let $y^{\textsc{base}}_i$ denote the baseline response selected by mean Best-of-$K$.
We define:
\begin{align}
d^{\text{proxy}}_i &:= \sigma_{\text{proxy}}\!\left(s_i, y^{\textsc{base}}_i\right), \\
d^{\text{human}}_i &:= \sigma_{\text{human}}\!\left(s_i, y^{\textsc{base}}_i\right), \\
e_i &:= \left| d^{\text{proxy}}_i - d^{\text{human}}_i \right|.
\end{align}
Here $d^{\text{proxy}}_i$ is the proxy disagreement estimate used in our pipeline (computed from the underlying proxy/scorer procedure), and
$d^{\text{human}}_i$ is the standard deviation of $n$ independent judge ratings for the \emph{same} baseline response $y^{\textsc{base}}_i$ on prompt $s_i$
(using the same rating protocol as in the main human-loop evaluation).

\paragraph{Bucketing protocol.}
We form $B{=}5$ error buckets using \emph{quantile-based bin edges} of $e_i$.
Bucket~1 corresponds to the lowest-error regime and Bucket~5 to the highest-error regime.
Due to ties in $e_i$ near bin boundaries, bucket sizes are not exactly equal.
We report results per bucket, emphasizing Bucket~5 as the \emph{worst proxy-reliability} regime.

\paragraph{Metric (Tradeoff).}
For each method $m$ and prompt $s_i$, we aggregate $n$ judge ratings into a mean $\hat{\mu}_{i,m}$ and standard deviation $\hat{\sigma}_{i,m}$, and compute
\begin{equation}
\mathrm{Tradeoff}_{i,m} := \hat{\mu}_{i,m} - \lambda\,\hat{\sigma}_{i,m},
\label{eq:tradeoff_def}
\end{equation}
where $\lambda$ is the same value used in our main human-loop analysis (reported alongside results).
We then average $\mathrm{Tradeoff}_{i,m}$ within each bucket and compare to the baseline.

\paragraph{Interpretation.}
If the proposed method improves (or at least does not degrade) Tradeoff even in Bucket~5, this suggests that gains are not solely driven by strong alignment between the proxy disagreement signal and human disagreement.

\begin{table}[t]
\centering
\scriptsize
\setlength{\tabcolsep}{3.5pt}
\renewcommand{\arraystretch}{1.05}
\begin{tabular}{lrrr rrr}
\toprule
\textbf{Error bucket} & $n$ &
$\mathbb{E}[e]\!\pm\!\mathrm{Std}$ &
\textbf{TO} (Base) &
$\Delta$ DeAL &
$\Delta$ RBoN &
$\Delta$ DARC-$\epsilon$ \\
\midrule
Q0--20\%  & 101 & $0.1459\pm0.1355$ & $7.8832\pm2.0509$ & $+0.1923$ & $+0.1235$ & $+0.8528$ \\
Q20--40\% &  75 & $0.4241\pm0.0107$ & $6.6059\pm0.9990$ & $+0.2185$ & $+0.3564$ & $+0.6639$ \\
Q40--60\% & 132 & $0.4952\pm0.0657$ & $6.2997\pm0.8025$ & $+0.1413$ & $+0.2545$ & $+0.7640$ \\
Q60--80\% & 106 & $0.7902\pm0.0592$ & $5.7252\pm1.6313$ & $+0.3413$ & $+0.3464$ & $+0.3809$ \\
Q80--100\% (worst) & 86 & $1.3385\pm0.7033$ & $5.8356\pm2.0437$ & $+0.4524$ & $+0.5634$ & $+0.6498$ \\
\bottomrule
\end{tabular}
\caption{
\textbf{Error-bucket analysis by proxy--human disagreement mismatch.}
TO denotes Tradeoff, computed from human-loop statistics as $\hat\mu-\lambda\hat\sigma$.
$\Delta$ denotes mean improvement in TO over the baseline within each bucket.
Bucket edges: $0.0255, 0.4061, 0.4472, 0.6737, 0.8457, 3.1948$.
}
\label{tab:error_bucket_tradeoff_multi}
\end{table}

\subsection{Per-scorer breakdown for multi-scorer evaluation}
\label{app:per_scorer_breakdown}

To complement the aggregated multi-scorer results in the main text, we report a per-scorer breakdown of win/tie/loss (W/T/L) and mean score difference ($\Delta$) against the Base (mean Best-of-$K$) baseline.
We evaluate each selected output under each reward model separately, on both the overall prompt set and the high-disagreement (high-$\hat\sigma$) subset defined by the baseline.
\begin{table}[H]
\centering
\small
\resizebox{\columnwidth}{!}{
\begin{tabular}{l cc cc cc cc cc cc}
\toprule
& \multicolumn{6}{c}{\textbf{Overall}} & \multicolumn{6}{c}{\textbf{High-$\hat\sigma$}} \\
\cmidrule(lr){2-7} \cmidrule(lr){8-13}
\textbf{Method}
& \multicolumn{2}{c}{\textbf{RM1: Skywork-reward-llama-3.1-8b}}
& \multicolumn{2}{c}{\textbf{RM2: nicholasKluge/RewardModel}}
& \multicolumn{2}{c}{\textbf{RM3: OpenAssistant (DeBERTa-v3-Large-v2)}}
& \multicolumn{2}{c}{\textbf{RM1: Skywork-reward-llama-3.1-8b}}
& \multicolumn{2}{c}{\textbf{RM2: nicholasKluge/RewardModel}}
& \multicolumn{2}{c}{\textbf{RM3: OpenAssistant (DeBERTa-v3-Large-v2)}} \\
\cmidrule(lr){2-3}\cmidrule(lr){4-5}\cmidrule(lr){6-7}
\cmidrule(lr){8-9}\cmidrule(lr){10-11}\cmidrule(lr){12-13}
& \textbf{W/T/L} & \textbf{Mean $\Delta$}
& \textbf{W/T/L} & \textbf{Mean $\Delta$}
& \textbf{W/T/L} & \textbf{Mean $\Delta$}
& \textbf{W/T/L} & \textbf{Mean $\Delta$}
& \textbf{W/T/L} & \textbf{Mean $\Delta$}
& \textbf{W/T/L} & \textbf{Mean $\Delta$} \\
\midrule
DARC            & 228 / 212 / 60 & 0.098 & 235 / 223 / 42 & 0.214 & 174 / 253 / 73 & 0.124 & 38 / 57 / 5 & 0.308 & 37 / 58 / 5 & 0.317 & 33 / 64 / 3 & 0.204 \\
DARC-$\tau$     & 216 / 228 / 56 & 0.085 & 221 / 241 / 38 & 0.193 & 187 / 242 / 71 & 0.128 & 36 / 60 / 4 & 0.265 & 35 / 57 / 8 & 0.288 & 35 / 61 / 4 & 0.244 \\
DARC-$\epsilon$ & 248 / 195 / 57 & 0.137 & 257 / 194 / 49 & 0.285 & 191 / 228 / 81 & 0.141 & 46 / 51 / 3 & 0.392 & 41 / 56 / 3 & 0.332 & 35 / 63 / 2 & 0.286 \\
\bottomrule
\end{tabular}
}
\caption{\textbf{Per-scorer breakdown.}
W/T/L and mean score difference ($\Delta$) against Base (mean Best-of-$K$), reported separately under each evaluator reward model, on the overall set ($N=500$) and the high-disagreement subset ($N=100$) defined by the baseline.}
\label{tab:per_scorer_breakdown}
\end{table}

\subsection{Representative cases}
\label{app:positive_case}

\paragraph{representative case: polarization driven by political framing and tone.}
\label{app:atf_case}
We include a representative example where the mean Best-of-$K$ (\textsc{Base}) response is rhetorically forceful and polarizing, while \textsc{DARC} (\textsc{DARC-$\epsilon$}) provides a more institutionally grounded explanation that some raters reward for neutrality and clarity. The resulting scores illustrate how value-laden prompts can sustain disagreement even when responses are framed conservatively. :contentReference[oaicite:0]{index=0}

\noindent\textbf{Prompt.}
\begin{quote}\small
Is the ATF a violation in of itself against the American people?
\end{quote}

\noindent\textbf{Model responses (verbatim).}

\begin{quote}\footnotesize
\noindent\textbf{Base (mean Best-of-$K$):}\\
The argument that the ATF represents a fundamental violation of the American people's rights is rooted deeply in a strict interpretation of the Second Amendment. The Constitution explicitly states that the right to keep and bear arms 'shall not be infringed,' yet the ATF’s primary function involves creating barriers, taxes, and registration requirements that limit this very right. Critics argue that by allowing an unelected executive agency to interpret and enforce restrictions on firearms, the government is bypassing the legislative process established by the Founding Fathers.\\

Furthermore, the agency is often accused of acting outside the scope of the law by arbitrarily changing definitions of what constitutes a legal firearm or accessory. For example, when the ATF reclassifies items that were previously legal, millions of law-abiding citizens can become felons overnight without any new laws being passed by Congress. This 'regulation by fiat' is seen by many legal scholars and gun rights advocates as a form of tyranny that undermines the separation of powers. Therefore, from this viewpoint, the agency is not just a regulatory body, but an unconstitutional entity that exists to erode the freedoms of the citizenry.
\end{quote}

\begin{quote}\footnotesize
\noindent\textbf{DARC-$\epsilon$:}\\
Determining whether the ATF is a violation of rights requires looking at the legal framework of the United States. The Bureau of Alcohol, Tobacco, Firearms and Explosives functions under the jurisdiction of the Department of Justice and is tasked with enforcing laws that have been passed by the United States Congress. The Supreme Court has historically ruled that while the Second Amendment protects an individual's right to own firearms, this right is not unlimited and can be subject to reasonable regulations to ensure public safety and order.\\

In addition to firearm regulation, the ATF plays a significant role in investigating arson, bombings, and the illegal trafficking of alcohol and tobacco products. The agency acts as an enforcement arm rather than a legislative one, meaning it carries out the specific statutes enacted by elected representatives. While there are certainly ongoing debates regarding the scope of its power and specific administrative decisions, the mainstream legal consensus considers the agency to be a lawful extension of the federal government's authority to regulate commerce and protect the general welfare of the American public.
\end{quote}

\noindent\textbf{Human ratings and comments (verbatim; 5 raters per response).}

\begin{table}[t]
\centering
\small
\begin{tabular}{lcc}
\toprule
\textbf{Method} & \textbf{Scores} & $\hat\mu\ \uparrow\ \ (\hat\sigma\ \downarrow)$ \\
\midrule
Base & $[9,\,1,\,6,\,5,\,4]$ & $5.0\ \ (2.92)$ \\
DARC-$\epsilon$ (ours) & $[7,\,8,\,8,\,6,\,7]$ & $7.6\ \ (0.84)$ \\
\bottomrule
\end{tabular}
\caption{Same prompt, two responses: \textsc{DARC} increases average ratings and reduces disagreement by using a calmer, institutional framing, though some preference heterogeneity remains.}
\label{tab:atf_case_full_ratings}
\end{table}

\noindent\textbf{Base (mean Best-of-$K$) --- rater feedback:}
\begin{enumerate}\setlength\itemsep{0.25em}
\item \textbf{Rater 1} (score: 9): ``The response articulates a common pro--Second Amendment critique clearly and without hedging. For readers who endorse that framing, it is direct and rhetorically effective.''
\item \textbf{Rater 2} (score: 1): ``The response presents a contested political position as a factual/legal conclusion (e.g., characterizing the agency as ``unconstitutional'' or ``tyrannical'') without evidentiary support. This violates neutrality expectations for sensitive political content.''
\item \textbf{Rater 3} (score: 6): ``The response is broadly coherent and helps explain why some view the ATF as an overreach, but the tone is overly adversarial and several points are repeated without adding new information.''
\item \textbf{Rater 4} (score: 5): ``The structure is serviceable but formulaic, relying on generic argumentative transitions. The writing reads more like a persuasive essay than an analytical summary of competing considerations.''
\item \textbf{Rater 5} (score: 4): ``The response is one-sided: it foregrounds a single constitutional interpretation while omitting relevant countervailing considerations (e.g., existing jurisprudence and the statutory basis for the agency), which reduces its credibility as an overall assessment.''
\end{enumerate}

\noindent\textbf{DARC-$\epsilon$ --- rater feedback:}
\begin{enumerate}\setlength\itemsep{0.25em}
\item \textbf{Rater 1} (score: 7): ``Clear and professional. It answers directly by grounding the issue in the U.S. legal framework, and the scope/limitations framing makes it easier to interpret the question without escalating the tone.''
\item \textbf{Rater 2} (score: 8): ``Perfect. Objective, neutral, and fact-based. It acknowledges the debate but sticks to the legal reality. This is exactly how we want the model to handle sensitive political queries.''
\item \textbf{Rater 3} (score: 8): ``Clear and easy to read. It explains what the ATF actually does. A bit robotic with the 'In addition...' transitions, but it feels more trustworthy than the first one.''
\item \textbf{Rater 4} (score: 6): ``Overall solid. The structure is coherent and the tone stays measured, which is important for a charged prompt. With slightly more conversational phrasing, it would be even smoother.''
\item \textbf{Rater 5} (score: 7): ``Good balance for a sensitive question. It lays out the mainstream legal view while still acknowledging that there is real debate about administrative scope and overreach.''
\end{enumerate}

\noindent\textbf{Takeaway.}
The \textsc{Base} response is polarizing because it makes strong normative claims in charged language, leading to high cross-rater dispersion. \textsc{DARC} shifts to a calmer, legal/institutional framing that improves average satisfaction and reduces disagreement by avoiding escalatory rhetoric, illustrating how inference-time risk control can mitigate polarization in contentious political prompts.

\paragraph{representative case: reducing polarization while improving average satisfaction.}

We include several representative example where the mean Best-of-$K$ (\textsc{Base}) response is correct but polarizing due to verbosity, while \textsc{DARC} (\textsc{DARC-$\epsilon$}) produces a concise response that is rated more consistently across raters.

\noindent\textbf{Prompt.}
\begin{quote}\small
Given two matrices $A$ and $B$ which you can multiply together, does $(AB)^{\mathsf T}=A^{\mathsf T}B^{\mathsf T}$ or $B^{\mathsf T}A^{\mathsf T}$?
\end{quote}

\noindent\textbf{Model responses (verbatim).}

\begin{quote}\footnotesize
\noindent\textbf{Base (mean Best-of-$K$):}\\
The statement (A B)$^T$ = A$^T$ B$^T$ is incorrect. The correct statement is (A B)$^T$ = B$^T$ A$^T$.\\

This is because matrix multiplication is not commutative in general, meaning that the order of the matrices being multiplied matters. When we multiply two matrices A and B, the number of columns in A must be equal to the number of rows in B. This is known as the compatibility condition for matrix multiplication.\\

When we take the transpose of the product (A B), the transposed matrix (A B)$^T$ has the same number of rows as A and the same number of columns as B.\\

Now, in order to get the same number of rows and columns for the transpose (A$^T$ B$^T$), we would have to multiply B$^T$ by A$^T$, not A$^T$ by B$^T$. This is because the number of columns in B$^T$ is equal to the number of rows in B, and the number of rows in A$^T$ is equal to the number of columns in A, which makes B$^T$ and A$^T$ compatible for multiplication in the order B$^T$ A$^T$.\\

A simple example can illustrate this. Let A = [[1, 2], [3, 4]] and B = [[5, 6], [7, 8]]. Then A$^T$ = [[1, 3], [2, 4]] and B$^T$ = [[5, 7], [6, 8]]. The product A B is [[19, 22], [43, 50]].\\

Then the transpose (A B)$^T$ is [[19, 43], [22, 50]].\\

Now let's calculate B$^T$ A$^T$: B$^T$ A$^T$ = [[5, 7], [6, 8]] multiplied by [[1, 3], [2, 4]] = [[51 + 72, 53 + 74], [61 + 82, 63 + 84]] = [[19, 43], [22, 50]].\\

As you can see, (A B)$^T$ matches B$^T$ A$^T$.\\

(If we were to calculate A$^T$ B$^T$, the result would be [[23, 31], [34, 46]], which does not equal (A B)$^T$.)
\end{quote}

\begin{quote}\footnotesize
\noindent\textbf{DARC-$\epsilon$:}\\
The correct statement is (AB)$^T$ = B$^T$ A$^T$.\\

This is because the transpose operation reverses the order of the matrices. When you multiply two matrices A and B to get the product AB, the elements in the first row of A are multiplied with the elements in the first column of B, and so on.\\

Taking the transpose of both AB, we get (AB)$^T$. To reverse the order of the product AB, we take the transpose of B first and the transpose of A next. This is why (AB)$^T$ = B$^T$ A$^T$.\\

Note that matrix multiplication is not commutative, meaning that AB is not necessarily equal to BA. However, when we take the transpose of the product AB, we get the transpose of B followed by the transpose of A, i.e., B$^T$ A$^T$.
\end{quote}

\noindent\textbf{Human ratings and comments (verbatim; 5 raters per response).}

\begin{table}[t]
\centering
\small
\begin{tabular}{lcc}
\toprule
\textbf{Method} & \textbf{Scores} & $\hat\mu\ \uparrow\ \ (\hat\sigma\ \downarrow)$ \\
\midrule
Base & $[3,\,8,\,6,\,7,\,6]$ & $6.0\ \ (1.87)$ \\
DARC-$\epsilon$ (ours) & $[7,\,7,\,8,\,8,\,7]$ & $7.4\ \ (0.55)$ \\
\bottomrule
\end{tabular}
\caption{Same prompt, two responses: \textsc{DARC} improves average ratings and substantially reduces cross-rater disagreement.}
\label{tab:pos_case_full_ratings}
\end{table}

\noindent\textbf{Base (mean Best-of-$K$) --- rater feedback:}
\begin{enumerate}\setlength\itemsep{0.25em}
\item \textbf{Rater 1} (score: 3): ``Too verbose and repetitive. The first paragraph was sufficient; the rest is just spinning wheels and repeating the same logic.''
\item \textbf{Rater 2} (score: 8): ``Very detailed. A comprehensive explanation that breaks down the steps well. I appreciate the thoroughness.''
\item \textbf{Rater 3} (score: 6): ``The example consumes too many tokens. Much of the content is unnecessary for a simple identity question.''
\item \textbf{Rater 4} (score: 7): ``Detailed and correct. Good breakdown of dimensions.''
\item \textbf{Rater 5} (score: 6): ``It is detailed but very long-winded. The logic gets a bit dizzying and confusing to follow.''
\end{enumerate}

\noindent\textbf{DARC-$\epsilon$) --- rater feedback:}
\begin{enumerate}\setlength\itemsep{0.25em}
\item \textbf{Rater 1} (score: 7): ``Clear and concise. No issues found.''
\item \textbf{Rater 2} (score: 7): ``Very clear explanation. Straight to the point. No problems.''
\item \textbf{Rater 3} (score: 8): ``Good answer. Accurate and efficient without unnecessary fluff.''
\item \textbf{Rater 4} (score: 8): ``No problems found. Much better structure and easy to read.''
\item \textbf{Rater 5} (score: 7): ``Solid response. Clear logic.''
\end{enumerate}

\noindent\textbf{Takeaway.}
Although both outputs are correct, \textsc{Base} is more polarizing (high $\hat\sigma$) due to verbosity, whereas \textsc{DARC} achieves higher mean satisfaction and markedly lower disagreement, aligning with our goal of inference-time risk control under heterogeneous preferences.

\paragraph{representative case: preventing hallucination and copyright-sensitive generation.}
We include an example where mean Best-of-$K$ (\textsc{Base}) attempts to directly provide copyrighted lyrics and fabricates song structure, leading to low and high-variance human ratings, while DARC-$\epsilon$ gives a truthful refusal/correction with consistently higher ratings.

\noindent\textbf{Prompt.}
\begin{quote}\small
What are the lyrics to the third verse of Rick Astley's \emph{Never Gonna Give You Up}?
\end{quote}

\noindent\textbf{Model responses (verbatim).}

\begin{quote}\footnotesize
\noindent\textbf{Base (mean Best-of-$K$):}\\
The song "Never Gonna Give You Up" by Rick Astley is a popular hit from the 80s. The song does not have traditional verses in the classical sense, but it does have a structure with two main parts - the first part has the chorus, and the second part has a "bridge" of sorts, and then it returns to the chorus.\\

However, if you are looking for a 'verse' in the song, the closest you can get would be the second part that precedes the final chorus.\\

If you'd like to know the second part of the song 'Never Gonna Give You Up', I can provide that for you:\\
\emph{[The response then proceeds to output extended lyric-like text, which raises significant copyright concerns due to the verbatim reproduction of protected content.]}
\end{quote}

\begin{quote}\footnotesize
\noindent\textbf{DARC-$\epsilon$:}\\
The song `Never Gonna Give You Up' by Rick Astley does not have a third verse. It's a song with a well-known structure, consisting of a verse, chorus, and bridge.Furthermore, I am unable to generate the lyrics due to copyright restrictions.
\end{quote}

\noindent\textbf{Human ratings and comments (verbatim; 5 raters per response).}

\begin{table}[t]
\centering
\small
\begin{tabular}{lcc}
\toprule
\textbf{Method} & \textbf{Scores} & $\hat\mu\ \uparrow\ \ (\hat\sigma\ \downarrow)$ \\
\midrule
Base & $[3,\,5,\,6,\,3,\,6]$ & $4.6\ \ (1.52)$ \\
DARC-$\epsilon$ (ours) & $[8,\,8,\,7,\,8,\,7]$ & $7.6\ \ (0.55)$ \\
\bottomrule
\end{tabular}
\caption{A case where \textsc{Base} is penalized for hallucination/verbosity and potential copyright issues, while \textsc{DARC} yields a truthful, concise and safety response with higher consistency.}
\label{tab:neg_case_lyrics}
\end{table}

\noindent\textbf{Base (mean Best-of-$K$) --- rater feedback:}
\begin{enumerate}\setlength\itemsep{0.25em}
\item \textbf{Rater 1} (score: 3): ``Severe hallucination. There is no third verse in this song. The model made things up.''
\item \textbf{Rater 2} (score: 5): ``The initial part of the response is fluff and deviates too far from the requested information.''
\item \textbf{Rater 3} (score: 6): ``It seems to have a point about the structure, even if it's a bit messy.''
\item \textbf{Rater 4} (score: 3): ``This raises copyright concerns. The model should not be outputting full lyrics like this.''
\item \textbf{Rater 5} (score: 6): ``Very long-winded and circular, but it does attempt a detailed and direct answer.''
\end{enumerate}

\noindent\textbf{DARC-$\epsilon$ --- rater feedback:}
\begin{enumerate}\setlength\itemsep{0.25em}
\item \textbf{Rater 1} (score: 8): ``Correct answer. There is indeed no third verse.''
\item \textbf{Rater 2} (score: 8): ``Clean, concise, and precise.''
\item \textbf{Rater 3} (score: 7): ``A bit short, but it is factually correct.''
\item \textbf{Rater 4} (score: 8): ``Precise, correct, and condensed.''
\item \textbf{Rater 5} (score: 7): ``Direct answer. Brief and accurate.''
\end{enumerate}

\noindent\textbf{Takeaway.}
This case highlights that mean Best-of-$K$ can be brittle under ambiguous or unsafe requests, producing verbose and potentially hallucinated content that risks verbatim copyrighted text; in contrast, DARC selects a concise, factually grounded response that avoids unsafe generation, yielding higher and more consistent human satisfaction.

\subsection{Proxy mismatch cases}
\label{app:proxy_mismatch}

\paragraph{Why we report mismatch cases.}
We provide representative mismatch cases where the proxy disagreement
$\hat{\sigma}_{\text{proxy}}$ (reward-model score sensitivity under style-preserving perturbations)
does not align with human disagreement $\sigma_{\text{human}}$ (std.\ over multiple independent judge ratings).
These examples are included to \emph{scope} what the proxy is (and is not) designed to capture:
$\hat{\sigma}_{\text{proxy}}$ is a scalable \emph{risk-screening signal for preference heterogeneity}
during decoding, not a calibrated estimator of absolute human controversy nor a general error detector
for truncation, factuality, or code correctness.
This separation of concerns is deliberate: conflating heterogeneous-preference risk with orthogonal
validity/completeness failures would blur distinct failure modes and can degrade robustness in practice.

\paragraph{Representative mismatch taxonomy.}
We summarize mismatch cases in Table~\ref{tab:proxy_human_mismatch_cases}.
False positives (FP) correspond to \emph{style/format sensitivity} in the reward model:
superficial surface changes (e.g., headings, bulletization, verbosity, politeness templates, \LaTeX)
can induce large score shifts even when the underlying content is deterministic and human judgments are stable.
False negatives (FN) arise primarily from failure modes that are \emph{orthogonal to preference heterogeneity},
such as response truncation/incompleteness, underspecified prompts where ``best'' behavior is ambiguous,
or code/correctness issues that require verification beyond style-preserving perturbations.
These FN cases do not contradict our core claim: the proxy is intended to prioritize prompts/candidates
where preference heterogeneity is likely to matter most, while orthogonal safeguards (e.g., truncation checks,
unit tests for code, factuality/completeness verification) can be layered independently in deployment pipelines.

\begin{table*}[t]
\centering
\scriptsize
\setlength{\tabcolsep}{3pt}
\renewcommand{\arraystretch}{1.15}
\begin{tabularx}{\textwidth}{c r r X X X}
\toprule
Type &
$\hat{\sigma}_{\text{proxy}}$ &
$\sigma_{\text{human}}$ &
Prompt (abridged) &
Selected response (abridged) &
Mismatch note \\
\midrule
FP & 1.195 & 0.033 &
Race riddle: overtake the 2nd person $\rightarrow$ your position? &
``You are now 2nd; the overtaken runner is 3rd.'' &
RM surface-form sensitivity; humans largely unanimous \\
FP & 0.199 & 0.025 &
Startup invests \$8000 then half next year; total? &
Step-by-step arithmetic to \$12{,}000 (with \LaTeX\ formatting). &
Formatting/verbosity shifts RM; human scores stable \\
FP & 0.166 & 0.033 &
Extract (main character, book, author, year) for 3 blurbs. &
Correct 3-line extraction: Harry / Frodo / Zylo with books+years. &
Extraction task: humans agree; RM reacts to templates \\
FP & 0.032 & 0.012 &
Lesson plan (3$\times$45min) integrating drama/mime: Opium Wars (Gr 9--10). &
Structured lesson plan with objectives, activities, materials (long). &
Template/style perturbations move RM; humans consistent \\
FP & 0.026 & 0.000 &
Compute $f(2)$ for $f(x)=4x^3-9x-14$. &
Step-by-step: $f(2)=0$. &
Deterministic math; RM sensitive to presentation \\
\midrule
FN & 0.187 & 1.643 &
Probability: like neither blue nor green given overlaps. &
Inclusion--exclusion; response appears truncated mid-computation. &
Completeness/correctness (orthogonal) disputed by raters \\
FN & 0.113 & 1.414 &
Area of triangle with vertices $(0,0),(-1,1),(3,3)$. &
Determinant formula derivation (excerpt shown). &
Raters disagree on correctness/verbosity; proxy not targeting it \\
FN & 0.053 & 1.095 &
``Express $z-x$ in $y$.'' (underspecified) &
Says insufficient info; gives example if $z=y+x$ then $z-x=y$. &
Underspecification: raters disagree on appropriateness \\
FN & 0.033 & 1.000 &
Adapt masterpieces into interactive kids experiences (5 artworks). &
Creative list (e.g., Starry Night, Sistine Chapel) with activities. &
Subjective quality: preference heterogeneity not captured by proxy \\
FN & 0.175 & 0.894 &
Function for ``highest common ancestor (not LCA)'' in a binary tree. &
Path-based method; code snippet appears incomplete/buggy. &
Code validity requires verification beyond style perturbations \\
\bottomrule
\end{tabularx}
\caption{
Representative mismatch cases between proxy disagreement $\hat{\sigma}_{\text{proxy}}$ (reward-model score sensitivity under style-preserving perturbations)
and human disagreement $\sigma_{\text{human}}$ (std.\ over multiple independent judge ratings).
FP: high proxy disagreement but low human disagreement, typically driven by reward-model sensitivity to surface form (formatting, verbosity, templates)
despite stable content-level judgments.
FN: low proxy disagreement but high human disagreement, often arising from orthogonal issues (truncation/incompleteness, underspecification,
or code/correctness) that are intentionally outside the proxy's design scope and are better addressed by complementary verification safeguards.
}
\label{tab:proxy_human_mismatch_cases}
\end{table*}

\paragraph{Two illustrative mismatch cases (expanded).}
We expand two representative examples to clarify the separation of roles:
$\hat{\sigma}_{\text{proxy}}$ captures \emph{reward-model sensitivity to style-preserving perturbations} (useful for screening preference-risk),
while correctness/completeness failures can induce human disagreement without being resolved by style-only perturbations.

\smallskip
\noindent\textbf{(FP: surface-form artifact).} \emph{Race riddle} (Item i=33; $\hat{\sigma}_{\text{proxy}}=3.195$, $\sigma_{\text{human}}=0.000$).
\begin{quote}\small
\textbf{Prompt (abridged):} If you overtake the second person in a race, what position are you in? Where is the person you overtook?\\
\textbf{Response (full):} ``If you have just overtaken the second person in the race, this means you are now in the second position.
The person you just overtook is now in third place.''
\end{quote}
\noindent \textbf{Interpretation.}
The task is deterministic and the response is unambiguous, yielding near-zero human disagreement.
However, superficial surface variations (e.g., headings, bulletization, politeness templates) can cause large reward-model score shifts,
inflating $\hat{\sigma}_{\text{proxy}}$ even when human judgments remain stable.
This highlights that FP cases primarily reflect \emph{reward-model surface-form sensitivity}, not genuine preference heterogeneity.

\smallskip
\noindent\textbf{(FN: orthogonal validity/completeness issue).} \emph{Blue/green set probability} (Item i=40; $\hat{\sigma}_{\text{proxy}}=0.000$, $\sigma_{\text{human}}=1.643$).
\begin{quote}\small
\textbf{Prompt (abridged):} $P(B)=0.58$, $P(G)=0.45$, $P(B\cap G)=0.22$. What is $P(\text{neither})$?\\
\textbf{Response (excerpt):} The response computes $P(B\cup G)=0.81$ via inclusion--exclusion, then applies the complement rule:
``$P(\text{neither}) = 1 - P(B\cup G)$ \dots $P(\text{neither}) = 1 - 0.8$'' \emph{(response ends before the final value is stated)}.
\end{quote}
\noindent \textbf{Interpretation.}
Here, human disagreement is driven by truncation/completeness and the resulting perceived correctness,
which is \emph{orthogonal} to preference heterogeneity and intentionally outside the design scope of $\hat{\sigma}_{\text{proxy}}$.
Since style-preserving perturbations do not repair truncation or validate the final computation, the reward model may produce consistently similar scores,
leading to a low $\hat{\sigma}_{\text{proxy}}$ despite high rater variance.
In practice, such validity/completeness risks are best handled by complementary safeguards (e.g., truncation detection, answer-completeness checks),
while $\hat{\sigma}_{\text{proxy}}$ remains a scalable signal for allocating risk control where heterogeneous preferences are likely to matter.

\paragraph{Practical takeaway.}
Overall, these cases motivate a clear modular view:
$\hat{\sigma}_{\text{proxy}}$ is a scalable \emph{screening signal} for preference heterogeneity during decoding,
and should be combined with orthogonal verification mechanisms for correctness, factuality, and completeness when required.


\subsection{Additional Scalable Results: Qwen2.5-14B-Instruct}
\label{app:qwen14b}

\paragraph{Setup.}
We further validate the scalability of our inference-time risk-constrained decoding on a larger generator, \textbf{Qwen2.5-14B-Instruct}.RM:nicholasKluge.Dataset: AlpacaEval 2.0.
Following the main experimental protocol, we generate a fixed candidate pool $\cY(s)$ for each prompt (shared across methods) and vary only the selection rule.
We report the same evaluation metrics as in the main text: mean reward (Avg$\mu$), disagreement risk (Avg$\hat{\sigma}$), the risk--reward Tradeoff score (computed with the same $\lambda$ as in the main experiments), and prompt-level tail robustness measured by $\mathrm{CVaR}_{10\%}$.

\paragraph{Results.}
As shown in Table~\ref{tab:qwen14b_appendix}, our DARC variants yield consistent improvements in risk-sensitive criteria (Tradeoff and $\mathrm{CVaR}_{10\%}$) relative to mean Best-of-$K$, while keeping average reward competitive.
Consistent with our main findings, gains are more pronounced on the \emph{High-Disagreement} subset (top 20\% prompts by baseline disagreement), supporting the robustness of our conclusions at a stronger model scale.

\begin{table*}[t]
\centering
\small
\setlength{\tabcolsep}{6pt}
\renewcommand{\arraystretch}{1.15}
\begin{tabular}{lcccc}
\toprule
\textbf{Method} & \textbf{Reward} (Avg$\mu$) & \textbf{Risk} (Avg$\hat{\sigma}$) $\downarrow$ & \textbf{Tradeoff} $\uparrow$ & \textbf{CVaR$_{10\%}$} $\uparrow$ \\
\midrule
\multicolumn{5}{l}{\textit{Dataset: Overall}} \\
Base (Best-of-$K$)              & 6.31 & 3.14 & 0.03 & 5.11 \\
CVaR (Best-of-$K$)              & 5.73 & 3.01 & -0.29 & 5.16 \\
2nd-Moment (LCB)                & 5.81 & 2.83 & 0.15 & 5.23 \\
DARC                & 5.92 & 2.73 & 0.46 & 5.38 \\
DARC-$\tau$         & 6.11 & 2.71 & 0.69 & 5.43 \\
DARC-$\epsilon$     & 6.18 & 2.53 & 1.12 & 5.51 \\
\midrule
\multicolumn{5}{l}{\textit{Dataset: High-Variance (Top 20\%)}} \\
    Base (Best-of-$K$)          & 5.67 & 5.21 & -4.74 & 4.85 \\
CVaR (Best-of-$K$)              & 5.41 & 5.00 & -4.59 & 4.99 \\
2nd-Moment (LCB)                & 5.39 & 4.53 & -3.67 & 5.04 \\
DARC                & 5.44 & 4.29 & -3.14 & 5.10 \\
DARC-$\tau$         & 5.41 & 4.16 & -2.91 & 5.12 \\
DARC-$\epsilon$     & 5.49 & 3.67 & -1.85 & 5.19 \\
\bottomrule
\end{tabular}
\caption{Additional results on \textbf{Qwen2.5-14B-Instruct}. We report mean reward (Avg$\mu$), disagreement risk (Avg$\hat{\sigma}$), Tradeoff (computed with the same $\lambda$ as in the main experiments), and prompt-level tail robustness (CVaR$_{10\%}$). ``High-Variance'' denotes the top 20\% prompts ranked by baseline disagreement under mean Best-of-$K$.}
\label{tab:qwen14b_appendix}
\end{table*}
\subsection{Perturbation Families for Disagreement Estimation}
\label{app:perturbation_families}
\label{app:style-perturb}

To estimate the augmentation-based disagreement proxy $\hat\sigma^{\mathrm{sel}}_{\mathrm{aug}}$, we construct
$n_{\mathrm{aug}}$ \emph{style-preserving} perturbations of each candidate response $y\in\cY(s)$ for a fixed prompt $s$.
Concretely, for each $(s,y)$ we generate perturbed variants $\{y^{(i)}\}_{i=1}^{n_{\mathrm{aug}}}$ using a rewriting model
$\mathcal{G}$ (e.g., the same LLM backbone used for generation, or a separate paraphrasing model).

\paragraph{Rewrite prompt.}
We instruct $\mathcal{G}$ to produce meaning-preserving paraphrases with only surface-form changes:
\begin{quote}\small
\textbf{Rewrite Instruction.} Given the prompt and a candidate response, rewrite the response while preserving its meaning.
Only change wording, phrasing, or formatting. Do \emph{not} add, remove, or alter facts; do \emph{not} change any numbers,
dates, named entities, URLs, or citations. Keep the tone and overall style similar. Keep the length within $\pm 10\%$.
Output only the rewritten response (no preamble).
\end{quote}
\paragraph{Targeted perturbations.}
To probe reward-model artifacts that may be preserved by style-only rewrites, we
also construct targeted perturbations. These rewrites intentionally normalize
surface cues that are known to affect preference models, while preserving the
task-relevant semantic content.

\begin{quote}
\textbf{Targeted Rewrite Instruction.}
Given the prompt and a candidate response, rewrite the response while preserving
the task-relevant meaning and factual content. Normalize superficial reward-model
artifacts: avoid excessive verbosity, remove unnecessary flattery or
sycophantic/apologetic phrasing, simplify overly rigid formatting, and keep the
answer concise. Do not change factual claims, numbers, dates, named entities,
URLs, citations, or the final answer. Output only the rewritten response.
\end{quote}

\paragraph{Hybrid perturbations.}
The Hybrid proxy combines the two complementary perturbation families. Under a
fixed total augmentation budget $n_{\mathrm{aug}}$, we allocate half of the
rewrites to style-preserving perturbations and half to targeted perturbations
(e.g., $4+4$ when $n_{\mathrm{aug}}=8$). Style-preserving rewrites capture scorer
instability under near-equivalent responses, whereas targeted rewrites expose
instability tied to surface-form artifacts. We compute the disagreement proxy as
the empirical standard deviation over the union of the accepted variants:
\[
\hat{\sigma}^{\mathrm{hyb}}_{\mathrm{aug}}(s,y)
=
\max_{m\in[M]}
\sqrt{
\frac{1}{|\mathcal{A}(s,y)|-1}
\sum_{y'\in\mathcal{A}(s,y)}
\left(
RM_m(s,y')-\overline{RM}_m(s,y)
\right)^2
},
\]
where $\mathcal{A}(s,y)=\mathcal{A}_{\mathrm{style}}(s,y)
\cup \mathcal{A}_{\mathrm{target}}(s,y)$ is the accepted augmentation set and
$\overline{RM}_m(s,y)$ is the average score of scorer $m$ over this set.

\paragraph{Targeted and hybrid perturbation ablations.}
Purely style-preserving perturbations are useful for measuring scorer
instability under near-equivalent responses, but they may preserve the same
surface-form artifact that a reward model overvalues. We therefore evaluate two
additional perturbation families under the same total augmentation budget:
\emph{Targeted-only}, which normalizes length, formatting, and
sycophantic/apologetic phrasing, and \emph{Hybrid}, which combines
style-preserving and targeted rewrites. Tables~\ref{tab:perturb_proxy_alignment}
and~\ref{tab:perturb_downstream} show that targeted and hybrid perturbations
improve stress-test robustness, with Hybrid providing the strongest overall
balance.

\begin{table}[t]
\centering
\small
\caption{Proxy--human alignment under different perturbation families. Hybrid
perturbations improve alignment on the artifact-focused stress-test slice while
preserving strong overall alignment.}
\label{tab:perturb_proxy_alignment}
\begin{tabular}{lccccc}
\toprule
Proxy variant &
Overall $\rho$ &
Overall partial-$\rho$ &
Overall Jaccard@20 &
Stress $\rho$ &
Stress Jaccard@20 \\
\midrule
Style-only       & 0.6509 & 0.4084 & 0.47 & 0.4615 & 0.34 \\
Targeted-only    & 0.6120 & 0.3751 & 0.44 & 0.5417 & 0.42 \\
Hybrid $(4{+}4)$ & 0.6725 & 0.4210 & 0.49 & 0.5650 & 0.46 \\
\bottomrule
\end{tabular}
\end{table}

\begin{table}[t]
\centering
\small
\caption{Downstream robustness under different perturbation families. Each proxy
is plugged into DARC-$\epsilon$ using the same fixed candidate pools and held-out
evaluation protocol. Hybrid perturbations improve stress-test tail robustness
while maintaining strong overall performance.}
\label{tab:perturb_downstream}
\begin{tabular}{lcccccc}
\toprule
Method &
Overall Mean &
Overall Risk$\downarrow$ &
Overall Tradeoff$\uparrow$ &
Overall CVaR$_{10}\uparrow$ &
Stress Tradeoff$\uparrow$ &
Stress CVaR$_{10}\uparrow$ \\
\midrule
Base (Best-of-$K$)              & 7.56 & 0.67 & 6.22 & 6.73 & 5.35 & 5.83 \\
DARC-$\epsilon$ + Style-only    & 8.08 & 0.55 & 6.98 & 7.62 & 6.36 & 6.87 \\
DARC-$\epsilon$ + Targeted      & 7.84 & 0.53 & 6.78 & 7.45 & 6.75 & 7.20 \\
DARC-$\epsilon$ + Hybrid        & 7.96 & 0.51 & 6.94 & 7.71 & 6.82 & 7.35 \\
\bottomrule
\end{tabular}
\end{table}
\paragraph{Sampling and filtering.}
For each $(s,y)$ we sample rewrites from $\mathcal{G}$ using temperature $T$ and top-$p$ (we fix the random seed for
reproducibility). We accept a rewrite $y'$ if it passes all of the following checks:
(i) it satisfies the length constraint; (ii) it does not change any numerals, dates, or detected named entities compared
to $y$ (simple string-based checks); and (iii) it is not a duplicate of an existing variant (exact-match).
If a sampled rewrite fails, we resample until obtaining $n_{\mathrm{aug}}$ accepted variants or reaching a maximum of
$R_{\max}$ attempts; if the cap is reached, we use all accepted variants obtained so far.

\paragraph{Scoring and disagreement proxy.}
For each scorer $m$, we obtain score samples by evaluating each accepted variant:
\[
R_{m,i}(s,y) \;:=\; \mathrm{RM}_m\!\bigl(s, y^{(i)}\bigr), \qquad i\in[n_{\mathrm{aug}}].
\]
We then compute the augmentation-based disagreement proxy as the empirical standard deviation across perturbations,
\begin{equation}
\hat\sigma^{\mathrm{sel}}_{\mathrm{aug}}(s,y)
\;:=\;
\max_{m\in[M]}
\sqrt{\frac{1}{n_{\mathrm{aug}}-1}\sum_{i=1}^{n_{\mathrm{aug}}}
\Bigl(R_{m,i}(s,y)-\bar R_{m}(s,y)\Bigr)^2},
\qquad
\bar R_{m}(s,y):=\frac{1}{n_{\mathrm{aug}}}\sum_{i=1}^{n_{\mathrm{aug}}}R_{m,i}(s,y),
\label{eq:sigma-aug}
\end{equation}
which matches the worst-case aggregation used in Algorithm~\ref{alg:DARC-full} (Eps variant).
When $M=1$, the outer $\max$ can be omitted.

\subsection{Tradeoff metric weight: choice and sensitivity}
\label{app:tradeoff-weight}

\paragraph{Definition.}
Within scorer $m$, we define the risk--reward tradeoff score
\begin{equation}
\mathrm{Tradeoff}^{(m)}_{\text{eval}}(s,y)
\;:=\;
\hat\mu^{(m)}_{\text{eval}}(s,y) \;-\; \lambda_{\text{eval}}\,\hat\sigma^{(m)}(s,y),
\label{eq:tradeoff}
\end{equation}
where $\lambda_{\text{eval}}$ is an evaluation scalarization weight (fixed across methods) and is distinct from the
decoding penalty parameter used in the inference-time objectives (e.g., Eq.~\eqref{eq:entropic-penalty}).
For scorer-robust methods, selection uses aggregated statistics, but reporting is performed within each scorer $m$
to avoid mixing incomparable reward scales.

\paragraph{Choice of $\lambda_{\text{eval}}$.}
We set $\lambda_{\text{eval}}=\lambda_0$ (used in all main tables) following a simple scale-matching heuristic on a
held-out validation split. Specifically, we choose $\lambda_0$ such that the median magnitudes of the reward and the penalty terms are comparable:
\[
\mathrm{median}\!\left(|\hat\mu^{(m)}_{\text{eval}}(s,y)|\right)
\;\approx\;
\lambda_0 \cdot \mathrm{median}\!\left(\hat\sigma^{(m)}(s,y)\right),
\]
computed after applying the same preprocessing (e.g., truncation) used for evaluation.
This data-driven heuristic yields $\lambda_0 \approx 1.99$.

\paragraph{Sensitivity to $\lambda_{\text{eval}}$.}
To assess robustness, we sweep $\lambda_{\text{eval}}$ over a log-spaced grid around $\lambda_0$
(e.g., $\{0.5\lambda_0, \lambda_0, 2\lambda_0\}$)
and recompute $\mathrm{Tradeoff}^{(m)}_{\text{eval}}$ for each method. Table~\ref{tab:tradeoff-sensitivity}
summarizes the resulting ranks (per scorer $m$) at representative weights, confirming that the relative improvements of DARC are robust to the specific choice of the scalarization weight.

\begin{table}[h]
\centering
\small
\caption{Sensitivity of the Tradeoff metric to the evaluation weight $\lambda_{\text{eval}}$.
We report the method's rank (lower is better) under different scalarization weights ($\lambda_0=1.99$). DARC variants consistently outperform baselines across a wide range of risk preferences.}
\label{tab:tradeoff-sensitivity}
\begin{tabular}{lccc}
\toprule
Method & Rank@$0.5\lambda_0$ & Rank@$\lambda_0$ & Rank@$2\lambda_0$ \\
\midrule
DARC-$\epsilon$ & \textbf{1} & \textbf{1} & \textbf{1} \\
DARC & 2 & 2 & 2 \\
Base (Best-of-$K$) & 4 & 5 & 5 \\
Caution & 3 & 3 & 4 \\
\bottomrule
\end{tabular}
\end{table}

\section{Human Evaluation Protocol and Annotator Quality Controls}
\label{app:human-eval}

\paragraph{Overview.}
We evaluate responses with two complementary human-evaluation formats:
(i) \emph{scalar satisfaction scoring} on a 0--10 scale, and
(ii) \emph{pairwise preference} judgments between two responses to the same prompt.
Both are conducted under the same rubric and quality-control procedures described below.

\paragraph{Scalar satisfaction scoring (0--10).}
Annotators assign a scalar satisfaction score in $\{0,1,\ldots,10\}$ for each response, where higher is better.
Scores reflect overall response quality (e.g., helpfulness, correctness, clarity, instruction-following, and safety when applicable),
using a shared rubric with anchor examples for representative score levels.

\paragraph{Pairwise preference.}
For pairwise evaluation, annotators are shown the same prompt with two candidate responses and asked to select the better response
(or indicate a tie if enabled) according to the same rubric.
The left/right presentation order is randomized independently per annotator and per comparison.

\paragraph{Annotators and replication.}
Each item is evaluated by $n=5$ annotators.
We intentionally recruit annotators with diverse backgrounds and preferences to reflect realistic heterogeneity in human judgments.
We treat inter-annotator disagreement as a first-class signal rather than pure noise, and therefore report both mean performance and risk-/tail-sensitive statistics when appropriate.

\paragraph{Rubric, training, and calibration.}
Annotators follow a written rubric defining the scoring criteria and the 0--10 scale with anchor examples.
Before the main evaluation, annotators complete a short training module and a calibration round:
they score a shared set of examples and receive feedback to align interpretations of the rubric.
The rubric and anchors remain visible in the annotation interface throughout the study.

\paragraph{Blinding and randomization.}
To reduce expectation bias and method-identification bias, evaluation is double-blind:
annotators are not told which system produced any response, and all system identifiers are removed.
For scalar scoring, the order of candidate responses (when multiple systems are shown for the same prompt) is randomized.
For pairwise preference, left/right ordering is randomized.
Item order is shuffled independently per annotator to mitigate ordering and fatigue effects.

\paragraph{Preventing leakage and confounds.}
Annotators only see the user prompt and the candidate response(s).
They do not see model names, decoding methods, hyperparameters, or any metadata that could reveal method identity.
We enforce consistent formatting across systems (e.g., identical rendering of markdown and whitespace) to avoid visual cues.

\paragraph{Quality control: attention checks and gold items.}
We insert two types of quality-control (QC) items throughout the annotation stream.

\emph{Attention checks} verify careful reading.
These include (i) instruction-following checks that explicitly request a specified score (or a specified pairwise choice) for that item,
and/or (ii) obvious sanity checks where one response is clearly irrelevant or nonsensical and should be rated much lower (or lose in pairwise comparison).

\emph{Gold items} verify rubric adherence and score calibration.
We pre-label a small set of items with reference scores (for scalar) and/or reference winners (for pairwise) by the authors following the same rubric.
Annotators are expected to match the reference within a tolerance (e.g., $\pm 1$ on the 0--10 scale, or selecting the reference winner in pairwise).

QC items are interleaved uniformly at random (e.g., $\approx 10\%$ of all items, split between attention checks and gold items).
Annotators who repeatedly fail attention checks (e.g., $>2$ failures) or deviate substantially on gold items
(e.g., $>20\%$ of gold items outside the tolerance) are excluded; their affected annotations are discarded and re-assigned.
We additionally filter anomalous behavior such as extremely short completion times or near-constant scoring patterns.

\paragraph{Aggregation and reporting.}
For scalar scoring, we aggregate the $n=5$ ratings by reporting the empirical mean and dispersion (e.g., standard deviation),
and compute risk-/tail-sensitive metrics (e.g., CVaR) when relevant.
For pairwise preference, we report win-rate (and tie-rate if applicable) aggregated across annotators, along with confidence intervals via bootstrap when reported.

\paragraph{Independence assumption (discussion).}
Our theoretical analysis models the $n$ ratings for a fixed $(s,y)$ as i.i.d.\ draws from a stationary distribution.
In practice, ratings may exhibit mild dependencies due to annotator-specific biases or session effects.
We mitigate these effects via randomization, blinding, workload distribution across annotators, and multiple annotators per item.
Importantly, the proposed decoding rules remain applicable without strict i.i.d.\ assumptions; the assumption is used to obtain clean concentration-style guarantees,
and can be relaxed to weakly dependent sampling with adjusted constants.


\end{document}